\DeclareMathAlphabet{\mymathbb}{U}{BOONDOX-ds}{m}{n}
\theoremstyle{plain}
\newtheorem{definition}{Definition}[section]
\newtheorem{theorem}{Theorem}[section]
\newtheorem*{theorem*}{Theorem}
\newtheorem{proposition}{Proposition}[section]
\newtheorem{corollary}{Corollary}[theorem]
\newtheorem{lemma}[theorem]{Lemma}
\theoremstyle{remark}
\newtheorem{remark}{Remark}
\newcommand{\gr}{\nabla}
\newcommand{\E}{\mathbb{E}}
\newcommand{\Prob}{\mathbb{P}}
\newcommand{\ubar}{\overline{u}}
\newcommand{\vbar}{\overline{v}}
\newcommand{\SubE}{\normalfont{\mathsf{SubE}}}
\newcommand{\SNR}{{\mathsf{snr}}}
\newcommand{\SigmaHat}{\widehat{\Sigma}}
\newcommand{\vHat}{\widehat{v}}
\newcommand{\op}{\operatorname{op}}
\def\BState{\State\hskip-\ALG@thistlm}
\def\BState{\State\hskip-\ALG@thistlm}
\newcommand{\algmargin}{\the\ALG@thistlm}
\newif\ifdavid@number
\preto\equation{\david@numberfalse}
\preto\endequation{\ifdavid@number\else\notag\fi}
\patchcmd\label@in@display{\@empty}{\@empty\david@numbertrue}{}{}
\def\Let@{\def\\{\notag\math@cr}}
\definecolor{darkred}{RGB}{150,0,0}
\definecolor{darkgreen}{RGB}{0,150,0}
\definecolor{darkblue}{RGB}{0,0,150}
\title{\textbf{EM for Mixture of Linear Regression with Clustered Data}}
\date{}
\renewcommand*{\@fnsymbol}[1]{\ensuremath{\ifcase#1\or 1 \or 2 \or 3 \or 4 \else\@ctrerr\fi}}
\author{
  Amirhossein Reisizadeh\\
  \texttt{amirr@mit.edu}
  \and 
  Khashayar Gatmiry\\
  \texttt{gatmiry@mit.edu}
  \and Asuman Ozdaglar\\
  \texttt{asu@mit.edu}
}
\date{Massachusetts Institute of Technology}
\begin{document}

\maketitle

\begin{abstract}
    Modern data-driven and distributed learning frameworks deal with diverse massive data generated by clients spread across heterogeneous environments. Indeed, \emph{data heterogeneity} is a major bottleneck in scaling up many distributed learning paradigms. In many settings however, heterogeneous data may be generated in \emph{clusters} with shared structures, as is the case in several applications such as federated learning where a common latent variable governs the distribution of all the samples generated by a client. It is therefore natural to ask how the underlying clustered structures in distributed data can be exploited to improve learning schemes. In this paper, we tackle this question in the special case of estimating  $d$-dimensional parameters of a two-component mixture of linear regressions problem where each of $m$ nodes generates $n$ samples with a \emph{shared} latent variable. We employ the well-known Expectation-Maximization (EM) method to estimate the maximum likelihood parameters from $m$ batches of dependent samples each containing $n$ measurements. Discarding the clustered structure in the mixture model, EM is known to require $\mathcal{O}(\log(mn/d))$ iterations to reach the statistical accuracy of $\mathcal{O}(\sqrt{d/(mn)})$. In contrast, we show that if initialized properly, EM on the structured data requires only $\mathcal{O}(1)$ iterations to reach the same statistical accuracy, as long as $m$ grows up as $e^{o(n)}$. Our analysis establishes and combines novel asymptotic optimization and generalization guarantees for population and empirical EM with dependent samples, which may be of independent interest.

\end{abstract}

\section{Introduction}

With the ever-growing applications of data-intensive and distributed learning paradigms, it becomes more critical to address new challenges associated with such frameworks. For instance, federated learning is a novel distributed learning architecture consisting a central parameter server and a network of clients (or nodes) each equipped with locally generated data. In general, the main premise of such distributed learning methods is to estimate the underlying ground truth model using the collective data samples across the clients. \emph{Data heterogeneity} (or non-i.i.d. data) is among the most significant challenges in scaling up distributed learning methods. Indeed, naive distributed and federated benchmarks such as FedAvg are known to diverge if deployed on highly heterogeneous settings, unless particularly tailored for non-i.i.d. data \citep{karimireddy2020scaffold}.

In this paper, we consider a \emph{structured} or \emph{clustered} data heterogeneity model which roots in an observation specific to modern data-driven distributed and federated learning applications. Under this structured heterogeneity model, an \emph{identical} and unobserved latent  variable governs the distribution of \emph{all} the samples generated at any node \citep{pei2017deepxplore,hendrycks2019benchmarking,robey2020model,diamandis2021wasserstein}. Particularly in this paper, we zoom in on \emph{mixture of linear regression} model which is a classical approach to capture data heterogeneity \citep{jordan1994hierarchical,xu2016global,viele2002modeling}. To be more clear, in our setting each node observes not one but a potentially large number of linear measurements for all of which a common latent variable governs the true parameter. These latent variables are unknown, random, independent and identically distributed across the nodes. Throughout the paper, we refer to this model as \emph{clustered mixture of linear regressions}, or C-MLR in short.

Our goal in this work is to estimate the maximum likelihood parameters of the regression model in the above-described C-MLR heterogeneity model using the collection of \emph{all} the observations across all the devices.  However, maximizing likelihood objectives are notoriously intractable in general, due to non-convexity of the likelihood function \citep{yi2014alternating}. The most popular approach for computationally efficient inference in such models with latent variables is the Expected-Maximization (EM) method \citep{dempster1977maximum,redner1984mixture,wu1983convergence}. We therefore aim to study optimization and generalization characteristics of the EM method in estimating the C-MLR models. 

To this end, we first characterize and analyse the so-called \emph{population EM} variant for which we establish an asymptotic, local and deterministic convergence guarantee. Next, we move to the empirical counterpart with finite number of observations known as the \emph{empirical EM} method and provide probabilistic generalization bounds on its estimation error. Both results are local and asymptotic. That is, our analysis relies on the assumption that the initial iterate of the EM method is suitable (as opposed to random). Moreover, we let the number of nodes and the number of samples per node grow while all the other parameters assumed to be constants.  To be more specific, let us precisely describe the C-MLR model in the following.

\subsection{Clustered MLR model}

As discussed above and motivated by distributed learning applications, we consider a collection of $m$ nodes  where each node $j =1,\cdots,m$ observes $n$ pairs of measurements denoted by $\{(x_i^j, y_i^j) | i=1,\cdots,n\}$. Here, $x_i^j \in \ccalX \subseteq \reals^d$ and $y_i^j \in \ccalY \subseteq \reals$ denote the covariate and response variables, respectively. These observations are linear measurements of a \emph{clustered} mixture of linear regressions (C-MLR) model described below
\begin{flalign} 
    &&\hspace{2cm} y_i^j = \xi^j \langle x_i^j, \theta^* \rangle + \epsilon_i^j,
    \quad
    i=1,\cdots,n, \quad j = 1,\cdots,m.
    &&\text{{\normalfont (C-MLR)}}\quad&
    \label{eq: emp-C-MLR model}
\end{flalign}
In this model, $\xi^j \in \Xi$ denotes the hidden latent variable corresponding to node $j$. In this paper, we focus on a symmetric and two-component mixture of linear regressions with $\Xi = \{-1, +1\}$, where $\xi^j$ takes on values uniformly at random, denoted by $\xi^j \sim \ccalU\{\pm 1\}$. Note that this latent variable is \emph{identical} for \emph{all} the measurements of a given node, however, we assume that they are \emph{independent} across different nodes. Moreover, we let $\theta^* \in \reals^d$ denote the fixed and unknown ground truth regression vector and assume that covariates and noises are independent and Gaussian with $x_i^j \sim \ccalN(0, I_d)$ and $\epsilon_i^j \sim \ccalN(0, \sigma^2)$, respectively. This model clearly implies that the observations of any given node are \emph{not} independent due to the shared latent variable. In the remainder of the paper, we denote the signal-to-noise ratio (SNR) by $\SNR = \| \theta^* \| / \sigma$.

\begin{remark}
C-MLR model in \eqref{eq: emp-C-MLR model} captures the underlying  node-dependent data heterogeneity through the  latent variable $\xi^j$ which is shared and identical for all the $n$ samples measured by node $j$. Therefore, C-MLR is a well-motivated abstract model to encapsulate the structured data heterogeneity observed in modern distributed learning application as discussed before \citep{diamandis2021wasserstein}.
\end{remark}

\begin{remark}
We further clarify that in the C-MLR model described above, the term ``clustered'' referrers to the fact that data samples are available in batches of size $n$ where all the $n$ samples in each batch share the same latent variable $\xi$. Though, it is worth noting that the folklore two-component MLR model with independent latent variables partitions the samples into two clusters, as well. However, we adopt the term ``clustered'' to particularly underscore the batched structure modeled in \eqref{eq: emp-C-MLR model}.
\end{remark}

\begin{remark}
In our asymptotic analysis in this paper, we are interested in the regime that $m$ and $n$ grow while other problem parameters, that are $\| \theta^* \|$, $\sigma$, and $d$ remain constant.
\end{remark}
 \newpage
 
Our main goal is this paper is to answer he following question:
\begin{tcolorbox}
\begin{center}
\textit{What is iteration complexity of the sample-based EM algorithm to estimate the ground truth $\theta^*$ from $m$ batches of samples, each of size $n$ generated by the C-MLR described in \eqref{eq: emp-C-MLR model}?}
\end{center}
\end{tcolorbox}

We answer this question in this paper as follows. We assume that $m$ batches of in total $mn$ samples generated by the C-MLR model in \eqref{eq: emp-C-MLR model} are available where $m$ grows at most up to $e^{o(n)}$. We prove that if initialized within a constant-size neighbourhood of the ground truth $\theta^*$ and after $T=\ccalO(1)$ iterations of the sample-based (or empirical) EM algorithm, either (\emph{i}) there exists an iterate $0 \leq t \leq T$ of the algorithm for which $\| \theta_t - \theta^* \| \leq \ccalO(\sqrt{d/(mn)})$; or (\emph{ii}) the $\| \theta_T - \theta^* \| \leq \ccalO(\sqrt{d/(mn)})$ with high probability. Our result is asymptotic, that is, it holds for sufficiently large $n$. To highlight this result, it is worth noting that the underlying clustered structure in C-MLR is essential for a constant iteration complexity. Indeed, if such structure is discarded, the EM algorithm requires $\ccalO(\log(mn/d))$ iterates to reach the same statistical accuracy.

\textbf{Contribution.} To summarize the above discussion, ee consider a data heterogeneity structure observed in various distributed learning application such as federated learning where a latent variable governs the distribution of all the samples generated on any node. In particular, we zoom in on a \emph{clustered} two-component mixture of  linear regression model described in \eqref{eq: emp-C-MLR model} where all the linear measurements of any node share their binary latent variable. We utilize the EM algorithm to estimate the maximum likelihood regressor and establish asymptotic and local optimization and generalization guarantees for both population and empirical EM updates. Lastly, we employ these two results and asymptotically characterize the iteration complexity of the sample-based EM algorithm to estimate the ground truth parameters of the C-MLR model.

\textbf{Related work.} Studying convergence characteristics of Expectation-Maximization (EM)  dates back to the seminal work of \cite{wu1983convergence} in which asymptotic and local convergence of EM is established for general latent variable models. \cite{balakrishnan2017statistical} provides a general framework to analyze local onvergence of the EM algorithm in several settings such as mixture of linear regressions (MLR) and Gaussian mixture model (GMM). Several follow up works study GMM, MLR and Missing Covariate Regression (MCR) models including \cite{yi2015regularized,daskalakis2017ten,li2018learning,klusowski2019estimating,ghosh2020alternating,yan2017convergence}.

Although it is not the main focus of this paper, global convergence of the EM method (with random initialization) has been extensively studied for Gaussian mixture model \citep{chen2019gradient} and mixture of linear regressions \citep{kwon2019global,wu2019randomly}. Another interesting direction is establishing statistical lower bounds on the accuracy of the EM method for the MLR model \cite{kwon2021minimax}. Going beyond two-component MLR model, \cite{kwon2020converges} proves that well-initialized EM converges to the true regression parameters of $k$-component MLR in certain SNR regimes. In the same setting, \cite{chen2020learning} proposes an algorithm that is sub-exponential in $k$. For noiseless MLR model, \cite{yi2014alternating,yi2016solving} were among the first works to establish convergence guarantees for EM. To tackle the computational complexity of EM in learning MLR models, \cite{li2018learning,zhong2016mixed} propose gradient descent-type methods with nearly optimal sample complexity. From practical point of view, EM has demonstrated empirical success in MLR models \citep{jordan1994hierarchical,de1989mixtures} and its simple implementation has made it a suitable choice in several applications \citep{chen2009hypothesis,li2009non}.

\section{Preliminaries}

In this section, we first review backgrounds on MLE and EM and then characterize the population and empirical EM updates for our C-MLR model followed by an insightful benchmark.

\subsection{Maximum Likelihood Estimator and EM Algorithm}

\textbf{Population EM.} Let us focus on one node observing $n$ samples $\{(x_i,y_i) | i =1,\cdots,n\}$ where we adopt the shorthand notations $x_{[n]} = (x_1, \cdots, x_n)$  and $y_{[n]} = (y_1, \cdots, y_n)$. Furthermore, let $\xi$ denote the latent variables in the C-MLR model described in \eqref{eq: emp-C-MLR model}, respectively. To reiterate the underlying C-MLR model, we have that
\begin{align} \label{eq: pop-C-MLR model}
    y_i = \xi \langle x_i, \theta^* \rangle + \epsilon_i,
    \quad
    i=1,\cdots,n.
\end{align}
As discussed before, in our setting, only the variables $(x_{[n]},y_{[n]})$ are observed and the latent variable $\xi \in \Xi$ remains hidden. Suppose that the tuple $(x_{[n]},y_{[n]},\xi)$ is generated by the joint distribution $f_{\theta^*}$ where $\{f_{\theta} | \theta \in \Omega\}$ and $\Omega$ is a non-empty compact convex set.

As our main goal in this paper, we aim to estimate the ground-truth model $\theta^*$ by maximizing the likelihood function, that is, finding $\hat{\theta} \in \Omega$ that maximizes the following likelihood
\begin{align}
    g_{\theta}(x_{[n]},y_{[n]})
    =
    \int_{\Xi} f_{\theta}(x_{[n]},y_{[n]},\xi) \mathrm{d} \xi.
\end{align}
In many settings, it is computationally expensive to compute the likelihood function $g_{\theta}(x_{[n]},y_{[n]})$, while computing log-likelihood $\log f_{\theta}(x_{[n]},y_{[n]},\xi)$ is relatively easier. The EM method is an iterative algorithm that aims to maximize a lower bound on the log-likelihood $\log g_{\theta}(\cdot,\cdot)$. This lower bound which is known as the $Q$-function can be written as follows
\begin{gather} \label{eq: pop-Q}
    Q(\theta' | \theta)
    =
    \int_{\ccalX^n \times \ccalY^n} \bigg( \int_{\Xi} f_{\theta} (\xi | x_{[n]},y_{[n]}) \log f_{\theta'} (x_{[n]},y_{[n]},\xi) \mathrm{d} \xi \bigg) f_{\theta^*}(x_{[n]},y_{[n]}) \mathrm{d}x_{[n]} \mathrm{d}y_{[n]}. 
\end{gather}
At each iteration of the empirical EM (Algorithm \ref{alg:pop-EM}) and given the current estimate of the true model $\theta$, the next model is obtained by maximizing the above $Q$-function, that is, $\theta \gets M(\theta)$ where
\begin{gather} 
    M(\theta)
    \coloneqq
    \argmax_{\theta' \in \Omega} Q(\theta' | \theta).\label{eq: pop-EM}
\end{gather}
Note that computing $M(\cdot)$ requires having access to the joint distribution $f_{\theta^*}$, or to put it differently, observed data from infinitely many nodes $(m \to \infty)$ is required. We call such variant of the EM algorithm \emph{population EM} and discuss the \emph{empirical} variant with finite clients (finite $m$) in the following section. Next proposition characterizes the $M$-function and the population EM update.

\begin{proposition}[Population EM] \label{prop: population EM}
Consider $n$ linear measurements from the C-MLR model in \eqref{eq: pop-C-MLR model} with Gaussian features $X_i \sim \ccalN(0,I_d)$ and noises $\epsilon_i \sim \ccalN(0,\sigma^2)$ with shared latent variable $\xi \sim \ccalU\{\pm 1\}$. Then, the $M(\cdot)$ function of the population EM defined in \eqref{eq: pop-EM} is as follows
\begin{align} \label{eq: pop M}
    M(\theta)
    &=
    \E \bigg[X_1 Y_1 \tanh\bigg( \frac{1}{\sigma^2} \sum_{i=1}^{n} \langle X_i, \theta\rangle Y_i \bigg) \bigg].
\end{align}
\end{proposition}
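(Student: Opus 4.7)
The plan is a direct computation from the EM definition: compute the posterior of the latent variable, plug into $Q$, take a first-order condition in $\theta'$, and simplify using Gaussianity. Start by writing the complete-data density as a product $f_\theta(x_{[n]},y_{[n]},\xi)=f(x_{[n]})\Pr(\xi)f_\theta(y_{[n]}\mid x_{[n]},\xi)$, where $\Pr(\xi)=1/2$ and $f_\theta(y_{[n]}\mid x_{[n]},\xi)=\prod_{i=1}^n(2\pi\sigma^2)^{-1/2}\exp\!\bigl(-(y_i-\xi\langle x_i,\theta\rangle)^2/(2\sigma^2)\bigr)$. Because $\xi^2=1$, the quadratic terms $\xi^2\langle x_i,\theta\rangle^2$ are independent of $\xi$ and drop out of the posterior. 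Applying Bayes' rule in the two-point set $\Xi=\{\pm 1\}$ gives
\begin{align}
f_\theta(\xi\mid x_{[n]},y_{[n]})=\frac{\exp\!\bigl(\xi\sigma^{-2}\sum_i \langle x_i,\theta\rangle y_i\bigr)}{2\cosh\!\bigl(\sigma^{-2}\sum_i \langle x_i,\theta\rangle y_i\bigr)},
\end{align}
so that the posterior mean of $\xi$ equals $\tanh\!\bigl(\sigma^{-2}\sum_i\langle x_i,\theta\rangle y_i\bigr)$. This is where the $\tanh$ in the claim originates.

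Next, I would plug this posterior into \eqref{eq: pop-Q} and differentiate in $\theta'$. Since $\log f_{\theta'}(x_{[n]},y_{[n]},\xi)$ is a quadratic in $\theta'$ whose $\theta'$-dependent part is $-\tfrac{1}{2\sigma^2}\sum_i(y_i-\xi\langle x_i,\theta'\rangle)^2$, one gets
\begin{align}
\grad_{\theta'}\log f_{\theta'}(x_{[n]},y_{[n]},\xi)=\frac{1}{\sigma^2}\sum_{i=1}^n\bigl(\xi\, x_i y_i-x_i x_i^\top\theta'\bigr).
\end{align}
Integrating the latent variable against the posterior turns $\xi$ into the tanh expression above, and since $Q(\cdot\mid\theta)$ is strictly concave in $\theta'$ (the Hessian is $-\sigma^{-2}\,\E[\sum_i x_ix_i^\top]=-n\sigma^{-2}I_d$), the unique maximizer is obtained by setting the expected gradient to zero.

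Using $\E[x_ix_i^\top]=I_d$ gives $\E[\sum_i x_ix_i^\top]=nI_d$, so the first-order condition reduces to
\begin{align}
n\,\theta'=\sum_{i=1}^n \E\!\left[X_iY_i\tanh\!\left(\frac{1}{\sigma^2}\sum_{k=1}^n\langle X_k,\theta\rangle Y_k\right)\right].
\end{align}
Finally, the argument of $\tanh$ is symmetric in the pairs $(X_k,Y_k)$, and those pairs are exchangeable under $f_{\theta^*}$ (conditionally on $\xi$ they are i.i.d., and $\xi$ is integrated out). Hence each summand on the right equals $\E[X_1Y_1\tanh(\cdot)]$, the $n$'s cancel, and we recover exactly \eqref{eq: pop M}.

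The derivation has no real obstacle; the only place to be careful is the collapse from $\xi$ to $\tanh$, which uses the two specific features of the model (the symmetric two-point prior and the fact that the noise variance does not depend on $\xi$, so that the $\xi^2$ terms cancel). Everything else is standard: a closed-form posterior, a quadratic $\log f_{\theta'}$, and exchangeability of the samples within a single batch to merge the $n$ terms of the first-order condition into a single expectation.
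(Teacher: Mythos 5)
Your proposal is correct and follows essentially the same route as the paper's proof: compute the two-point posterior of $\xi$ (whose mean is the $\tanh$ term, using that the $\xi^2$ quadratic terms cancel), observe that the $\theta'$-dependent part of the complete-data log-likelihood is quadratic, set the expected gradient to zero using $\E[X_iX_i^\top]=I_d$, and collapse the sum to a single term by exchangeability of the pairs $(X_i,Y_i)$ within a batch. The only cosmetic difference is that you differentiate the complete-data log-likelihood before averaging while the paper writes out the $Q$-function explicitly first; the computations are identical.
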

\begin{proof}
We defer the proof to Appendix \ref{proof: prop population EM}.
\end{proof}

Note that equally likely $\xi \in \{\pm 1\}$ makes the distribution of $Y^n$ symmetric given $X^n$. Moreover, $\tanh(\cdot)$ is an odd function and therefore, the expectation in \eqref{eq: pop M} can also be taken with respect to $X_i \sim \ccalN(0,I_d)$ and $Y_i | X_i \sim \ccalN(\langle X_i,\theta^* \rangle, \sigma^2)$, \emph{i.e.} no randomness in the the latent variable $\xi$.

\textbf{Empirical EM.} For a finite number of nodes $m$, the empirical EM algorithm updates the estimate of the true model using the empirical $Q_m$-function defined below
\begin{align} \label{eq: emp Q}
    Q_m(\theta' | \theta)
    =
    \frac{1}{m} \sum_{j=1}^{m}\int_{\Xi} f_{\theta} (\xi | x^j_{[n]},y^j_{[n]}) \log f_{\theta'} (x^j_{[n]},y^j_{[n]},\xi) \mathrm{d} \xi,
\end{align}
where samples are independent across different nodes. Similarly, in each iteration of the empirical EM algorithm (Algorithm \ref{alg:emp-EM}), the current model estimate $\theta$ is updated to $\theta \gets M_m(\theta)$ where 
\begin{align} \label{eq: emp-EM}
    M_m(\theta)
    \coloneqq
    \argmax_{\theta' \in \Omega} Q_m(\theta' | \theta).
\end{align}
Next proposition characterizes the empirical $M_m$-function defined in \eqref{eq: emp-EM}.

\begin{proposition}[Empirical EM] \label{prop: empirical EM}
Consider $m$ nodes each observing $n$ linear measurements generated by the C-MLR model in \eqref{eq: emp-C-MLR model} denoted by $\{(x_i^j, y_i^j) | i =1,\cdots,n, \, j =1,\cdots,m\}$. Then, the $M_m(\cdot)$ function of the empirical EM defined in \eqref{eq: emp-EM} can be computed as follows
\begin{align} \label{eq: emp M}
    M_m(\theta)
    &=
    \SigmaHat^{-1} \frac{1}{mn} \sum_{j=1}^{m} \sum_{i=1}^{n} x_i^j y_i^j \tanh\bigg( \frac{1}{\sigma^2} \sum_{i=1}^{n} \langle x_i^j, \theta\rangle y_i^j \bigg),
    \, \text{ where } \,
    \SigmaHat
    \coloneqq
    \frac{1}{mn} \sum_{j=1}^{m} \sum_{i=1}^{n} x_i^j {x_i^j}^{\top}
\end{align}
denotes the sample covariance matrix of the total $mn$ observations.
\end{proposition}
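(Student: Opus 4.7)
The plan is to mirror the argument sketched for Proposition~\ref{prop: population EM} but with the population expectation replaced by an empirical average over the $m$ nodes, and to show that the maximization step reduces to a linear system whose system matrix is exactly the sample covariance $\SigmaHat$.

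First I would make the $\theta'$-dependence of the log-likelihood explicit. For a single node $j$, under the C-MLR model, $\log f_{\theta'}(x^j_{[n]}, y^j_{[n]}, \xi)$ decomposes as a term in $x^j_{[n]}$ (from the standard Gaussian covariates, independent of $\theta'$), a constant $-\log 2$ (from the uniform prior on $\xi$), and a conditional term
\begin{align}
\log f_{\theta'}(y^j_{[n]} \mid x^j_{[n]}, \xi) \;=\; -\frac{1}{2\sigma^2}\sum_{i=1}^n \bigl(y_i^j - \xi \langle x_i^j, \theta'\rangle\bigr)^2 + C,
\end{align}
where $C$ is independent of $\theta'$. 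Using $\xi^2 = 1$, the $\theta'$-dependent part is $\frac{\xi}{\sigma^2}\sum_i y_i^j \langle x_i^j, \theta'\rangle - \frac{1}{2\sigma^2}\sum_i \langle x_i^j, \theta'\rangle^2$.

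Next I would compute the posterior $f_\theta(\xi \mid x^j_{[n]}, y^j_{[n]})$ by Bayes' rule. Since $\xi$ is uniform on $\{\pm 1\}$, comparing the two quadratic exponents and cancelling common terms gives
\begin{align}
f_\theta(\xi \mid x^j_{[n]}, y^j_{[n]}) \;=\; \frac{\exp\!\bigl(\xi\,s_j(\theta)\bigr)}{2\cosh(s_j(\theta))}, \qquad s_j(\theta) \coloneqq \frac{1}{\sigma^2}\sum_{i=1}^n \langle x_i^j, \theta\rangle y_i^j,
\end{align}
so that $\sum_{\xi \in \{\pm 1\}} \xi\, f_\theta(\xi \mid x^j_{[n]}, y^j_{[n]}) = \tanh(s_j(\theta))$. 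Substituting into \eqref{eq: emp Q}, the $\theta'$-dependent part of $Q_m(\theta'\mid\theta)$ becomes
\begin{align}
\frac{1}{m}\sum_{j=1}^m \biggl[ \tanh(s_j(\theta)) \cdot \frac{1}{\sigma^2}\sum_{i=1}^n y_i^j \langle x_i^j, \theta'\rangle \;-\; \frac{1}{2\sigma^2}\sum_{i=1}^n \langle x_i^j, \theta'\rangle^2 \biggr],
\end{align}
a concave quadratic in $\theta'$ (strictly, whenever $\SigmaHat \succ 0$, which holds almost surely for $mn \geq d$).

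Finally, I would take the gradient in $\theta'$ and set it to zero. The quadratic term contributes $-\frac{1}{m\sigma^2}\sum_{j,i} x_i^j(x_i^j)^\top \theta' = -\frac{n}{\sigma^2}\SigmaHat\,\theta'$, while the linear term contributes $\frac{1}{m\sigma^2}\sum_{j,i} x_i^j y_i^j \tanh(s_j(\theta))$. Solving the normal equation and absorbing the factor $n$ gives
\begin{align}
M_m(\theta) \;=\; \SigmaHat^{-1}\cdot \frac{1}{mn}\sum_{j=1}^m\sum_{i=1}^n x_i^j y_i^j \tanh\!\biggl( \frac{1}{\sigma^2}\sum_{i=1}^n \langle x_i^j, \theta\rangle y_i^j \biggr),
\end{align}
which is precisely the stated formula. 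There is no real obstacle here beyond bookkeeping: the only subtle point is the simplification of the posterior to a $\tanh$, which relies crucially on $\xi \in \{\pm 1\}$ being shared by all $n$ samples of node $j$ (so all $n$ residuals contribute to the same $s_j(\theta)$), and on $\xi^2 = 1$ which makes the quadratic coefficient of $\theta'$ equal to $\SigmaHat$ independent of $\xi$.
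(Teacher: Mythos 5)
Your proposal is correct and follows essentially the same route as the paper: compute the posterior over $\xi$ via Bayes' rule (yielding the $\tanh$ weight), identify the $\theta'$-dependent part of $Q_m$ as a concave quadratic, and solve the normal equation whose system matrix is $\frac{n}{\sigma^2}\SigmaHat$. Your explicit remark that invertibility of $\SigmaHat$ holds almost surely when $mn \geq d$ is a small point the paper leaves implicit, but otherwise the two arguments coincide.
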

\begin{proof}
We defer the proof to Appendix \ref{proof: prop populatiempiricalon EM}.
\end{proof}

\begin{figure}
\begin{minipage}[t]{0.48\textwidth}
\begin{algorithm}[H]
\caption{Population EM}\label{alg:pop-EM}
\begin{algorithmic} 
\Require initialization $\theta_0$
\For{$t=0,1,\cdots$}
\State Update $\theta_{t+1} = M(\theta_t)$ as defined in \eqref{eq: pop-EM}
\EndFor
\end{algorithmic}
\end{algorithm}
\end{minipage}
\hfill
\begin{minipage}[t]{0.48\textwidth}
\begin{algorithm}[H]
\caption{Empirical EM}\label{alg:emp-EM}
\begin{algorithmic}
\Require initialization $\theta_0$
\For{$t=0,1,\cdots$}
\State Update $\theta_{t+1} = M_m(\theta_t)$ as defined in \eqref{eq: emp-EM}
\EndFor
\end{algorithmic}
\end{algorithm}
\end{minipage}
\end{figure}

Our goal in the remainder of the paper is to rigorously study the optimization and generalization performance of the two population and empirical EM algorithms described above. Before that, let us elaborate on a simple and intuitive benchmark.

\subsection{A benchmark: EM with independent samples} \label{sec: benchmark}

As we described in our C-MLR model in \eqref{eq: emp-C-MLR model}, the measurements observed on a given node share the same latent variable, making them dependent. In contrast, the well-established literature on EM is centered around the i.i.d. setting where each sample is generated through a latent variable independent of the ones for any other sample. To be more precise, consider the setting where $N$ i.i.d. linear measurements $\{(x_i,y_i) | i=1,\cdots,N\}$ generated by a mixture of two component linear regression model are available. That is, $y_i = \xi_i \langle x_i, \theta^* \rangle + \epsilon_i$ for all $i=1,\cdots,N$ where $\xi_i \sim \ccalU\{\pm 1\}$,  $x_i \sim \ccalN(0,I_d)$ and $\epsilon_i \sim \ccalN(0,\sigma^2)$ are i.i.d. and mutually independent. In this setting, the population and empirical EM update rules are as follows 
\begin{align} \label{eq: pop M iid}
    M(\theta)
    =
    \E \Big[X Y \tanh\Big( \frac{1}{\sigma^2} \langle X, \theta\rangle Y \Big) \Big],
    \, \text{ and } \,
    M_N(\theta)
    =
    \SigmaHat^{-1} \frac{1}{N} \sum_{i=1}^{N} x_i y_i \tanh\Big( \frac{1}{\sigma^2} \langle x_i, \theta \rangle y_i \Big),
\end{align}
where the expectation is over $X \sim \ccalN(0,I_d)$, $\xi \sim \ccalU\{\pm 1\}$ and $Y | X,\xi \sim \ccalN(\xi \langle X,\theta^* \rangle, \sigma^2)$. In above, $\SigmaHat = 1/N \sum_{i=1}^{N} x_i x_i^{\top}$ denotes the sample covariance matrix \citep{balakrishnan2017statistical,kwon2019global}. In particular, it was shown in \cite{balakrishnan2017statistical} that for any suitable initialization with $\| \theta_0 - \theta^* \| \leq \| \theta^* \|/32$, after $T = \log(N/d \cdot \| \theta^* \|^2/(\| \theta^* \|^2 + \sigma^2)) \cdot \ccalO(1)$ iterations of empirical EM with update rule $M_N(\cdot)$ as above, the following sub-optimality is guaranteed with probability at least $1-\delta$,
\begin{align}
    \| \theta_T - \theta^* \| 
    \leq
    \sqrt{\| \theta^* \|^2 + \sigma^2} \, \sqrt{\frac{d+\log(1/\delta)}{N}}  \, \log \bigg( \frac{N}{d} \cdot \frac{\| \theta^* \|^2}{\| \theta^* \|^2 + \sigma^2}  \bigg) \cdot \ccalO(1).
\end{align}
Now, consider $N = mn$ linear measurements generated by the C-MLR model in \eqref{eq: emp-C-MLR model} which we also denote by the same notation $\{(x_i,y_i) | i=1,\cdots,N\}$. Clearly, the EM update rules in \eqref{eq: pop M iid} may not be employed in this setting as samples are not independent due to the shared latent variables. However, one could make such $N$ samples independent by the following simple trick. For each sample $i=1,\cdots,N$, let us denote $\tilde{y}_i=\tilde{\xi}_i \cdot y_i$ where $\tilde{\xi}_i$s are independent Rademacher variables. In words, $\tilde{y}_i=y_i$ or $\tilde{y}_i=-y_i$ equally likely. It is straightforward to check that the new $N$ samples $\{(x_i,\tilde{y}_i) | i=1,\cdots,N\}$ are indeed independent. Therefore, one may employ the guarantee above and conclude that with a suitable initialization and after $T$ iterations of EM (on the new samples), the final sub-optimality is with probability $1-\delta$ bounded by
\begin{align}
    \| \theta_T - \theta^* \| 
    \leq
    \sqrt{\| \theta^* \|^2 + \sigma^2} \, \sqrt{\frac{d + \log(1/\delta)}{mn}} \, \cdot \tilde{\ccalO}(1),
    \, \text{ where } \,
    T =  \log \Big(\frac{mn}{d} \cdot \frac{\| \theta^* \|^2}{\| \theta^* \|^2 + \sigma^2} \Big) \cdot \ccalO(1).
\end{align}

As mentioned before, we aim to characterize the complexity of the EM algorithm deployed on clustered samples per the C-MLR model described in \eqref{eq: emp-C-MLR model}. Before laying out our formal analysis, it is worth highlighting our main result here and comparing it to the simple benchmark described above.

\begin{theorem*}[Main, informal] 
Consider the empirical EM in Algorithm \ref{alg:emp-EM} with a constant $\SNR \geq 4$ and any tolerance probability $\delta \in (0,1)$. Moreover, assume that $mn \geq \ccalO(d + \log(1/\delta))$ and $n \geq \ccalO(\log(m) + d + \log(1/\delta))$. Then, for a suitable initialization and sufficiently large $n$, after $T=\ccalO(1)$ iterations of Algorithm \ref{alg:emp-EM}, either
\begin{enumerate}[label=(\roman*),nosep]
    \item there exists an iterate $0 \leq t \leq T$ such that
\begin{align}
    \| \theta_t - \theta^* \| 
    \leq
    \sqrt{\| \theta^* \|^2 + \sigma^2}  \sqrt{\frac{d + \log(1/\delta)}{mn}},
\end{align}
\item or with probability at least $1-\delta$,
\begin{align}
    \| \theta_T - \theta^* \| 
    \leq
    \sqrt{\| \theta^* \|^2 + \sigma^2}  \sqrt{\frac{d + \log(1/\delta)}{mn}} \cdot \ccalO(1).
\end{align}
\end{enumerate}
\vspace{-5pt}
\end{theorem*}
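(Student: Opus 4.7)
The strategy is to decompose each empirical EM step into its population target plus a statistical fluctuation and iterate the resulting recursion. The triangle inequality yields $\|M_m(\theta)-\theta^*\| \leq \|M(\theta)-\theta^*\| + \|M_m(\theta) - M(\theta)\|$, and I plan to bound each term sharply in a local neighborhood $B := \{\theta : \|\theta-\theta^*\| \leq c\|\theta^*\|\}$ that the iterates will be shown to remain in. The first step is a population contraction of the form $\|M(\theta) - \theta^*\| \leq \kappa_n \|\theta - \theta^*\|$ for all $\theta \in B$, with $\kappa_n \leq e^{-\Omega(n)}$ for sufficiently large $n$ in the $\SNR \geq 4$ regime. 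The driving mechanism is saturation of the $\tanh$ in Proposition~\ref{prop: population EM}: conditioned on the latent $\xi \in \{\pm 1\}$, the argument $\frac{1}{\sigma^2}\sum_i \langle X_i,\theta\rangle Y_i$ has mean of order $\xi \cdot n\|\theta^*\|^2/\sigma^2$ and Gaussian-type fluctuations only of order $\sqrt n$, so $\tanh$ equals $\xi$ up to an event of probability $e^{-\Omega(n)}$, which translates into the stated contraction once the $X_1 Y_1$ prefactor is integrated out.

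The second step is a uniform empirical concentration bound
\begin{align}
\sup_{\theta \in B}\,\|M_m(\theta) - M(\theta)\| \;\leq\; \varepsilon, \qquad \varepsilon := C\sqrt{\|\theta^*\|^2+\sigma^2}\,\sqrt{\tfrac{d+\log(1/\delta)}{mn}},
\end{align}
holding with probability at least $1-\delta$. I plan to obtain this by handling the $\SigmaHat^{-1}$ prefactor via $\|\SigmaHat - I\|_\op \lesssim \sqrt{d/(mn)}$ from standard Gaussian covariance concentration on the $mn$ independent features $x_i^j$, and then controlling the $\tanh$-weighted empirical mean through an $\epsilon$-net over $\theta \in B$ combined with sub-exponential concentration of the $m$ independent per-node summands, each of which is a bounded function of a Gaussian vector multiplied by $x_1^j y_1^j$. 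The hypothesis $n \geq \ccalO(\log m + d + \log(1/\delta))$ is exactly what makes the union bound over the net and over the $m$ nodes close, and the $\sqrt{\|\theta^*\|^2+\sigma^2}$ prefactor arises from the sub-exponential norm of $y_1^j$.

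Combining the two steps yields, on the high-probability event and while the iterate stays in $B$, the one-step recursion $r_{t+1} \leq \kappa_n r_t + \varepsilon$ with $r_t := \|\theta_t - \theta^*\|$. The dichotomy in the theorem now appears naturally: either there is some $t \leq T$ at which $r_t \leq 2\varepsilon$, which is conclusion (i), or else the recursion unwinds cleanly to $r_T \leq \kappa_n^T r_0 + \varepsilon/(1-\kappa_n)$, which gives (ii) as soon as $T$ is large enough that $\kappa_n^T r_0 \leq \varepsilon$. Under the theorem's hypothesis $\log m = o(n)$ we have $\log(r_0/\varepsilon) = O(\log(mn)) = O(n)$ while $\log(1/\kappa_n) = \Omega(n)$, so $T = O(1)$ iterations suffice; the same recursion, being contractive modulo $\varepsilon \leq c\|\theta^*\|/2$, also keeps $r_t$ inside $B$ so the induction is consistent. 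The principal obstacle is Step~1: turning the $\tanh$-saturation heuristic into a genuine deterministic $\kappa_n = e^{-\Omega(n)}$ that is uniform over $B$ and survives integration against the heavy-tailed $X_1 Y_1$ prefactor. The lower bound $\SNR \geq 4$ is used precisely here to ensure that the saturation window is comfortably wider than the $\sqrt n$ fluctuations of the $\tanh$ argument so that the exponentially small error term dominates the inner product $\langle M(\theta)-\theta^*, \theta - \theta^*\rangle$ rather than being swamped by lower-order $\Theta(1/\sqrt n)$ corrections.
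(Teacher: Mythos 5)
Your plan is correct and follows essentially the same route as the paper: the same triangle-inequality decomposition into a population contraction (Theorem \ref{thm: FOS}, proved via $\tanh$-saturation on good events of probability $1-e^{-\Omega(n)}$) plus a uniform generalization gap (Theorem \ref{thm: generalization}, proved via covariance concentration, a $1/2$-net, and sub-exponential tails of the per-node sums, with $n \gtrsim \log m + d$ closing the union bound), followed by unwinding the recursion $r_{t+1}\le \kappa r_t + \varepsilon$ with the same dichotomy and the same use of $m=e^{o(n)}$ to get $T=\ccalO(1)$. The one point worth noting is that the paper's contraction factor is $\kappa=(\|\theta^*\|+\sigma)(\SNR+\tfrac{1}{n\varepsilon})e^{-nC}$ and therefore holds only on the shell $\varepsilon\le\|\theta-\theta^*\|\le r$ rather than uniformly on the ball --- this is precisely the obstacle you flag at the end, and it is the actual source of the dichotomy (i)/(ii) rather than the dichotomy being merely a convenient restatement of the recursion's conclusion.
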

Our result above demonstrates that incorporating the underlying clustered structure in the C-MLR model, EM requires only $\ccalO(1)$ iterations to reach the statistical accuracy  $\ccalO(\sqrt{d/(mn)})$ under proper scaling assumptions. In contrast and as illustrated above, discarding such structure makes EM algorithm to run for $\ccalO(\log(mn/d))$ iterations to reach the same accuracy.

In the following sections, we prove this result by laying out optimization and generalization guarantees for the EM algorithm on samples generated by the C-MLR model.

\section{Analysis of Population and Empirical EM Updates}

\subsection{Population EM update}

In this section, we consider the population EM updates in Algorithm \ref{alg:pop-EM} with the $M$ operator characterized in \eqref{eq: pop M} and establish optimization guarantees for it. Let us recall the population EM scenario and the underlying C-MLR model. Denoted by $\{(x_i,y_i) | i =1,\cdots,n\}$ are $n$ pairs of linear measurements generated according to the mixture model \eqref{eq: pop-C-MLR model}, that is, $y_i = \xi \langle x_i, \theta^* \rangle + \epsilon_i$ for all $i = 1,\cdots,n$. In Proposition \ref{prop: population EM}, we characterised the population $M$-function and in the following theorem, we establish its contraction property. Here and throughout the paper, we denote a Euclidean ball of radius $r$ around the fixed point $\theta^*$ by $\mymathbb{B}(r;\theta^*) \coloneqq \{ \theta \in \Omega \, | \, \| \theta - \theta^* \| \leq r\}$.
\begin{theorem} \label{thm: FOS}
Consider the population EM update rule $M$ in \eqref{eq: pop M} and assume that $\theta \in \mymathbb{B}(\alpha \| \theta^* \|; \theta^*)$ for some constant $0 \leq \alpha < 1$. If  $\| \theta - \theta^*\| \geq \varepsilon$, then there exist constants $N_0(\alpha, \SNR)$ and $C(\alpha, \SNR)$ depending on $\alpha$ and $\SNR=\| \theta^* \|/\sigma$ such that for any $n \geq N_0(\alpha, \SNR)$ we have
\begin{align}
    \Vert M(\theta) - \theta^* \Vert
    \leq
    \kappa \Vert \theta - \theta^* \Vert,
    \quad \text{for} \quad
    \kappa
    =
    \big(\| \theta^* \| + \sigma \big)
    \Big( \SNR + \frac{1}{n \varepsilon} \Big) \exp \left( -n \cdot C(\alpha, \SNR) \right).
\end{align}
\end{theorem}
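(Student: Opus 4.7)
The plan is to express $M(\theta)-\theta^*$ as an expectation that is manifestly exponentially small in $n$, then split it into a piece that depends linearly on $\theta-\theta^*$ (giving the $\SNR$ term in $\kappa$) and a residual piece of size $\mathcal{O}(1/n)\exp(-nC)$ (which, after dividing by $\|\theta-\theta^*\|\geq\varepsilon$, produces the $\tfrac{1}{n\varepsilon}$ term). First, by the remark after Proposition~\ref{prop: population EM}, the expectation defining $M(\theta)$ may be taken assuming $\xi=1$, so that $Y_i=\langle X_i,\theta^*\rangle+\epsilon_i$ with independent $X_i\sim\ccalN(0,I_d)$ and $\epsilon_i\sim\ccalN(0,\sigma^2)$. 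Since in this representation $\E[X_1Y_1]=\theta^*$,
\[
M(\theta)-\theta^* \;=\; \E\bigl[X_1Y_1\bigl(\tanh(S(\theta))-1\bigr)\bigr], \qquad S(\theta)\coloneqq\frac{1}{\sigma^2}\sum_{i=1}^{n}\langle X_i,\theta\rangle Y_i.
\]

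Next, I would apply the mean value theorem to $\tanh$ in the variable $\theta$, writing $\tanh(S(\theta))-1 = \bigl(\tanh(S(\theta^*))-1\bigr) + \operatorname{sech}^2(S(\tilde\theta))\cdot\frac{1}{\sigma^2}\sum_i\langle X_i,\theta-\theta^*\rangle Y_i$ for a random $\tilde\theta\in[\theta^*,\theta]$, yielding $M(\theta)-\theta^*=I_2+I_1$. For $I_2=\E[X_1Y_1(\tanh(S(\theta^*))-1)]$, exchangeability of $\{(X_i,Y_i)\}_{i=1}^n$ gives $I_2=\tfrac{1}{n}\E\bigl[(\textstyle\sum_k X_kY_k)(\tanh(S(\theta^*))-1)\bigr]$. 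Decomposing $X_k=\langle X_k,\hat\theta^*\rangle\hat\theta^*+X_k^{\perp}$ with $\hat\theta^*=\theta^*/\|\theta^*\|$, the orthogonal piece $X_k^{\perp}$ is zero-mean and independent of both $Y_k$ and $S(\theta^*)$, so it drops out, leaving
\[
I_2 \;=\; \frac{\sigma^2}{n\|\theta^*\|^2}\,\E\bigl[S(\theta^*)\bigl(\tanh(S(\theta^*))-1\bigr)\bigr]\,\theta^*.
\]
Since $|x(1-\tanh x)|\leq 2|x|e^{-2x}$ for $x>0$ and $S(\theta^*)$ concentrates around $n\,\SNR^2$ (by a Bernstein/Hanson--Wright bound on the sub-exponential sum), the scalar expectation is exponentially small in $n$; for $\SNR$ of constant order this yields $\|I_2\|\leq\tfrac{\|\theta^*\|+\sigma}{n}\exp(-nC_2(\alpha,\SNR))$.

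For $I_1$, I would bound each coordinate by Cauchy--Schwarz: for any unit $u$,
\[
|\langle u,I_1\rangle| \;\leq\; \frac{1}{\sigma^2}\sqrt{\E\bigl[\langle X_1,u\rangle^2 Y_1^2\operatorname{sech}^4(S(\tilde\theta))\bigr]}\;\sqrt{\E\Bigl[\Bigl(\textstyle\sum_i\langle X_i,\theta-\theta^*\rangle Y_i\Bigr)^2\Bigr]}.
\]
The second factor is a routine Gaussian moment of order $n\|\theta-\theta^*\|\sqrt{\|\theta^*\|^2+\sigma^2}$. The first factor is controlled via $\operatorname{sech}^2\leq 4e^{-2|S|}$ combined with the same concentration of $S(\tilde\theta)$ around its order-$n$ mean, so it is exponentially small in $n$. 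Combining the two factors and absorbing the polynomial-in-$n$ prefactor into the exponential yields $\|I_1\|\leq(\|\theta^*\|+\sigma)\,\SNR\,\|\theta-\theta^*\|\exp(-nC_1(\alpha,\SNR))$. Adding the $I_1$ and $I_2$ bounds and using $\|\theta-\theta^*\|\geq\varepsilon$ gives the stated $\kappa$.

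I expect the trickiest step to be the control of $I_1$: because $\operatorname{sech}^2(S(\tilde\theta))$ depends on all $\{X_i,Y_i\}_{i=1}^n$ through the random interpolant $\tilde\theta$, it is coupled with the linear sum $\sum_i\langle X_i,\theta-\theta^*\rangle Y_i$, and a naive Cauchy--Schwarz could destroy the exponential gain. To keep the exponent, the concentration of $S(\tilde\theta)$ must be made \emph{uniform} over $\tilde\theta\in[\theta^*,\theta]$; a clean route is to bound $\operatorname{sech}^2(S(\tilde\theta))\leq\operatorname{sech}^2\bigl(\min_{\vartheta\in[\theta^*,\theta]}S(\vartheta)\bigr)$ and control this worst-case $S$ via a net plus Lipschitz argument together with the same Hanson--Wright estimate. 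A secondary delicate point is matching the polynomial-in-$n$ prefactors (from Gaussian moments, from $x(1-\tanh x)$ at $x\sim n$, and from the concentration step) against a single exponential, which is what ultimately pins down the constants $C(\alpha,\SNR)$ and $N_0(\alpha,\SNR)$.
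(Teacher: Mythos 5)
Your argument is correct, but it takes a genuinely different route from the paper. The paper does not touch $M(\theta)-\theta^*$ directly: it proves a first-order stability bound $\Vert \gr Q(M(\theta)\mid\theta^*)-\gr Q(M(\theta)\mid\theta)\Vert\leq\gamma\Vert\theta-\theta^*\Vert$, notes that $Q(\cdot\mid\theta^*)$ is $\lambda$-strongly concave with $\lambda=n/\sigma^2$, and invokes the Balakrishnan--Wainwright--Yu contraction theorem to conclude $\kappa=\gamma/\lambda$. Concretely this reduces to bounding $\E[X_1Y_1(\tanh(\tfrac{1}{\sigma^2}\langle Z,\theta\rangle)-\tanh(\tfrac{1}{\sigma^2}\langle Z,\theta^*\rangle))]$ by Cauchy--Schwarz, splitting off $\E[\langle X_1,\beta\rangle^2Y_1^2]^{1/2}$ and then controlling the squared $\tanh$-difference on the intersection of three good events ($\langle Z,\theta\rangle$ and $\langle Z,\theta^*\rangle$ both of order $n\|\theta^*\|^2$, and $|\langle Z,\theta-\theta^*\rangle|\lesssim n\|\theta^*\|\|\theta-\theta^*\|$); the off-event contribution divided by $\varepsilon^2$ produces the $\tfrac{1}{n\varepsilon}$ term. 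Your decomposition $M(\theta)-\theta^*=I_1+I_2$ via $\E[X_1Y_1]=\theta^*$ reaches the same estimates without the FOS/strong-concavity scaffolding, and your $I_1$ is essentially the paper's core term with the $\tanh$-difference handled by the mean value theorem instead of the paper's concavity bound $\frac{\tanh x_2-\tanh x_1}{x_2-x_1}\leq\max\{1-\tanh^2 x_1,1-\tanh^2 x_2\}$ (Lemma \ref{lemma: tanh slope}); moreover your Cauchy--Schwarz replaces the paper's event $\ccalE_3$ by a plain second-moment computation, which is simpler. Two refinements you should note. First, the uniformity-over-$\tilde\theta$ issue you flag as the trickiest step dissolves immediately: $S(\vartheta)=\tfrac{1}{\sigma^2}\langle Z,\vartheta\rangle$ is \emph{linear} in $\vartheta$, so $S(\tilde\theta)\geq\min\{S(\theta),S(\theta^*)\}$, and on the event that both endpoint values exceed $\tfrac{n}{4\sigma^2}(1-\alpha)\|\theta^*\|^2>0$ the monotonicity of $\operatorname{sech}^2$ on $[0,\infty)$ gives the pointwise bound you need from just the two endpoint events $\ccalE_1,\ccalE_2$ of Lemma \ref{lemma: events E1-3} --- no net or Lipschitz argument is required (this is exactly how the paper's Lemma \ref{lemma: tanh slope} sidesteps the random interpolant altogether). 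Second, your $I_2$ is in fact exactly zero by EM self-consistency ($M(\theta^*)=\theta^*$, which for this symmetric mixture follows from $\E[Z\tanh(\tfrac{1}{\sigma^2}\langle Z,\theta^*\rangle)]=n\theta^*$); your exponential bound is valid but not needed, and in your route the $\tfrac{1}{n\varepsilon}$ term of $\kappa$ then only serves to absorb the bad-event residue of $I_1$. Finally, as in the paper, you must absorb polynomial-in-$n$ and $\SNR$-dependent prefactors into $\exp(-nC(\alpha,\SNR))$ for $n\geq N_0(\alpha,\SNR)$, which is legitimate and is precisely what pins down $N_0$ and $C$.
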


\begin{proof}
We defer the proof to Appendix \ref{sec: proof thm FOS}.
\end{proof}

The result of this theorem reveals a number of insightful remarks as follows.

\begin{remark}
First, for any constant accuracy lower bound $\varepsilon$, as the number of samples per node $n$ grows, the factor $\kappa$ decreases and there exists a constant $N_0$ depending on the problem parameters such that for any $n \geq N_0$, the $M$-operator is a contraction, that is, $\kappa < 1$. Secondly and more importantly, it shows that if initialized within a ball around the ground truth model $\theta^*$, iterates of the population EM in Algorithm \ref{alg:pop-EM} converge \emph{linearly} in $n$ till reaching the accuracy $\varepsilon$. The following corollary provides an informal but insightful implication of this theorem.
\end{remark}

\begin{corollary} [Informal]
Suppose that the population EM in Algorithm \ref{alg:pop-EM} is initialized with $\theta_0$ where $\| \theta_0 - \theta^*\| = \ccalO(\| \theta^* \|)$. Then, for sufficiently large $n$ and after  $T = \ccalO(1 + \log(n/d)/n) = \ccalO(1)$ iterations, either there exists an iterate $0\leq t \leq T$ for which $\| \theta_t - \theta^* \| = \ccalO(\sqrt{d/n} \, \| \theta^* \|)$.
\end{corollary}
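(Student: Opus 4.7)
The plan is to apply Theorem \ref{thm: FOS} iteratively with the target accuracy $\varepsilon := c\sqrt{d/n}\,\|\theta^*\|$ for some absorbed constant $c>0$. Writing $\alpha \in (0,1)$ for the constant implicit in $\|\theta_0-\theta^*\| = \ccalO(\|\theta^*\|)$, so that $\theta_0 \in \mymathbb{B}(\alpha\|\theta^*\|;\theta^*)$, the argument splits into two cases. If $\|\theta_0-\theta^*\|\leq \varepsilon$, the claim holds at $t=0$; otherwise, I would run the population EM recursion of Algorithm \ref{alg:pop-EM} and track two conditions along the trajectory: (a) $\theta_t$ stays in $\mymathbb{B}(\alpha\|\theta^*\|;\theta^*)$, so that Theorem \ref{thm: FOS} is applicable, and (b) at each step either $\|\theta_t-\theta^*\|\leq \varepsilon$ (terminate and output the iterate), or $\|\theta_t-\theta^*\|\geq \varepsilon$ (apply the contraction).

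Assuming (a) and $\|\theta_t-\theta^*\|\geq \varepsilon$, Theorem \ref{thm: FOS} yields
\[
\|\theta_{t+1}-\theta^*\| \leq \kappa\,\|\theta_t-\theta^*\|,\qquad
\kappa \;=\; (\|\theta^*\|+\sigma)\Bigl(\SNR + \tfrac{1}{c\sqrt{nd}\,\|\theta^*\|}\Bigr)\exp\bigl(-n\,C(\alpha,\SNR)\bigr),
\]
using $1/(n\varepsilon)=1/(c\sqrt{nd}\,\|\theta^*\|)$. For $n$ large enough (depending on $\alpha$, $\SNR$, $d$, $c$), the $1/(c\sqrt{nd}\,\|\theta^*\|)$ term is dominated by $\SNR$, and the exponential factor drives $\kappa$ strictly below $1$. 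Strict contraction in turn gives $\|\theta_{t+1}-\theta^*\| \leq \|\theta_t-\theta^*\| \leq \alpha\|\theta^*\|$, restoring (a). An induction therefore preserves both (a) and the applicability of the contraction until the first iterate falling below $\varepsilon$.

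It remains to count iterations. If the termination in (b) never triggers for $t<T$, unrolling the contraction gives $\|\theta_T-\theta^*\|\leq \kappa^T \alpha\|\theta^*\|$; demanding this to be at most $\varepsilon=c\sqrt{d/n}\,\|\theta^*\|$ forces
$T\log(1/\kappa)\geq \tfrac12\log(n/d)+\log(\alpha/c)$. Since $\log(1/\kappa)\geq nC(\alpha,\SNR)/2$ for large $n$, one obtains
\[
T \;=\; \ccalO\!\left(\frac{1}{n}+\frac{\log(n/d)}{n}\right) \;=\; \ccalO(1),
\]
matching the stated bound. The main subtlety — really the only one — is the joint balancing of $\alpha$ and $\varepsilon$: the $1/(n\varepsilon)$ term in $\kappa$ blows up as $\varepsilon\to 0$, so a naive choice of target accuracy could spoil contraction. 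Picking $\varepsilon$ only polynomially small, namely $\sqrt{d/n}\,\|\theta^*\|$, makes $1/(n\varepsilon)=o(1)$ while the exponential $e^{-nC}$ still dominates, so a single fixed $\varepsilon$ works across all iterations and no per-iterate refinement of the accuracy threshold is required.
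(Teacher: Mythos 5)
Your argument is correct and is essentially the intended one: the paper leaves this corollary unproved (it is labeled informal), but its proof of Theorem \ref{thm: main} follows exactly the same template — fix the target accuracy $\varepsilon_\ell$, verify $\kappa(\varepsilon_\ell)\le e^{-C n}<1$ because $1/(n\varepsilon)$ grows only polynomially while the exponential decays, keep the iterates in the ball by induction, and balance $\kappa^T\|\theta_0-\theta^*\|$ against $\varepsilon$ to get $T=\ccalO(1+\log(n/d)/n)$. Your handling of the $\|\theta_t-\theta^*\|\ge\varepsilon$ caveat via early termination matches the paper's case split, so there is nothing to add.
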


While we provide the proof of Theorem \ref{thm: FOS} in Section \ref{sec: proof thm FOS}, it is worth elaborating on the proof technique as follows.

\subsection{Proof sketch} 

To establish optimization guarantees for the population EM iterates and Algorithm \ref{alg:pop-EM}, we first adopt the \emph{First-Order Stability} (FOS) notion \citep{balakrishnan2017statistical} as defined below. 
\begin{definition}[First-Order Stability (FOS)] \label{def: FOS}
The functions $\{Q(\cdot | \theta) | \theta \in \Omega\}$ satisfy condition FOS($\gamma$) over $\mymathbb{B}(r;\theta^*)$ if
\begin{align}
    \Vert \gr Q(M(\theta)|\theta^*) - \gr Q(M(\theta)|\theta) \Vert
    \leq
    \gamma \Vert \theta - \theta^* \Vert,
    \quad
    \text{for all } \theta \in \mymathbb{B}(r;\theta^*).
\end{align}
\end{definition}
This property of the $Q$-function helps showing the contraction of the population EM operator $M$. The following general theorem from \cite{balakrishnan2017statistical} characterizes the conditions under which the population EM operator $M$ is contractive.
\begin{theorem}[\cite{balakrishnan2017statistical}] \label{thm: M contractive}
For some radius $r > 0$ and pair $(\gamma, \lambda)$ such that $0 \leq \gamma < \lambda$, suppose that the
function $Q(\cdot|\theta^*)$ is $\lambda$-strongly concave, and that the FOS($\gamma$) condition holds on the
ball $\mymathbb{B}(r;\theta^*)$. Then, the population EM operator $M$ is contractive over $\mymathbb{B}(r;\theta^*)$, in particular,
\begin{align}
    \Vert M(\theta) - \theta^* \Vert
    \leq
    \frac{\gamma}{\lambda} \Vert \theta - \theta^* \Vert,
    \quad
    \text{for all } \theta \in \mymathbb{B}(r;\theta^*).
\end{align}
\end{theorem}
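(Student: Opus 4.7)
The plan is the standard ``first-order optimality plus strong concavity'' argument for EM-type contractions. The two ingredients we get for free are (i) $M(\theta) = \arg\max_{\theta'} Q(\theta' | \theta)$, so first-order optimality gives $\nabla_{\theta'} Q(M(\theta) | \theta) = 0$; and (ii) the self-consistency property $\theta^* = \arg\max_{\theta'} Q(\theta' | \theta^*)$ of the population $Q$-function, which is a consequence of the usual MLE/Gibbs argument (since $Q(\theta' | \theta^*)$ is the expected complete-data log-likelihood under $\theta^*$) and yields $\nabla_{\theta'} Q(\theta^* | \theta^*) = 0$.

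The calculation is then three lines. By $\lambda$-strong concavity of $Q(\cdot | \theta^*)$, the first-order characterization applied to the points $M(\theta)$ and $\theta^*$ gives
\[
    \bigl\langle \nabla Q(M(\theta) | \theta^*) - \nabla Q(\theta^* | \theta^*),\; M(\theta) - \theta^* \bigr\rangle \;\leq\; -\lambda\,\|M(\theta) - \theta^*\|^2.
\]
Self-consistency kills the second gradient, and the FOS condition applied at the anchor $\theta' = M(\theta)$ (together with $\nabla Q(M(\theta) | \theta) = 0$) gives $\|\nabla Q(M(\theta) | \theta^*)\| \le \gamma\,\|\theta - \theta^*\|$. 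Cauchy--Schwarz on the displayed inner product then yields
\[
    \lambda\,\|M(\theta) - \theta^*\|^2 \;\leq\; \|\nabla Q(M(\theta) | \theta^*)\| \cdot \|M(\theta) - \theta^*\| \;\leq\; \gamma\,\|\theta - \theta^*\|\,\|M(\theta) - \theta^*\|,
\]
and dividing by $\lambda\,\|M(\theta) - \theta^*\|$ (with the degenerate case $M(\theta) = \theta^*$ handled trivially) delivers the claim.

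There is no real obstacle here: this is the textbook one-sided-Lipschitz form of strong concavity glued to FOS. The only subtlety worth flagging is the self-consistency step $\nabla Q(\theta^* | \theta^*) = 0$, which is clean in the idealised population framework assumed by the present statement; in the C-MLR setting of Theorem~\ref{thm: FOS} the analogous finite-$n$ population $Q$ only satisfies it up to an exponentially small bias $\|M(\theta^*) - \theta^*\|$, so Theorem~\ref{thm: FOS} cannot merely cite the present contraction as a black box and must instead carry that bias through explicitly.
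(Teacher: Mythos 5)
Your proof is correct and is essentially the argument of Theorem 1 in \cite{balakrishnan2017statistical}; the paper imports this statement without proof, so there is nothing to diverge from. The only cosmetic difference from the cited source is that you use the interior first-order conditions $\gr Q(M(\theta)\,|\,\theta)=\gr Q(\theta^*\,|\,\theta^*)=0$ rather than the variational inequalities appropriate to maximization over the compact convex set $\Omega$; this is consistent with how the paper itself derives $M(\theta)$ by setting the gradient to zero, and the chain of inequalities is unchanged either way.

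One correction to your closing remark: self-consistency is \emph{not} lost in the C-MLR setting. At $\theta=\theta^*$ the population $Q$-function in \eqref{eq: pop-Q} is exactly the expectation of $\log f_{\theta'}(x_{[n]},y_{[n]},\xi)$ under the full joint law $f_{\theta^*}$ of $(x_{[n]},y_{[n]},\xi)$, so the Gibbs inequality gives $M(\theta^*)=\theta^*$ exactly for every finite $n$; equivalently, $\E\big[X_1Y_1\tanh\big(\sigma^{-2}\langle Z,\theta^*\rangle\big)\big]=\theta^*$ with no bias, which one can also verify by a direct Gaussian computation. The exponentially small quantities in Theorem \ref{thm: FOS} arise from bounding the FOS constant $\gamma$, not from any defect at $\theta^*$, so the paper's use of this contraction theorem as a black box is legitimate.
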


For the EM function in \eqref{eq: pop M}, we prove the first-order stability property in Definition \ref{def: FOS} for a fixed $\theta$. More precisely, for any $\theta \in \mymathbb{B}(\alpha \| \theta^* \|; \theta^*)$, we show that  for the population $Q$-function \eqref{eq: pop-Q} the FOS($\gamma$) property holds true with
\begin{align}
    \gamma
    =
    \frac{1}{\sigma^2}\big( \| \theta^* \| + \sigma \big)
    \Big( n \cdot \SNR + \frac{1}{\varepsilon} \Big) \exp \left( -n \cdot C(\alpha, \SNR) \right),
\end{align}
as long as $\| \theta - \theta^*\| \geq \varepsilon$. On the other hand, it is straightforward to check that population $Q$-function is $\lambda$-strongly concave with $\lambda = n/\sigma^2$. This, together with the first-order stability and Theorem \ref{thm: M contractive} yields the contractive property of the population $M$-function in Theorem \ref{thm: FOS}.

\subsection{Empirical EM update}

Having set up the optimization guarantees for the population EM (Algorithm \ref{alg:pop-EM}) in the previous section, we move to the sample-based setting and establish generalization characteristics the empirical EM. Coupling these two results, we provide convergence guarantees of the (empirical) EM algorithm later in this section. 

Let us recall the empirical setting of our interest where each node $j=1,\cdots,m$ nodes observes $n$ linear measurements denoted by $\{(x_i^j, y_i^j) | i=1,\cdots,n\}$ and generated by the C-MLR model in \eqref{eq: emp-C-MLR model}, that is, $y_i^j = \xi^j \langle x_i^j, \theta^* \rangle + \epsilon_i^j$. In the following, we establish a uniform generalization error bound for the empirical EM update with finitely many nodes $m$ and samples per node $n$.

\begin{theorem}[Generalization gap]\label{thm: generalization}
Consider the C-MLR model in \eqref{eq: emp-C-MLR model} with $\SNR \geq 4$, any tolerance probability $\delta \in (0,1)$ and the empirical and population EM operators in \eqref{eq: emp M} and \eqref{eq: pop M} with $mn \geq 192^2 (d + \log(8/\delta))$ and $n - 64 \log m \geq 104(2d + \log(4/\delta))$. Then, with probability at least $1 - \delta$,
\begin{align}
    \sup_{\theta \in \mymathbb{Sh}(\varepsilon, r;\theta^*)} \| M_m(\theta) - M(\theta) \|
    \leq
    \sqrt{\| \theta^* \|^2 + \sigma^2} \sqrt{\frac{d + \log(1/\delta)}{mn}}  \cdot \ccalO(1 + \kappa(\varepsilon)).
\end{align}
Here, the supermom is over the spherical shell  $\mymathbb{Sh}(\varepsilon, r;\theta^*) \coloneqq \{\theta \in \reals^d : \varepsilon \leq \| \theta - \theta^* \| \leq r\}$ with $r=\| \theta^* \|/14$ and $\kappa(\varepsilon)$ is the contraction factor of the expected EM update characterized in Theorem \ref{thm: FOS}, i.e.,
\begin{align}
    \kappa(\varepsilon)
    =
    \big(\| \theta^* \| + \sigma \big)
    \Big( \SNR + \frac{1}{n \varepsilon} \Big) \exp \left( -n \cdot C(\SNR) \right).
\end{align}
\end{theorem}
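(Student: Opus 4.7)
The plan is to decompose
\[ M_m(\theta) - M(\theta) = (\SigmaHat^{-1} - I_d)\,M(\theta) + (\SigmaHat^{-1} - I_d)\,\big(\hat V(\theta) - M(\theta)\big) + \big(\hat V(\theta) - M(\theta)\big), \]
where $\hat V(\theta) \coloneqq \frac{1}{mn}\sum_{j,i} x_i^j y_i^j \tanh(\frac{1}{\sigma^2}\sum_i \langle x_i^j, \theta\rangle y_i^j)$, so by symmetry across $i$ one has $M(\theta) = \E[\hat V(\theta)]$ and the second summand in \eqref{eq: emp M} can be rewritten as an average of $m$ i.i.d.\ per-node blocks. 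Standard concentration of the Gaussian sample covariance gives $\|\SigmaHat - I_d\|_{\op} \lesssim \sqrt{(d + \log(1/\delta))/(mn)}$ with probability at least $1-\delta/3$ under $mn \gtrsim d + \log(1/\delta)$; combined with $\|M(\theta)\| \lesssim \|\theta^*\|$ (immediate from Theorem \ref{thm: FOS} and the radius $r \leq \|\theta^*\|/14$), the first term is already of the claimed order, and the second is of strictly smaller order. The bulk of the work therefore lies in uniformly controlling $\hat V(\theta) - M(\theta)$ on the shell.

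For a fixed $\theta$, I would write $\hat V(\theta) - M(\theta) = \frac{1}{m}\sum_j (U_j(\theta) - \E U_j(\theta))$ with per-node block $U_j(\theta) \coloneqq \frac{1}{n}\sum_i x_i^j y_i^j \tanh(\cdot)$, an average of $m$ i.i.d.\ centered random vectors. Conditioning on $\xi^j$ turns the $n$ inner summands into i.i.d.\ Gaussian products of sub-exponential scale $\sqrt{\|\theta^*\|^2+\sigma^2}$; in the $\tanh$-saturated regime (where $\tanh(\cdot)\to\xi^j$ exponentially fast in $n$ when $\SNR\geq 4$, via the same exponential concentration at the heart of Theorem \ref{thm: FOS}), the per-block variance is $\lesssim (\|\theta^*\|^2+\sigma^2)\,d/n$, while the nonsaturated residual contributes a correction of magnitude $\kappa(\varepsilon)$ on the shell. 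A vector Bernstein / sub-exponential Hoeffding inequality on the $m$ independent blocks then yields, with probability at least $1-\delta/3$,
\[ \|\hat V(\theta) - M(\theta)\| \lesssim \sqrt{\|\theta^*\|^2+\sigma^2}\,\sqrt{(d+\log(1/\delta))/(mn)}\cdot(1+\kappa(\varepsilon)). \]

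To upgrade to a uniform bound over $\mymathbb{Sh}(\varepsilon, r;\theta^*)$, I would take an $\eta$-net $\mathcal{N}$ of $\mymathbb{B}(r;\theta^*)$ with $|\mathcal{N}| \leq (3r/\eta)^d$, apply the pointwise estimate with $\delta$ replaced by $\delta/(3|\mathcal{N}|)$ (the resulting $d\log(1/\eta)$ cost is absorbed into the leading $d+\log(1/\delta)$), and control the off-net residual via a Lipschitz bound for $\theta\mapsto\hat V(\theta)$. Because $\tanh$ is $1$-Lipschitz, this Lipschitz constant reduces to an empirical expression in the products $\|x_i^j\|\,|y_i^j|$ and is $\ccalO(1)$ with high probability once the maximum over the $m$ blocks is controlled by $\chi^2$-type concentration---this is precisely where the assumption $n - 64\log m \gtrsim d+\log(1/\delta)$ enters. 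The main technical obstacle throughout is the dependence \emph{within} a node: the $n$ inner summands of $U_j(\theta)$ share the latent $\xi^j$ and appear jointly inside $\tanh$, so one cannot directly invoke Bernstein on the $n$ inner terms. Conditioning on $\xi^j$ and exploiting the exponential $\tanh$-saturation (inherited from Theorem \ref{thm: FOS}) is the key device that recovers the crucial $1/n$ second-moment saving and produces the $\ccalO(1+\kappa(\varepsilon))$ factor in the claimed bound.
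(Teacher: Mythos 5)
Your decomposition and the core mechanism (whitening term controlled by Gaussian covariance concentration, plus concentration of the $\tanh$-weighted average over $m$ i.i.d.\ per-node blocks, with the exponential saturation of $\tanh$ at $\SNR \geq 4$ supplying the crucial $1/n$ variance saving) match the paper's proof in substance. The paper organizes the second piece slightly differently: rather than conditioning on $\xi^j$ and applying vector Bernstein to the $\tanh$-weighted blocks directly, it compares both $\hat V(\theta)$ and $M(\theta)$ to their $\tanh$-free counterparts $\vHat_0 = \frac{1}{mn}\sum_j Z^j$ and $v_0 = \frac{1}{n}\E[Z]$, shows the two $\tanh$-correction terms are exponentially small in $n$ (uniformly over the ball, via a net over unit \emph{directions} only), and then concentrates the plain sums $\vHat_0 - v_0$, which do not depend on $\theta$ at all. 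This is why the paper needs no $\eta$-net over $\theta$: all $\theta$-dependence is confined to terms that are $e^{-cn}$ on a single high-probability event.

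The genuine gap in your write-up is the uniformization step. You claim that, since $\tanh$ is $1$-Lipschitz, the Lipschitz constant of $\theta \mapsto \hat V(\theta)$ is $\ccalO(1)$ with high probability. Computing it: $\|\hat V(\theta) - \hat V(\theta')\| \leq \frac{1}{mn\sigma^2}\sum_{j=1}^m \|Z^j\|^2\,\|\theta-\theta'\|$, and since $\|Z^j\| = \Theta(n\|\theta^*\|)$ with high probability, the worst-case Lipschitz constant is $\Theta(n\,\SNR^2)$, not $\ccalO(1)$. Consequently the net resolution must be $\eta \asymp t/(n\,\SNR^2)$ with $t \asymp \sqrt{(d+\log(1/\delta))/(mn)}$, so $\log|\mathcal{N}| \asymp d\log(n\sqrt{mn})$ is \emph{not} absorbed into $d + \log(1/\delta)$; as written your argument delivers the rate only up to an extra $\sqrt{\log(mn)}$ factor on the $d$ term. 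The fix is available from your own saturation observation: on the event that $\langle Z^j,\theta\rangle \geq \frac{1}{7}n\|\theta^*\|^2$ for all $\theta$ in the ball and all $j$ (which is exactly what the assumption $n - 64\log m \gtrsim d + \log(1/\delta)$ buys, via a union bound over the $m$ blocks), the factor $1-\tanh^2(\cdot)$ in the derivative is $e^{-\Omega(n)}$, so the \emph{effective} Lipschitz constant is exponentially small and a constant-resolution net (or no net at all) suffices. You need to state and prove that uniform saturation event explicitly rather than invoking worst-case $1$-Lipschitzness; once you do, the remaining $\theta$-independent part $\frac{1}{mn}\sum_j Z^j\cdot(\pm 1)$ concentrates exactly as you describe and the net over $\theta$ becomes unnecessary.
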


\begin{proof}
We defer the proof to Appendix \ref{proof: thm generalization}.
\end{proof}

Let us provide a useful implication of Theorem \ref{thm: generalization}. Assume the signal-to-noise ratio is a constant larger than $1$ and the total number of samples are at least $mn = \Omega(d + \log(1/\delta))$. Moreover, suppose that the number of nodes is at most $m = \exp(o(n))$, for instance, it grows at a rate polynomial in $n$. Now take the accuracy 
\begin{align}
    \varepsilon_\ell =
    \sqrt{\| \theta^* \|^2 + \sigma^2}  \sqrt{\frac{d + \log(1/\delta)}{mn}},
\end{align}
which is particularly of our interest in this paper. This pick of the accuracy lower bound yields that for sufficiently large $n$, the expected EM update is contractive, i.e. $\kappa(\varepsilon_\ell) < 1$. Now, we denote by  $\varepsilon^{{\normalfont \text{unif}}}_\ell$ the smallest scalar for which
\begin{align}
    \sup_{\theta \in \mymathbb{Sh}(\varepsilon_\ell, \frac{1}{14} \| \theta^* \|;\theta^*)} \| M_m(\theta) - M(\theta) \|
    \leq
    \varepsilon^{{\normalfont \text{unif}}}_\ell
\end{align}
with probability at least $1-\delta$. As a result of Theorem \ref{thm: generalization}, we have with high probability that the supermom generalization gap $\| M_m(\theta) - M(\theta) \|$ over the spherical shell $\theta \in \mymathbb{Sh}(\varepsilon_\ell, \| \theta^* \|/14;\theta^*)$ is at most $\varepsilon^{{\normalfont \text{unif}}}_\ell \leq C_{\varepsilon} \varepsilon_\ell$ for a constant $C_{\varepsilon} \geq 1$. To put it differently, for any parameter $\theta$ in a ball around $\theta^*$ with $\| \theta - \theta^* \| \leq \| \theta^* \|/14$, if $\| \theta - \theta^* \| \leq \varepsilon_\ell$, then $\theta$ is already a fairly accurate estimate of $\theta^*$. Otherwise, Theorem \ref{thm: generalization} guarantees that the generalization error of the empirical EM update is with high probability bounded by a constant multiplicative factor of $\varepsilon_\ell$.

\section{Main Results on Sample-based EM Algorithm}

Having laid out the main two components of our analysis in Theorems \ref{thm: FOS} and \ref{thm: generalization}, we are ready to formally state the main result of the paper.

\begin{theorem}[Main] \label{thm: main}
Consider the empirical EM update \eqref{eq: emp M} with $\SNR \geq 4$ and any tolerance probability $\delta \in (0,1)$ and suppose that the initialization $\theta_0$ is in $\mymathbb{B}(r;\theta^*)$ for $r=\| \theta^* \| /14$. Moreover, assume that $mn \geq 192^2 (d + \log(8/\delta))$ and $n \geq 64 \log(m) + 104(2d + \log(4/\delta))$  while $n$ is large enough that $\kappa(\varepsilon_\ell) \leq 1/2$, $\kappa(\varepsilon_\ell) \leq \exp(- C_{\kappa} n)$ for a constant $C_{\kappa}$ and $4 C_{\varepsilon} \varepsilon_\ell \leq r/2$. 
Then, after 
\begin{align}
    T
    =
    1
    +
    \frac{1}{2C_{\kappa} n} \log \bigg( mn \cdot \frac{1}{28 C_{\varepsilon}} \cdot \frac{\| \theta^* \|^2}{\| \theta^* \|^2 + \sigma^2} \cdot \frac{1}{d + \log(1/\delta)}\bigg)
\end{align}
iterations of Algorithm \ref{alg:emp-EM}, either
\vspace{-5pt}
\begin{enumerate}[label=(\roman*)]
    \item $\| \theta_t - \theta^* \| \leq \varepsilon_\ell$ for some iteration $t=0,1,\cdots,T$, or
    \item $\| \theta_T - \theta^* \| \leq 4 C_{\varepsilon} \varepsilon_\ell$ with probability at least $1 - \delta$.
\end{enumerate}
\end{theorem}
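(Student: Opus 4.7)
The plan is to couple the population contraction from Theorem \ref{thm: FOS} with the uniform generalization bound from Theorem \ref{thm: generalization} via the standard decomposition
$$\|\theta_{t+1} - \theta^*\| = \|M_m(\theta_t) - \theta^*\| \leq \|M(\theta_t) - \theta^*\| + \|M_m(\theta_t) - M(\theta_t)\|.$$
I would work throughout on the single high-probability event of Theorem \ref{thm: generalization}: with probability at least $1-\delta$,
$$\sup_{\theta \in \mymathbb{Sh}(\varepsilon_\ell, r;\theta^*)} \|M_m(\theta) - M(\theta)\| \leq C_\varepsilon \, \varepsilon_\ell,$$
where the absolute constant $C_\varepsilon$ absorbs the $\mathcal{O}(1+\kappa(\varepsilon_\ell))$ factor using the hypothesis $\kappa(\varepsilon_\ell) \leq 1/2$. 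The \emph{uniformity} of this bound is essential: it is paid for once at the outset and then controls every iterate simultaneously without blowing up the failure probability by a factor of $T$.

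Next I would run an induction to show that, as long as case (i) has not triggered, all iterates $\theta_0,\dots,\theta_T$ remain in $\mymathbb{B}(r;\theta^*)$. If $\theta_t\in\mymathbb{B}(r;\theta^*)$ and $\|\theta_t-\theta^*\|>\varepsilon_\ell$, then $\theta_t\in\mymathbb{Sh}(\varepsilon_\ell,r;\theta^*)$, and Theorem \ref{thm: FOS} together with the generalization event yields the one-step recursion
$$\|\theta_{t+1}-\theta^*\| \leq \kappa(\varepsilon_\ell)\,\|\theta_t-\theta^*\| + C_\varepsilon\,\varepsilon_\ell \leq \tfrac{1}{2}r + \tfrac{1}{8}r = \tfrac{5}{8}r \leq r,$$
where I used $\kappa(\varepsilon_\ell)\leq 1/2$ and $4 C_\varepsilon\varepsilon_\ell\leq r/2$. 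Thus the ball is invariant. If at some $t\leq T$ the iterate enters the inner $\varepsilon_\ell$-ball, we are in case (i) and the analysis terminates there; otherwise the recursion above applies at every step $t=0,\dots,T-1$.

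Assuming case (i) never fires, unrolling the recursion as a geometric series gives
$$\|\theta_T-\theta^*\| \leq \kappa(\varepsilon_\ell)^T\,\|\theta_0-\theta^*\| + \frac{C_\varepsilon\,\varepsilon_\ell}{1-\kappa(\varepsilon_\ell)} \leq \kappa(\varepsilon_\ell)^T\, r + 2 C_\varepsilon\,\varepsilon_\ell.$$
To conclude case (ii) it suffices to choose $T$ so that $\kappa(\varepsilon_\ell)^T r \leq 2 C_\varepsilon\,\varepsilon_\ell$. Using $\kappa(\varepsilon_\ell)\leq e^{-C_\kappa n}$, this rearranges to $T \geq \frac{1}{C_\kappa n}\log\!\big(\frac{r}{2 C_\varepsilon\varepsilon_\ell}\big)$. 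Plugging in $r=\|\theta^*\|/14$ and the explicit form $\varepsilon_\ell^2=(\|\theta^*\|^2+\sigma^2)(d+\log(1/\delta))/(mn)$, taking a square root inside the log, and collecting constants recovers exactly the iteration count displayed in the theorem statement; the leading ``$1+$'' is a harmless safety margin/ceiling term. On this event $\|\theta_T-\theta^*\|\leq 4 C_\varepsilon\varepsilon_\ell$, which is case (ii).

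The principal delicate point is the dichotomy between cases (i) and (ii): the one-step recursion only improves the error when the current iterate sits in the shell $\mymathbb{Sh}(\varepsilon_\ell,r;\theta^*)$, because both Theorem \ref{thm: FOS} (which requires $\|\theta-\theta^*\|\geq\varepsilon_\ell$ to give a useful contraction factor) and Theorem \ref{thm: generalization} (whose uniform bound is stated on the same shell) degrade inside the inner ball. Handling this cleanly is exactly the role of the either/or conclusion: the moment an iterate drops below $\varepsilon_\ell$ I declare success under case (i) and stop invoking the recursion, so I never need to propagate bounds through the region where the population EM no longer visibly contracts. A secondary check is that the parameter regime assumed in the theorem ($mn$ and $n$ both sufficiently large relative to $d+\log(1/\delta)$, with $m\leq e^{o(n)}$ implicit through $n\geq 64\log m+\cdots$) is exactly what is needed to simultaneously enforce $\kappa(\varepsilon_\ell)\leq 1/2$, $\kappa(\varepsilon_\ell)\leq e^{-C_\kappa n}$, and $4 C_\varepsilon\varepsilon_\ell\leq r/2$, so these three hypotheses of the theorem are consistent and not vacuous.
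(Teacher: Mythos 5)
Your proposal is correct and follows essentially the same route as the paper's proof: the same decomposition $\|\theta_{t+1}-\theta^*\|\leq\|M(\theta_t)-\theta^*\|+\|M_m(\theta_t)-M(\theta_t)\|$, the same induction keeping iterates in $\mymathbb{B}(r;\theta^*)$ conditioned on the single uniform generalization event, the same geometric-series unrolling, and the same balancing of $\kappa(\varepsilon_\ell)^T r$ against $2C_\varepsilon\varepsilon_\ell$ to extract $T$. Your handling of the case (i)/(ii) dichotomy and of the role of uniformity in avoiding a union bound over iterations matches the paper's argument.
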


\begin{remark}
The result of Theorem \ref{thm: main} implies the following remarks. Let the empirical EM (Algorithm \ref{alg:emp-EM}) be initialized with $\theta_0$ where $\| \theta_0 - \theta^* \| \leq \| \theta^* \|/14$. In addition, consider the C-MLR model in \eqref{eq: emp-C-MLR model} with a constant SNR larger than $4$ where $m$ and $n$ are such that $mn \geq \Omega(d + \log(1/\delta))$ and $n \geq \Omega(\log(m) + d + \log(1/\delta))$, that is, $m$ grows at a rate no greater than $e^{o(n)}$. Then, Theorem \ref{thm: main} implies that for sufficiently large $n$ and after 
\begin{align}
    T
    =
    \ccalO(1)
    +
    \frac{1}{n} \log \bigg( \frac{mn}{d} \cdot\frac{\| \theta^* \|^2}{\| \theta^* \|^2 + \sigma^2}\bigg) \cdot \ccalO(1)
    =
    \ccalO(1)
\end{align}
iterations, either $\| \theta_t - \theta^* \| \leq \varepsilon_\ell$ for some iteration $t=0,1,\cdots,T$; or otherwise, 
\begin{align}
    \| \theta_T - \theta^* \|
    \leq
    \ccalO(\varepsilon_\ell)
    =
    \sqrt{\| \theta^* \|^2 + \sigma^2}  \sqrt{\frac{d + \log(1/\delta)}{mn}} \cdot \ccalO(1),
\end{align}
with probability at least $1 - \delta$. Note that since $m \leq e^{o(n)}$, then the iteration complexity is indeed bounded by a constant, that is, $T = \ccalO(1 + 1/n \cdot \log(mn/d)) = \ccalO(1)$.
\end{remark}

\begin{remark}
We would like to particularly highlight the fact that implications of the above theorem are two-folded. Theorem \ref{thm: main} shows that if the EM method in Algorithm \ref{alg:emp-EM} is applied to the $mn$ samples generated by the C-MLR while honoring the underlying structure (i.e. shared latent variables for samples of any node), after only a constant number of iterations independent of the number of samples, the statistical accuracy $\ccalO(\sqrt{d/(mn)})$ is attained with high probability. On the one hand and regarding the iteration complexity, this is a significant improvement over the benchmark described in Section \ref{sec: benchmark} where the iteration complexity grows logarithmically with the number of samples. On the other hand, Theorem \ref{thm: main} guarantees that  the statistical accuracy  $\ccalO(\sqrt{d/(mn)})$ is indeed achievable by the same EM algorithm.
\end{remark}

\subsection{Proof of Theorem \ref{thm: main}} \label{proof: thm main}

As mentioned in the theorem's statement, suppose that Algorithm \ref{alg:emp-EM} is initialized with $\theta_0$ such that $\| \theta_0 - \theta^* \| \leq r = \| \theta^* \|/14$ and consider any iteration $t=0,1,\cdots$. We can write that 
\begin{align} \label{eq: t+1 to t}
    \| \theta_{t+1} - \theta^* \|
    =
    \| M_m(\theta_t) - \theta^* \| \leq
    \Vert M(\theta_t) - \theta^* \Vert
    +
    \Vert M_m(\theta_t) - M(\theta_t) \Vert.
\end{align}
Assume that for all iterates $0 \leq k \leq t$ we have $\| \theta_k - \theta^* \| > \varepsilon_\ell$, otherwise the theorem's first claim is concluded. Then from Theorem \ref{thm: FOS}, for large enough $n$, we have 
\begin{align}
    \Vert M(\theta_t) - \theta^* \Vert
    \leq
    \kappa(\varepsilon_\ell) \cdot \Vert \theta_t - \theta^* \Vert,
    \quad \text{for} \quad
    \kappa(\varepsilon_\ell)
    =
    \big(\| \theta^* \| + \sigma \big)
    \Big( \SNR + \frac{1}{n \varepsilon_\ell} \Big) \exp \left( -n \cdot C( \SNR) \right).
\end{align}
In particular, note that
\begin{align}
    \frac{1}{n \varepsilon_\ell}
    =
    \frac{1}{n}\bigg(\sqrt{\| \theta^* \|^2 + \sigma^2}  \sqrt{\frac{d + \log(1/\delta)}{mn}} \, \bigg)^{-1}
    =
    \ccalO \bigg(\sqrt{\frac{m}{n}}\,\bigg),
\end{align}
and since $m$ grows at a rate at most $m = \exp(o(n))$, there exists a constant $C_{\kappa}$ that for large enough $n$, we have $\kappa(\varepsilon_\ell) \leq \exp(- C_{\kappa} n)$ and $\kappa(\varepsilon_\ell) \leq 1/2$. 

In the course of the proof, we show by induction that the iterates remain in the $r$-neighbourhood of $\theta^*$. Assume that  for all iterates $0 \leq k \leq t$ we have $\| \theta_k - \theta^* \| \leq r$ and therefore, $ \Vert M_m(\theta_t) - M(\theta_t) \Vert \leq \varepsilon^{{\normalfont\text{unif}}}_\ell$ with probability at least $1-\delta$. Plugging in \eqref{eq: t+1 to t} we have that with probability at least $1-\delta$
\begin{align}  \label{eq: t+1 to t 2}
    \| \theta_{t+1} - \theta^* \|
    \leq
    e^{- C_{\kappa} n} \| \theta_{t} - \theta^* \|
    +
    \varepsilon^{{\normalfont\text{unif}}}_\ell
\end{align}
Note that the above inequality also implies that 
$
    \| \theta_{t+1} - \theta^* \|
    \leq
    r/2
    +
    r/2
    =
    r
$, 
where we used the fact that for large enough $n$, we have $\kappa(\varepsilon_\ell) \leq 1/2$. This concludes the induction argument described before, that is for any $t$, if  $\| \theta_k - \theta^* \| > \varepsilon_\ell$ for all $0 \leq k \leq t$, then with probability at least $1-\delta$, we have that $\| \theta_k - \theta^* \| \leq r$ for all $0 \leq k \leq t$. Now, consider the last iterate $T$ and assume that $\| \theta_t - \theta^* \| > \varepsilon_\ell$ for all $0 \leq t \leq T$. We condition the rest of the analysis on the event $\{\Vert M_m(\theta_t) - M(\theta_t) \Vert \leq \varepsilon^{{\normalfont\text{unif}}}_\ell \text{ for all } t=0,\cdots,T-1\}$ which happens with probability at least $1-\delta$. Repeating the argument yielding to \eqref{eq: t+1 to t 2} implies that
\begin{align} 
    \| \theta_{T} - \theta^* \|
    \leq
    e^{- C_{\kappa} n T} \| \theta_0 - \theta^* \|
    +
    \sum_{t=0}^{T} \Big(\frac{1}{2}\Big)^t
    \varepsilon^{{\normalfont\text{unif}}}_\ell
    \leq
    e^{- C_{\kappa} n T} \frac{\|\theta^* \|}{14}
    +
    2
    C_{\varepsilon} \varepsilon_\ell.
\end{align}
Balancing the two terms above yields that after $T$ iterations for
\begin{align} 
    T
    =
    \frac{1}{C_{\kappa} n} \log \Big(\frac{\| \theta^* \|}{28 C_{\varepsilon} \varepsilon_\ell}\Big)
    =
    \frac{1}{2C_{\kappa} n} \log \bigg( mn \! \cdot \! \frac{1}{28 C_{\varepsilon}} \! \cdot \! \frac{\| \theta^* \|^2}{\| \theta^* \|^2 + \sigma^2} \! \cdot \! \frac{1}{d + \log(1/\delta)}\bigg),
\end{align}
we have with probability at least $1-\delta$ that
\begin{align} 
    \| \theta_T - \theta^* \| 
    \leq 
    4 C_{\varepsilon} \varepsilon_\ell
    =
    4  C_{\varepsilon} \sqrt{\| \theta^* \|^2 + \sigma^2}  \sqrt{\frac{d + \log(1/\delta)}{mn}}
    =
    \sqrt{\| \theta^* \|^2 + \sigma^2}  \sqrt{\frac{d + \log(1/\delta)}{mn}} \cdot \ccalO(1).
\end{align}
Note that Algorithm \ref{alg:emp-EM} has to iterate at least for one iteration and since $m = e^{o(n)}$, therefore we can write that $T=\ccalO(1 + 1/n \cdot \log(mn/d)) = \ccalO(1)$.

\section{Conclusion}

Data heterogeneity is a major challenge in scaling up distributed learning frameworks such as federated learning. However, there exist underlying structures in the data generation model of such paradigms that can be employed. In this paper, we focus on a particular model of two-component  mixture of linear regressions where $m$ batches of samples each containing $n$ samples with identical latent variable are available. Expectation-Maximization is a popular method to estimate parameters of models with latent variables, while its theoretical analysis is typically complicated. We provide optimization and generalization guarantees for EM algorithm on clustered samples which enables us to characterize its iteration complexity to estimate he true parameters. An interesting follow-up of our work is to implement the EM algorithm in a distributed fashion which is aligned with modern applications such as federated learning. While new challenges such as consensus of local estimates arise, we believe that our techniques and analysis in this paper will be highly applicable.

\section*{Acknowledgments}
This work was supported, in part, by MIT-DSTA grant 031017-00016 and the MIT-IBM Watson
AI Lab.

\bibliographystyle{plainnat}
\bibliography{biblio}

\clearpage
\newpage

\appendix

\section{Proof of Theorem \ref{thm: FOS}} \label{sec: proof thm FOS}

We first show the first-order stability of the population $Q$-function \eqref{eq: pop-Q} and then employ the result of Theorem \ref{thm: M contractive} to conclude the contractive property of the operator $M$. As we will show in the proof of Proposition \ref{prop: population EM}, the gradient of the function $Q(\theta' | \theta)$ (with respect to $\theta'$) is as follows,
\begin{align} \label{eq: Q grd 1}
    \gr Q(\theta' | \theta)
    =
    - \frac{n}{\sigma^2} \theta' 
    +
    \E\bigg[ \frac{1}{\sigma^2} \sum_{i=1}^{n} X_i Y_i \tanh\bigg( \frac{1}{\sigma^2} \sum_{i=1}^{n} \langle X_i, \theta\rangle Y_i \bigg) \bigg],
\end{align}
where the expectation is over i.i.d. feature vectors $X_i \sim \ccalN(0,I_d)$ and response variables $Y_i = \langle X_i, \theta^* \rangle + \epsilon_i$ with i.i.d. Gaussian noises $\epsilon_i \sim \ccalN(0, \sigma^2)$ for $i \in \{1,\cdots,n\}$. To ease the presentation, we use the following short-hand notation throughout the paper,
\begin{align}
    Z
    \coloneqq
    \sum_{i=1}^{n} X_i Y_i.
\end{align}
Therefore, we can rewrite the gradient of the $Q$-functions as follows
\begin{align} \label{eq: Q grd 2}
    \gr Q(\theta' | \theta)
    &=
    - \frac{n}{\sigma^2} \theta' 
    +
    \E\bigg[ \frac{1}{\sigma^2} \sum_{i=1}^{n} X_i Y_i \tanh\Big( \frac{1}{\sigma^2} \langle Z, \theta\rangle \Big) \bigg] \\
    &=
    - \frac{n}{\sigma^2} \theta' 
    +
    \frac{n}{\sigma^2}  \E\bigg[ X_1 Y_1 \tanh\Big( \frac{1}{\sigma^2} \langle Z, \theta \rangle \Big) \bigg],
\end{align}
where we used the fact that the expectation in \eqref{eq: Q grd 1} is symmetric with respect to indices $i \in [n]$. Now, we plug in $\theta, M(\theta)$ and $\theta^*$ in \eqref{eq: Q grd 2} and write
\begin{align} \label{eq: FOS 1}
    &\quad
    \norm{ \gr Q(M(\theta)|\theta^*) - \gr Q(M(\theta)|\theta)} \\
    &=
    \frac{n}{\sigma^2} \bigg\| \E \bigg[ X_1 Y_1 \bigg(\tanh \Big( \frac{1}{\sigma^2} \langle Z, \theta \rangle \Big) - \tanh \Big(\frac{1}{\sigma^2} \langle Z, \theta^* \rangle \Big)\bigg) \bigg] \bigg\| \\
    &=
    \frac{n}{\sigma^2} \max_{\beta: \Vert \beta \Vert = 1} \E \bigg[ \langle X_1, \beta \rangle Y_1 \bigg(\tanh \Big( \frac{1}{\sigma^2} \langle Z, \theta \rangle\Big) - \tanh \Big(\frac{1}{\sigma^2} \langle Z, \theta^* \rangle \Big)\bigg)\bigg] \\
    &\leq
    \frac{n}{\sigma^2} \max_{\beta: \Vert \beta \Vert = 1} \bigg| \E \bigg[ \langle X_1, \beta \rangle Y_1 \bigg(\tanh \Big( \frac{1}{\sigma^2} \langle Z, \theta \rangle \Big) - \tanh \Big(\frac{1}{\sigma^2} \langle Z, \theta^* \rangle \Big)\bigg)\bigg] \bigg| \\
    &\leq
    \frac{n}{\sigma^2} \max_{\beta: \Vert \beta \Vert = 1} \sqrt{ \E [ \langle X_1, \beta \rangle^2 Y_1^2]} 
    \sqrt{T_1},
\end{align}
 where in the last step above, we used Cauchy–Schwarz inequality and the following short-hand notation,
\begin{align}
    T_1
    \coloneqq
    \E \bigg[ \bigg(\tanh \Big( \frac{1}{\sigma^2} \langle Z, \theta \rangle \Big) - \tanh \Big(\frac{1}{\sigma^2} \langle Z, \theta^* \rangle \Big)\bigg)^2 \bigg].
\end{align}
In the following, we bound both terms in \eqref{eq: FOS 1}, starting with the first term. According to the regression model $Y_1 = \langle X_1, \theta^* \rangle + \epsilon_1$, we can write for any unit-norm $\beta$ that
\begin{align}
    \E [ \langle X_1, \beta \rangle^2 Y_1^2]
    &=
    \E[ \langle X_1, \beta \rangle^2 \langle X_1, \theta^* \rangle^2]
    +
    \E[\langle X_1, \beta \rangle^2 \epsilon_1^2]\\
    & \overset{(a)}{\leq}
    3 \| \beta \|^2 \| \theta^* \|^2 
    +
    \sigma^2 \| \beta \|^2 \\
    &=
    3 \| \theta^* \|^2 + \sigma^2,
\end{align}
where in $(a)$ we used Lemma 5 from \cite{balakrishnan2017statistical} which shows that for Gaussian vector $X_1 \sim \ccalN(0,I_d)$ and any two fixed vectors $\beta, \theta$, we have $\E[ \langle X_1, \beta \rangle^2 \langle X_1, \theta \rangle^2] \leq 3 \| \beta \|^2 \| \theta \|^2 $. Therefore,
\begin{align}
    \max_{\beta: \Vert \beta \Vert = 1} \sqrt{ \E [ \langle X_1, \beta \rangle^2 Y_1^2]}
    \leq
    \sqrt{3} \| \theta^* \| + \sigma.
\end{align}

Next, we upper bound the second terms in \eqref{eq: FOS 1}, that is $T_1$. We begin by defining the following three good events for a given $\theta \in \mathbb{B}(\alpha \| \theta^* \|;\theta^*)$ 
\begin{gather}
    \ccalE_1 = 
    \left\{
    \langle Z, \theta \rangle \geq \frac{n}{4}(1-\alpha)\| \theta^* \|^2
    \right\},
    \\
    \ccalE_2 = 
    \left\{
    \langle Z, \theta^* \rangle \geq \frac{n}{4}\| \theta^* \|^2
    \right\},
    \\
    \ccalE_3 = 
    \left\{
    |\langle Z, \theta \rangle - \langle Z, \theta^* \rangle| 
    \leq 
    3n \| \theta^* \| \| \theta - \theta^* \|
    \right\},
\end{gather}
and letting $\ccalE$ denote their intersection, that is, $\ccalE \coloneqq \ccalE_1 \cap \ccalE_2 \cap \ccalE_3$. Now, we can write that
\begin{align} \label{eq: T1 1}
    T_1
    &=
    \E \bigg[ \bigg(\tanh \Big( \frac{1}{\sigma^2} \langle Z, \theta \rangle \Big) 
    -
    \tanh \Big(\frac{1}{\sigma^2} \langle Z, \theta^* \rangle \Big)\bigg)^2 \bigg] \\
    &\leq
    \E \bigg[ \bigg(\tanh \Big( \frac{1}{\sigma^2} \langle Z, \theta \rangle \Big) 
    -
    \tanh \Big(\frac{1}{\sigma^2} \langle Z, \theta^* \rangle \Big)\bigg)^2  \Big| \, \ccalE \bigg] \\
    &\quad +
    \E \bigg[ \bigg(\tanh \Big( \frac{1}{\sigma^2} \langle Z, \theta \rangle \Big) 
    -
    \tanh \Big(\frac{1}{\sigma^2} \langle Z, \theta^* \rangle \Big)\bigg)^2 \Big| \, \ccalE^c \bigg] \cdot
    \Prob(\ccalE^c).
\end{align}
The first term above can be bounded as follows,
\begin{align} \label{eq: T1 2}
    &\quad
    \E \bigg[ \bigg(\tanh \Big( \frac{1}{\sigma^2} \langle Z, \theta \rangle \Big) 
    -
    \tanh \Big(\frac{1}{\sigma^2} \langle Z, \theta^* \rangle \Big)\bigg)^2  \Big| \, \ccalE \bigg] \\
    &=
    \E \bigg[
    | \langle Z, \theta \rangle - \langle Z, \theta^* \rangle|^2
    \bigg(\frac{\tanh \left(\frac{1}{\sigma^2} \langle Z, \theta \rangle \right) 
    -
    \tanh \left(\frac{1}{\sigma^2} \langle Z, \theta^* \rangle\right)}{ \langle Z, \theta \rangle - \langle Z, \theta^* \rangle}\bigg)^2 \Big| \, \ccalE \bigg] \\
    &\overset{(a)}{\leq}
    \frac{1}{\sigma^2} \E \bigg[
    |\langle Z, \theta \rangle - \langle Z, \theta^* \rangle|^2
    \bigg(1 - \tanh^2 \Big(\frac{1}{\sigma^2} \min \big\{\langle Z, \theta \rangle, \langle Z, \theta^* \rangle \big\}\Big) \bigg)^2 \Big| \, \ccalE \bigg] \\
    &\leq
    9 n^2 \frac{\| \theta^* \|^2}{\sigma^2} \| \theta - \theta^* \|^2
    \bigg(1 - \tanh^2 \Big(\frac{n}{4}(1-\alpha) \frac{\| \theta^* \|^2}{\sigma^2} \Big)  \bigg)^2 \\
    &\leq
    144 n^2 \SNR^2 \| \theta - \theta^* \|^2
    \exp \left(-n(1-\alpha) \SNR^2 \right),
\end{align}
where in $(a)$ we used the following inequality (stated and proved in Lemma \ref{lemma: tanh slope}),
\begin{align}
    \frac{\tanh(x_2) - \tanh(x_1)}{x_2 - x_1}
    \leq
    \max\{1-\tanh^2(x_1), 1 - \tanh^2(x_2)\},
    \quad
    \text{for all } x_1, x_2 \geq 0.
\end{align}
The second term in the RHS of \eqref{eq: T1 1} can be bounded as follows,
\begin{align} \label{eq: T1 3}
    &\quad
    \E \bigg[ \bigg(\tanh \Big( \frac{1}{\sigma^2} \langle Z, \theta \rangle \Big) 
    -
    \tanh \Big(\frac{1}{\sigma^2} \langle Z, \theta^* \rangle \Big)\bigg)^2 \Big| \, \ccalE^c \bigg] \cdot
    \Prob(\ccalE^c) \\
    &\leq
    4\Prob(\ccalE^c) \\
    &\overset{(b)}{\leq}
    8 \exp\bigg( -\frac{n}{32} \Big( \frac{1-\alpha}{1+\alpha}\Big)^2\bigg)
    +
    8 \exp\bigg( -  n \min\bigg\{ \frac{1}{16} \Big( \frac{1 - \alpha}{1 + \alpha}\Big)^2 \SNR^2
    ,
    \frac{1}{8} \Big( \frac{1 - \alpha}{1 + \alpha}\Big) \SNR \bigg\}\bigg) \\
    &\quad +
    8 \exp\Big( -\frac{n}{32}\Big)
    +
    8 \exp\Big( -  n \min\Big\{ \frac{1}{16} \SNR^2
    ,
    \frac{1}{8} \SNR \Big\}\Big) \\
    &\quad +
    8 \exp\Big( -\frac{n}{8} \Big)
    +
    8 \exp\Big( -  n \min\Big\{ \SNR^2
    ,
    \frac{\SNR}{2} \Big\}\Big)\\
    &\leq
    \exp(- 2 c_1 n),
\end{align}
for a constant $c_1$ depending on $\alpha$ and $\SNR$. In inequality $(b)$ above, we used the high probability of the good event $\ccalE$ stated and proved in Lemma \ref{lemma: events E1-3}. Putting \eqref{eq: T1 2} and \eqref{eq: T1 3} in \eqref{eq: T1 1} yields that
\begin{align}
    \frac{T_1}{\| \theta - \theta^* \|^2}
    &=
    \frac{1}{\| \theta - \theta^* \|^2}
    \E \bigg[ \bigg(\tanh \left( \frac{1}{\sigma^2} \langle Z, \theta \rangle \right) 
    -
    \tanh \Big(\frac{1}{\sigma^2} \langle Z, \theta^* \rangle \Big)\bigg)^2 \bigg] \\
    &\leq
    144  n^2 \SNR^2
    \exp \Big(-n(1-\alpha)\SNR^2 \Big)
    +
    \frac{\exp(- 2 c_1 n)}{\| \theta - \theta^* \|^2} \\
    &\leq
    144 n^2 \SNR^2 \exp \Big(-n(1-\alpha)\SNR^2 \Big)  
    +
    \varepsilon^{-2} \exp(- 2 c_1 n),
\end{align}
where we assume that $\| \theta - \theta^* \| \geq \varepsilon$. Therefore,
\begin{align}
    \frac{\sqrt{T_1}}{\| \theta - \theta^* \|}
    &\leq
    12 \SNR \cdot n \exp \Big(-\frac{n}{2} (1-\alpha)\SNR^2 \Big)  
    +
    \varepsilon^{-1} \exp(- c_1 n)
\end{align}
Putting all together in \eqref{eq: FOS 1}, we have the following FOS satisfied
\begin{align}
    \Vert \gr Q(M(\theta)|\theta^*) - \gr Q(M(\theta)|\theta) \Vert
    \leq
    \gamma \Vert \theta - \theta^* \Vert,
\end{align}
for every $\theta$ in $\mymathbb{B}(\alpha \| \theta^* \|;\theta^*)$ such that $\| \theta - \theta^* \| \geq \varepsilon$. Here, the $\gamma$ parameter is
\begin{align}
    \gamma
    &=
    \frac{1}{\sigma^2} \bigg( 12 n \frac{\| \theta^* \|}{\sigma} \exp \Big(-\frac{n}{2} (1-\alpha)\SNR^2\Big)  
    +
    \varepsilon^{-1} \exp(- c_1 n) \bigg)
    \cdot
    ( \sqrt{3} \| \theta^* \| + \sigma ) \\
    &\leq
    \frac{1}{\sigma^2} (\| \theta^* \| + \sigma )
    \Big( n \cdot \SNR + \frac{1}{\varepsilon} \Big) \exp ( - n \cdot C(\alpha, \SNR) ),
\end{align}
and for $n \geq N_0(\alpha, \SNR)$ where both $N_0(\alpha, \SNR)$ and $C(\alpha, \SNR)$ are constants depending on $c_1$, $\alpha$ and $\SNR$ (and therefore depending on $\alpha$ and $\SNR$). Moreover, $Q(\cdot | \theta^*)$ is $\lambda$--strongly concave with $\lambda = n/\sigma^2$. Following the proof of Theorem 1 in \cite{balakrishnan2017statistical}, it can be shown that for every $\theta$ in $\mathbb{B}(\alpha \| \theta^* \|;\theta^*)$ with $\| \theta - \theta^* \| \geq \varepsilon$, we have
\begin{align}
    \Vert M(\theta) - \theta^* \Vert
    \leq
    \frac{\gamma}{\lambda} \Vert \theta - \theta^* \Vert,
\end{align}
where in our case
\begin{align}
    \frac{\gamma}{\lambda}
    \leq
    \kappa
    \coloneqq
    ( \| \theta^* \| + \sigma)
    \Big( \SNR + \frac{1}{n \varepsilon} \Big) \exp ( - n \cdot C(\alpha, \SNR) ),
\end{align}
which concludes Theorem \ref{thm: FOS}'s claim. It is worth noting that the constraint  $\| \theta - \theta^* \| \geq \varepsilon$ in our case does not affect the conclusion of Theorem 1 in \cite{balakrishnan2017statistical}.

\subsection{Useful lemmas and proofs}

\begin{lemma} \label{lemma: events E1-3}
Assume that $\|\theta - \theta^* \| \leq \alpha \| \theta^* \|$ for some $0 \leq \alpha < 1$ and let $\SNR = \| \theta^* \| / \sigma$ denote the SNR. Then, the following three events are high probability,
\begin{gather}
    \ccalE_1 = 
    \left\{
    \langle Z, \theta \rangle \geq \frac{n}{4}(1-\alpha)\| \theta^* \|^2
    \right\},
    \\
    \ccalE_2 = 
    \left\{
    \langle Z, \theta^* \rangle \geq \frac{n}{4}\| \theta^* \|^2
    \right\},
    \\
    \ccalE_3 = 
    \left\{
    |\langle Z, \theta \rangle - \langle Z, \theta^* \rangle| 
    \leq 
    3n \| \theta^* \| \| \theta - \theta^* \|
    \right\}.
\end{gather}
In particular, we have that
\begin{gather}
    \Prob(\ccalE_1)
    \geq 
    1 - 
    2 \exp\bigg( \! -\frac{n}{32} \Big( \frac{1-\alpha}{1+\alpha}\Big)^2\bigg)
    -
    2 \exp\bigg( \! -  n \min\bigg\{ \frac{1}{16} \Big( \frac{1 - \alpha}{1 + \alpha}\Big)^2 \SNR^2
    ,
    \frac{1}{8} \Big( \frac{1 - \alpha}{1 + \alpha}\Big) \SNR \bigg\}\bigg), \\
    \Prob(\ccalE_2)
    \geq 
    1 - 
    2 \exp\Big(  \!-\frac{n}{32}\Big)
    -
    2 \exp\bigg(  \!-  n \min\bigg\{ \frac{1}{16} \SNR^2
    ,
    \frac{1}{8} \SNR \bigg\}\bigg),\\
    \Prob(\ccalE_3)
    \geq 
    1 - 
    2 \exp\Big( \! -\frac{n}{8} \Big)
    -
    2 \exp\bigg(  \!-  n \min\bigg\{ \SNR^2
    ,
    \frac{\SNR}{2} \bigg\}\bigg).
\end{gather}
\end{lemma}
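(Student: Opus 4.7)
The strategy is to handle each of the three events independently by decomposing $\langle Z,v\rangle$, for the relevant direction $v\in\{\theta,\theta^*,\theta-\theta^*\}$, into a ``signal'' part and an ``independent noise'' part, and then applying Bernstein's inequality for sums of independent sub-exponential random variables to each part. Concretely, substituting $Y_i=\langle X_i,\theta^*\rangle+\epsilon_i$,
\begin{align}
\langle Z,v\rangle
= \sum_{i=1}^n \langle X_i,\theta^*\rangle\langle X_i,v\rangle
+ \sum_{i=1}^n \epsilon_i\langle X_i,v\rangle.
\end{align}
The first sum has mean $n\langle\theta^*,v\rangle$ and its summands are products of (possibly correlated) standard Gaussians, hence sub-exponential with norm $\lesssim\|\theta^*\|\|v\|$. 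The second sum has mean $0$ and its summands are products of independent Gaussians, sub-exponential with norm $\lesssim\sigma\|v\|$. In both cases Bernstein yields a two-regime tail $2\exp\bigl(-c\min(t^2/(n\nu^2),\,t/b)\bigr)$, which is exactly the shape of each bound in the lemma.

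For $\mathcal{E}_2$ I take $v=\theta^*$: the signal part becomes the (scaled) chi-squared sum $\sum\langle X_i,\theta^*\rangle^2$ with mean $n\|\theta^*\|^2$, controlled below $\frac{n}{2}\|\theta^*\|^2$ only on an event of probability $\leq 2\exp(-n/32)$; the noise part is bounded in magnitude by $\frac{n}{4}\|\theta^*\|^2$ on an event of probability $\geq 1-2\exp\bigl(-n\min(\SNR^2/16,\SNR/8)\bigr)$, where the $\SNR$ factors come from $\nu^2\lesssim\sigma^2\|\theta^*\|^2$ and $b\lesssim\sigma\|\theta^*\|$. A union bound produces the lower bound on $\Prob(\mathcal{E}_2)$. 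For $\mathcal{E}_1$ I take $v=\theta$: now $\langle\theta^*,\theta\rangle\geq(1-\alpha)\|\theta^*\|^2$, and the relevant sub-exponential scales pick up an extra $\|\theta\|\leq(1+\alpha)\|\theta^*\|$; choosing deviation targets $\frac{n}{2}\langle\theta^*,\theta\rangle$ for the signal part and $\frac{n}{4}(1-\alpha)\|\theta^*\|^2$ for the noise part produces precisely the $((1-\alpha)/(1+\alpha))^2$ and $\SNR$/$\SNR^2$ factors in the claimed bound.

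For $\mathcal{E}_3$ I take $v=\theta-\theta^*$: the signal part has mean $n\langle\theta^*,\theta-\theta^*\rangle$ of magnitude at most $n\|\theta^*\|\|\theta-\theta^*\|$, so allowing an additional deviation of $n\|\theta^*\|\|\theta-\theta^*\|$ on top of the mean—so that $|\langle Z,\theta-\theta^*\rangle|\leq 3n\|\theta^*\|\|\theta-\theta^*\|$—is generous: the Bernstein parameters are $\nu^2\lesssim\|\theta^*\|^2\|\theta-\theta^*\|^2$ and $b\lesssim\|\theta^*\|\|\theta-\theta^*\|$ for the signal, giving tail $2\exp(-n/8)$, and $\nu^2\lesssim\sigma^2\|\theta-\theta^*\|^2$ and $b\lesssim\sigma\|\theta-\theta^*\|$ for the noise, giving tail $2\exp\bigl(-n\min(\SNR^2,\SNR/2)\bigr)$. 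Union-bounding the two parts yields the claim.

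The main obstacle is purely bookkeeping: matching the explicit numerical constants in the exponents requires tracking the sub-exponential norm constants of products of Gaussians. One genuinely helpful structural remark is that $\theta$ is a \emph{fixed} (non-random) vector in $\mathbb{B}(\alpha\|\theta^*\|;\theta^*)$, so the summands stay independent and direct Bernstein suffices—no Hanson--Wright or uniform control over $\theta$ is needed here.
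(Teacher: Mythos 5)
Your proposal is correct and follows essentially the same route as the paper: the paper likewise decomposes $\langle Z, v\rangle$ into the signal sum $\sum_i \langle X_i,\theta^*\rangle\langle X_i, v\rangle$ and the noise sum $\sum_i \epsilon_i \langle X_i, v\rangle$, establishes sub-exponential parameters for each summand (its Lemma \ref{lemma: sub-exponential}, via the difference-of-squares trick for products of Gaussians), applies the Bernstein-type tail bound of Theorem \ref{thm: sub-ex concentration} with deviation thresholds proportional to $\langle\theta,\theta^*\rangle$, and union-bounds the two parts, obtaining $\ccalE_2$ as the special case $\theta=\theta^*$ and $\ccalE_3$ with $v=\theta-\theta^*$. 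Your observation that $\theta$ is fixed, so no uniform-in-$\theta$ control is needed, is also exactly how the paper proceeds.
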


\subsection{Proof of Lemma \ref{lemma: events E1-3}}
To prove the first two parts of the lemma, we employ the following result on the concentration of sub-exponential random variables.
\begin{lemma} \label{lemma: signal O(n) bound}
Consider the linear regression model $Y_i = \langle \theta^*, X_i \rangle + \epsilon_i$ where $X_i \sim \ccalN(0,I)$ and $\epsilon \sim \ccalN(0,\sigma^2)$ are independent and $i \in \{1,\cdots,n\}$. Also, assume that $\| \theta - \theta^*\| \leq \alpha \| \theta^* \|$ for some $0 \leq \alpha < 1$. Then, for any $n$,
\begin{align}
    \bigg| \sum_{i=1}^{n} \langle X_i, \theta\rangle \langle X_i, \theta^*\rangle - n \langle \theta, \theta^*\rangle \bigg|
    \leq
    \frac{n}{2} \langle \theta, \theta^*\rangle,
    \text{ w.p. at least }
    1 -  2 \exp\bigg( -\frac{n}{32} \Big( \frac{1-\alpha}{1+\alpha}\Big)^2 \bigg),
\end{align}
and
\begin{align}
    \bigg|\sum_{i=1}^{n} \langle X_i, \theta\rangle \epsilon_i \bigg|
    \leq
    \frac{n}{4}\langle \theta, \theta^*\rangle,
    \text{ w.p. at least }
    1 - 2 \exp\bigg( \! \! -  n \min\bigg\{ \frac{1}{16} \Big( \frac{1 - \alpha}{1 + \alpha}\Big)^2 \SNR^2
    ,
    \frac{1}{8} \Big( \frac{1 - \alpha}{1 + \alpha}\Big) \SNR \bigg\}\bigg),
\end{align}
where $\SNR \coloneqq \| \theta^* \| / \sigma$.
\end{lemma}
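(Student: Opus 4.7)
The plan is to reduce both claims to standard Bernstein concentration for sums of i.i.d. sub-exponential random variables, with the only nontrivial work being to track how the assumption $\|\theta-\theta^*\|\le\alpha\|\theta^*\|$ controls the sub-exponential norms relative to $\langle\theta,\theta^*\rangle$. From $\|\theta-\theta^*\|\le\alpha\|\theta^*\|$ I first record the deterministic inequalities
$(1-\alpha)\|\theta^*\|^{2}\le\langle\theta,\theta^*\rangle\le\|\theta\|\,\|\theta^*\|\le(1+\alpha)\|\theta^*\|^{2}$, which give the two ratios I will need: $\|\theta\|\,\|\theta^*\|/\langle\theta,\theta^*\rangle\le(1+\alpha)/(1-\alpha)$ and $\langle\theta,\theta^*\rangle/(\|\theta\|\sigma)\ge\frac{1-\alpha}{1+\alpha}\,\SNR$.

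For the first bound, set $W_i=\langle X_i,\theta\rangle\langle X_i,\theta^*\rangle$. Because $\langle X_i,\theta\rangle$ and $\langle X_i,\theta^*\rangle$ are sub-Gaussian with parameters $\|\theta\|$ and $\|\theta^*\|$, the product $W_i$ is sub-exponential with $\|W_i-\E W_i\|_{\psi_1}\le c\,\|\theta\|\,\|\theta^*\|$, and $\E W_i=\langle\theta,\theta^*\rangle$. Bernstein's inequality yields
$\Pr\!\bigl(\bigl|\sum_i(W_i-\E W_i)\bigr|>t\bigr)\le 2\exp\!\bigl(-c'\min(t^{2}/(nK^{2}),t/K)\bigr)$ with $K=c\|\theta\|\|\theta^*\|$. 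I then pick $t=\tfrac{n}{2}\langle\theta,\theta^*\rangle$ and observe $t\le nK$ (since $\langle\theta,\theta^*\rangle\le\|\theta\|\|\theta^*\|$), so the quadratic regime dominates. Substituting the ratio $\|\theta\|\|\theta^*\|/\langle\theta,\theta^*\rangle\le(1+\alpha)/(1-\alpha)$ reduces $t^{2}/(nK^{2})$ to $\tfrac{n}{32}\bigl(\tfrac{1-\alpha}{1+\alpha}\bigr)^{2}$ after matching Bernstein's absolute constants, giving exactly the stated probability.

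For the second bound, set $V_i=\langle X_i,\theta\rangle\epsilon_i$. By independence of $X_i$ and $\epsilon_i$, $V_i$ is centered sub-exponential with $\|V_i\|_{\psi_1}\le c\|\theta\|\sigma$. Bernstein again gives $\Pr\!\bigl(\bigl|\sum_i V_i\bigr|>t\bigr)\le 2\exp\!\bigl(-c'\min(t^{2}/(nK'^{2}),t/K')\bigr)$ with $K'=c\|\theta\|\sigma$. Taking $t=\tfrac{n}{4}\langle\theta,\theta^*\rangle$ and substituting the lower bound $\langle\theta,\theta^*\rangle/(\|\theta\|\sigma)\ge\tfrac{1-\alpha}{1+\alpha}\SNR$ produces the two exponents $\tfrac{n}{16}\bigl(\tfrac{1-\alpha}{1+\alpha}\bigr)^{2}\SNR^{2}$ and $\tfrac{n}{8}\bigl(\tfrac{1-\alpha}{1+\alpha}\bigr)\SNR$. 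Here, in contrast to part~(i), $\SNR$ is not bounded above, so both regimes can be binding depending on $\SNR$ and $\alpha$, which is precisely why the final bound appears as a $\min$ of two terms.

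The only real obstacle I anticipate is careful bookkeeping of the absolute constants in Bernstein's inequality so as to land on the exact fractions $\tfrac{1}{32},\tfrac{1}{16},\tfrac{1}{8}$ shown in the statement; this can be handled either by invoking a version of Bernstein with explicit constants (e.g.\ Vershynin's formulation) or, if a cleaner constant is desired, by going through the chi-squared decomposition $4\langle X_i,\theta\rangle\langle X_i,\theta^*\rangle=\langle X_i,\theta+\theta^*\rangle^{2}-\langle X_i,\theta-\theta^*\rangle^{2}$ and applying Laurent--Massart tail bounds to each $\chi^{2}_{n}$ summand separately. Everything else is routine.
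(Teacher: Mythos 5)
Your proposal is correct and follows essentially the same route as the paper: both reduce each claim to Bernstein-type concentration for sums of i.i.d.\ sub-exponential variables, choose the thresholds $t=\tfrac{n}{2}\langle\theta,\theta^*\rangle$ and $t=\tfrac{n}{4}\langle\theta,\theta^*\rangle$, and convert the exponents using exactly the two ratios $\langle\theta,\theta^*\rangle/(\|\theta\|\|\theta^*\|)\ge\tfrac{1-\alpha}{1+\alpha}$ and $\langle\theta,\theta^*\rangle/(\|\theta\|\sigma)\ge\tfrac{1-\alpha}{1+\alpha}\SNR$. The constant bookkeeping you defer is handled in the paper precisely via the difference-of-squares decomposition you mention as a fallback (with the vectors normalized so that the two squared Gaussians are independent), yielding the explicit parameters $\SubE(4\|\theta\|^2\|\theta^*\|^2,4\|\theta\|\|\theta^*\|)$ and $\SubE(\|\theta\|^2\sigma^2/2,\|\theta\|\sigma)$ that produce the stated fractions $\tfrac{1}{32},\tfrac{1}{16},\tfrac{1}{8}$.
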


\subsection{Proof of Lemma \ref{lemma: signal O(n) bound}}

Let us denote signal random variables $S_i \coloneqq \langle X_i, \theta\rangle \langle X_i, \theta^*\rangle$ where $\E[S_i] = \langle \theta, \theta^*\rangle \geq (1 - \alpha) \| \theta^* \|^2 > 0$ for each $i \in \{1,\cdots,n\}$. As shown in Lemma \ref{lemma: sub-exponential}, $S_i$s are i.i.d. sub-exponential, in particular, $S_i \sim \SubE(4 \| \theta \|^2 \| \theta^* \|^2, 4 \| \theta \| \| \theta^* \|)$. This yields that 
\begin{align} 
    \sum_{i=1}^{n} \langle X_i, \theta\rangle \langle X_i, \theta^*\rangle
    =
    \sum_{i=1}^{n} S_i
    \sim
    \SubE\left(4n \| \theta \|^2 \| \theta^* \|^2, 4\| \theta \| \| \theta^* \| \right),
\end{align}
and therefore for any $t \geq 0$, we have the following concentration of sum of $S_i$s around its mean value, that is,
\begin{align}
    \Prob\bigg( \bigg| \sum_{i=1}^{n} S_i - n \langle \theta, \theta^*\rangle \bigg| \geq nt \bigg)
    \leq
    2 \exp\bigg( -  \min\bigg\{ \frac{nt^2}{8 \| \theta \|^2 \| \theta^* \|^2}, \frac{nt}{8 \| \theta \| \| \theta^* \|} \bigg\}\bigg).
\end{align}
We pick $t = \langle \theta, \theta^*\rangle/2$ which yields that
\begin{align}
    \min\bigg\{ \frac{nt^2}{8 \| \theta \|^2 \| \theta^* \|^2}, \frac{nt}{8 \| \theta \| \| \theta^* \|} \bigg\}
    &=
    \min\bigg\{ \frac{n}{32} \bigg( \frac{\langle \theta, \theta^*\rangle}{\| \theta \| \| \theta^* \|}\bigg)^2
    ,
    \frac{n}{16} \bigg( \frac{\langle \theta, \theta^*\rangle}{\| \theta \| \| \theta^* \|}\bigg)\bigg\} \\
    &=
    \frac{n}{32} \bigg( \frac{\langle \theta, \theta^*\rangle}{\| \theta \| \| \theta^* \|}\bigg)^2\\
    &\geq
    \frac{n}{32} \Big( \frac{1-\alpha}{1+\alpha}\Big)^2.
\end{align}
Therefore, for any $n$ we have that
\begin{align} \label{eq: sumS}
    \Prob\bigg( \bigg| \sum_{i=1}^{n} S_i - n \langle \theta, \theta^*\rangle \bigg| \leq \frac{n}{2} \langle \theta, \theta^*\rangle \bigg)
    &\geq
    1 - 2 \exp\bigg( -\frac{n}{32} \left( \frac{1-\alpha}{1+\alpha}\right)^2\bigg).
\end{align}
Next, we define i.i.d. noise signals $N_i \coloneqq \langle X_i, \theta\rangle \epsilon_i$ for $i \in \{1,\cdots,n\}$ where we have $\E[N_i]=0$. As we show in Lemma \ref{lemma: sub-exponential},  
$N_i$s are sub-exponential random variables with parameters $N_i \sim \SubE(\| \theta \|^2 \sigma^2/2, \| \theta \| \sigma)$. Now, we can write concentration for sum of $N_i$s as follows. We have that
\begin{align} 
    \sum_{i=1}^{n} \langle X_i, \theta\rangle \epsilon_i
    =
    \sum_{i=1}^{n} N_i
    \sim
    \SubE\bigg(\frac{1}{2}n\| \theta \|^2 \sigma^2, \| \theta \| \sigma \bigg) \label{eq: noise SubE}
\end{align}
and therefore for any $t \geq 0$, we have
\begin{align}
    \Prob\bigg( \bigg| \sum_{i=1}^{n} N_i \bigg| \geq nt \bigg)
    \leq
    2 \exp\bigg( -  \min\bigg\{ \frac{nt^2}{\| \theta \|^2 \sigma^2}, \frac{nt}{2\| \theta \| \sigma} \bigg\}\bigg).
\end{align}
In particular, it yields for $t = \langle \theta, \theta^*\rangle/4$ that
\begin{align}
    \min\bigg\{ \frac{nt^2}{\| \theta \|^2 \sigma^2}, \frac{nt}{2\| \theta \| \sigma} \bigg\}
    &=
    \min\bigg\{ \frac{n}{16} \bigg( \frac{\langle \theta, \theta^*\rangle}{\| \theta \| \sigma}\bigg)^2
    ,
    \frac{n}{8} \bigg( \frac{\langle \theta, \theta^*\rangle}{\| \theta \| \sigma}\bigg) \bigg\} \\
    &\geq
    n \min\bigg\{ \frac{1}{16} \Big( \frac{1 - \alpha}{1 + \alpha}\Big)^2 \SNR^2
    ,
    \frac{1}{8} \Big( \frac{1 - \alpha}{1 + \alpha}\Big) \SNR \bigg\},
\end{align}
and consequently,
\begin{align} \label{eq: sumN}
    \Prob\bigg( \bigg| \sum_{i=1}^{n} N_i \bigg|
    \leq \frac{n}{4}\langle \theta, \theta^*\rangle \bigg)
    \geq
    1 - 2 \exp\bigg(\! -  n \min\bigg\{ \frac{1}{16} \Big( \frac{1 - \alpha}{1 + \alpha}\Big)^2 \SNR^2
    ,
    \frac{1}{8} \Big( \frac{1 - \alpha}{1 + \alpha}\Big) \SNR \bigg\}\bigg).
\end{align}
Putting the two high probability events in \eqref{eq: sumS} and \eqref{eq: sumN} implies that event $\ccalE_1$ holds, that is,
\begin{align} 
    \langle Z, \theta \rangle
    &=
    \sum_{i=1}^{n} \langle X_i, \theta \rangle Y_i\\
    &=
    \sum_{i=1}^{n} \langle X_i, \theta\rangle \langle X_i, \theta^*\rangle 
    +
    \sum_{i=1}^{n} \langle X_i, \theta\rangle \epsilon_i \\
    &\geq
    \frac{n}{4}\langle \theta, \theta^*\rangle \\
    &\geq
    \frac{n}{4}(1-\alpha)\| \theta^* \|^2,
\end{align}
with probability $\Prob(\ccalE_1)$ stated in the lemma. As a particular case, we set $\theta = \theta^*$ (and thus $\alpha=0$) in above and conclude the high probability of event $\ccalE_2$. Next, we move to show the high probability of event $\ccalE_3$.
We have that
\begin{align} 
    \langle Z, \theta \rangle - \langle Z, \theta^* \rangle
    =
    \sum_{i=1}^{n} \langle X_i, \theta - \theta^* \rangle Y_i
    =
    \sum_{i=1}^{n} \langle X_i, \theta - \theta^*\rangle \langle X_i, \theta^*\rangle 
    +
    \sum_{i=1}^{n} \langle X_i, \theta - \theta^*\rangle \epsilon_i.
\end{align}
From Lemma \ref{lemma: sub-exponential}, we have
\begin{align} 
    \sum_{i=1}^{n} \langle X_i, \theta - \theta^*\rangle \langle X_i, \theta^*\rangle 
    \sim
    \SubE(4 \| \theta^* \|^2 \| \theta - \theta^* \|^2, 4 \| \theta^* \| \| \theta - \theta^* \|).
\end{align}
Therefore, with probability at least $1 -  2 \exp(-n/8)$,
\begin{gather}
    \bigg| \sum_{i=1}^{n} \langle X_i, \theta - \theta^*\rangle \langle X_i, \theta^*\rangle - n \langle  \theta - \theta^*, \theta^*\rangle \bigg|
    \leq
    n \| \theta^* \| \| \theta - \theta^* \|,
\end{gather}
implying that
\begin{gather}
    \bigg| \sum_{i=1}^{n} \langle X_i, \theta - \theta^*\rangle \langle X_i, \theta^*\rangle \bigg|
    \leq
    n |\langle  \theta - \theta^*, \theta^*\rangle|
    +
    n \| \theta^* \| \| \theta - \theta^* \|
    \leq
    2n \| \theta^* \| \| \theta - \theta^* \|.
\end{gather}
On the other hand,
\begin{align} 
    \sum_{i=1}^{n} \langle X_i, \theta - \theta^*\rangle \epsilon_i
    \sim
    \SubE\left(\frac{1}{2}n\| \theta - \theta^* \|^2 \sigma^2, \| \theta - \theta^* \| \sigma \right)
\end{align}
Therefore,
\begin{gather}
    \bigg|\sum_{i=1}^{n} \langle X_i, \theta - \theta^*\rangle \epsilon_i \bigg|
    \leq
    n \| \theta - \theta^* \| \| \theta^* \|,
\end{gather}
with probability at least
\begin{align}
    1 - 2 \exp\bigg( -  n \min\left\{ \SNR^2
    ,
    \frac{\SNR}{2} \right\}\bigg).
\end{align}
Therefore,
\begin{align} 
    \Prob(\ccalE_3) 
    &=
    \Prob \left(\left\{
    |\langle Z, \theta \rangle - \langle Z, \theta^* \rangle| 
    \leq 
    3n \| \theta^* \| \| \theta - \theta^* \|
    \right\} \right) \\
    &\geq
    1 - 
    2 \exp\Big( -\frac{n}{8} \Big)
    -
    2 \exp\bigg( -  n \min\left\{ \SNR^2
    ,
    \frac{\SNR}{2} \right\}\bigg).
\end{align}

\begin{lemma} \label{lemma: sub-exponential}
Let $X \sim \ccalN(0, I_d)$ be Gaussian. Then, for any $u,v \in \reals^d$, $\langle X, u \rangle \langle X, v \rangle$ is sub-exponential $\SubE(\tau^2,b)$ with
\begin{align}
    \tau^2
    =
    4 \| u \|^2 \| v \|^2,
    \quad
    \text{and}
    \quad
    b
    =
    4 \| u \| \| v \|.
\end{align}
Also, assume that $\epsilon \sim \ccalN(0,\sigma^2)$ is independent of $X$. Then, $\langle X, u \rangle \epsilon \sim \SubE(\| u \|^2 \sigma^2/2, \| u \| \sigma)$.
\end{lemma}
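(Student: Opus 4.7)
The plan is to verify the sub-exponential moment generating function (MGF) condition directly in both parts, by reducing each random variable to a simple quadratic form in a Gaussian vector and then invoking the standard identity $\E[\exp(\lambda X^\top M X)] = \det(I - 2\lambda M)^{-1/2}$ together with an elementary $\log(1-x)$ inequality to extract the parameters $(\tau^2, b)$.

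For the product $\langle X, u \rangle \langle X, v \rangle$, the first step is to write it as the symmetric quadratic form $X^\top M X$ with $M = \tfrac{1}{2}(uv^\top + vu^\top)$. Since $M$ has rank at most two and acts nontrivially only on $\mathrm{span}(u,v)$, writing it in an orthonormal basis of that plane yields a $2 \times 2$ matrix whose characteristic polynomial gives the eigenvalues $\lambda_{1,2} = \tfrac{1}{2}(\langle u, v \rangle \pm \|u\| \|v\|)$. In particular $\mathrm{tr}(M) = \lambda_1 + \lambda_2 = \langle u, v \rangle$, which matches $\E[\langle X, u \rangle \langle X, v \rangle]$, while $\max_i |\lambda_i| \leq \|u\| \|v\|$ and $\lambda_1^2 + \lambda_2^2 = \tfrac{1}{2}(\langle u, v \rangle^2 + \|u\|^2 \|v\|^2) \leq \|u\|^2 \|v\|^2$. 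Plugging into the Gaussian chaos MGF formula gives
\begin{equation}
\log \E\bigl[e^{\lambda (X^\top M X - \mathrm{tr}(M))}\bigr] \;=\; -\lambda \, \mathrm{tr}(M) - \tfrac{1}{2} \sum_{i=1}^{2} \log(1 - 2\lambda \lambda_i),
\end{equation}
after which I would apply the elementary bound $-\log(1-x) - x \leq x^2$ valid for $|x| \leq 1/2$ to each summand, producing an upper bound of $2\lambda^2 (\lambda_1^2 + \lambda_2^2) \leq 2\lambda^2 \|u\|^2 \|v\|^2$ whenever $|\lambda| \leq 1/(4 \|u\| \|v\|)$. Translating into the paper's $\SubE(\tau^2, b)$ convention (log-MGF bounded by $\lambda^2 \tau^2/2$ for $|\lambda| \leq 1/b$), this yields exactly $\SubE(4 \|u\|^2 \|v\|^2, 4 \|u\| \|v\|)$.

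For $\langle X, u \rangle \epsilon$ the natural approach is to condition on $\epsilon$: since $\langle X, u \rangle \sim \ccalN(0, \|u\|^2)$ is independent of $\epsilon$, the conditional MGF is Gaussian, giving $\E[e^{\lambda \langle X, u \rangle \epsilon} \mid \epsilon] = \exp(\lambda^2 \|u\|^2 \epsilon^2 / 2)$. Integrating out $\epsilon \sim \ccalN(0, \sigma^2)$ via the chi-squared MGF $\E[\exp(t \epsilon^2)] = (1 - 2t \sigma^2)^{-1/2}$ then produces $\E[e^{\lambda \langle X, u \rangle \epsilon}] = (1 - \lambda^2 \|u\|^2 \sigma^2)^{-1/2}$ for $|\lambda| < 1/(\|u\| \sigma)$. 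Restricting $|\lambda|$ so that $y := \lambda^2 \|u\|^2 \sigma^2$ lies in a controlled sub-interval of $[0,1)$ and applying a linear upper bound on $-\tfrac{1}{2} \log(1-y)$ on that range extracts the sub-exponential parameters as stated.

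The main obstacle is purely a bookkeeping one: nailing down the precise constants $\tau^2$ and $b$ in the lemma requires matching the correct $\log(1-x)$ inequality on the appropriate $\lambda$-range against the paper's sub-exponential convention. The two underlying computations, the eigenvalues of the rank-two matrix $M$ for the first part and the conditional-then-marginal MGF trick for the second, are otherwise standard once set up this way.
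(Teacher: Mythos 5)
Your first part is correct and, at bottom, is the same computation as the paper's: the eigenvectors of $M=\tfrac12(uv^\top+vu^\top)$ on $\mathrm{span}(u,v)$ are proportional to $\bar u+\bar v$ and $\bar u-\bar v$, which is exactly the decomposition $\langle X,\bar u\rangle\langle X,\bar v\rangle=\tfrac14\langle X,\bar u+\bar v\rangle^2-\tfrac14\langle X,\bar u-\bar v\rangle^2$ the paper uses; you diagonalize via the determinant MGF formula $\det(I-2\lambda M)^{-1/2}$ and the bound $-\log(1-x)-x\le x^2$ on $|x|\le1/2$, while the paper exploits independence of the two Gaussian components and the $\SubE(4,4)$ property of a squared standard Gaussian. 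Both routes land exactly on $\SubE(4\|u\|^2\|v\|^2,\,4\|u\|\|v\|)$: with your notation, $\lambda_1^2+\lambda_2^2=\tfrac12(\langle u,v\rangle^2+\|u\|^2\|v\|^2)\le\|u\|^2\|v\|^2$ and $\max_i|\lambda_i|\le\|u\|\|v\|$, so the log-MGF is at most $2\lambda^2\|u\|^2\|v\|^2$ for $|\lambda|\le 1/(4\|u\|\|v\|)$. This part is complete modulo writing out the eigenvalue computation.

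For the second part there is a genuine problem, though it is inherited from the lemma statement rather than introduced by you. Your computation correctly gives $\E[e^{\lambda\langle X,u\rangle\epsilon}]=(1-\lambda^2\|u\|^2\sigma^2)^{-1/2}$, but no choice of $\lambda$-range and no linear bound on $-\tfrac12\log(1-y)$ can ``extract the parameters as stated'': the target $\SubE(\|u\|^2\sigma^2/2,\|u\|\sigma)$ requires $-\tfrac12\log(1-y)\le y/4$ with $y=\lambda^2\|u\|^2\sigma^2$, whereas $-\tfrac12\log(1-y)\ge y/2$ for all $y>0$. More fundamentally, $\mathrm{Var}(\langle X,u\rangle\epsilon)=\|u\|^2\sigma^2$, and the small-$\lambda$ expansion of any sub-exponential MGF bound forces $\tau^2\ge\mathrm{Var}$, so $\tau^2=\|u\|^2\sigma^2/2$ is unattainable by any argument. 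Your route does yield, e.g., $\SubE(2\|u\|^2\sigma^2,\sqrt2\,\|u\|\sigma)$ via $-\log(1-y)\le y/(1-y)\le 2y$ for $y\le1/2$; the paper's own proof, tracked carefully, likewise produces $\SubE(2\|u\|^2\sigma^2,2\|u\|\sigma)$ rather than the stated parameters. You should state the corrected constants explicitly rather than promising that bookkeeping will recover the claimed ones; the discrepancy only rescales absolute constants in the downstream concentration bounds and does not affect the asymptotic conclusions.
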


\subsection{Proof of Lemma \ref{lemma: sub-exponential}}

Define $S = \langle X, u \rangle \langle X, v \rangle$. We can write
\begin{align}
    \frac{S}{\| u \| \| v \|} 
    =
    \langle X, u/\| u \| \rangle \langle X, v/\| v \| \rangle 
    =
    \langle X, \ubar \rangle \langle X, \vbar \rangle
    = \frac{1}{4} \langle X, \ubar + \vbar \rangle^2 
    -
    \frac{1}{4} \langle X, \ubar - \vbar \rangle^2,
\end{align}
where we denote $\ubar \coloneqq u / \| u \|$ and $\vbar \coloneqq v / \| v \|$.
Moreover, $S_1 \coloneqq \langle X, \ubar + \vbar \rangle$ and $S_2 \coloneqq \langle X, \ubar - \vbar \rangle$ are zero-mean Gaussian RVs and
\begin{align}
    \E[S_1 S_2]
    =
    \E[(\ubar + \vbar)^\top X X^\top (\ubar - \vbar)]
    =
    \langle \ubar + \vbar, \ubar - \vbar \rangle
    =
    0,
\end{align}
implying that $S_1$ and $S_2$ are independent. Therefore, we can write that
\begin{align} \label{eq: S subE}
    \E\left[ \exp\left(\lambda  \left(S - \mu_S\right) \right) \right]
    &=
    \E \bigg[ \exp\bigg(\lambda \| u \| \| v \| \bigg(\frac{S}{\| u \| \| v \|} - \frac{\mu_S}{\| u \| \| v \|}\bigg) \bigg) \bigg] \\
    &=
    \E \bigg[ \exp\bigg( \frac{\lambda}{4} \| u \| \| v \| \left(S_1^2 - \mu_{S_1^2}\right) \bigg) \bigg] \\
    &\quad \cdot
    \E \bigg[ \exp\bigg( -\frac{\lambda}{4} \| u \| \| v \| \left(S_2^2 - \mu_{S_2^2}\right) \bigg) \bigg], 
\end{align}
where we used the independence of $S_1$ and $S_2$ in above. Next, we employ sub-exponential property of $S_1^2$ and $S_2^2$. More precisely, $S_1^2 \sim \SubE(4 \sigma_1^4, 4 \sigma_1^2)$ and $S_1^2 \sim \SubE(4 \sigma_2^4, 4 \sigma_2^2)$ where
\begin{align}
    \sigma_1^2 \coloneqq \E[S_1^2] = 2 \left(1 + \langle \ubar , \vbar \rangle \right),
    \quad
    \text{and}
    \quad
    \sigma_2^2 \coloneqq \E[S_2^2] = 2 \left(1 - \langle \ubar , \vbar \rangle \right).
\end{align}
In other words, 
\begin{align}
    \E\bigg[ \exp\bigg(\lambda'  \left(S_1^2 - \mu_{S_1^2}\right) \bigg) \bigg]
    &\leq
    \exp\left(2 {\lambda'}^2 \sigma_1^4\right), 
    \quad
    \forall |\lambda'| \leq \frac{1}{4 \sigma_1^2},\\
    \E\bigg[ \exp\bigg(\lambda'  \left(S_2^2 - \mu_{S_2^2}\right) \bigg) \bigg]
    &\leq
    \exp\left(2 {\lambda'}^2 \sigma_2^4\right), 
    \quad
    \forall |\lambda'| \leq \frac{1}{4 \sigma_2^2}.
\end{align}
Putting $\lambda' = \pm \frac{\lambda}{4} \| u \| \| v \|$ and using \eqref{eq: S subE}, we have that
\begin{align} 
    \E\bigg[ \exp\bigg(\lambda  \left(S - \mu_S\right) \bigg) \bigg]
    &\leq
    \exp\bigg(\frac{\lambda^2}{8} \| u \|^2 \| v \|^2\left( \sigma_1^4 + \sigma_2^4\right) \bigg),
    \quad
    \forall |\lambda| \leq \frac{1}{ \| u \| \| v \| \max\{\sigma_1^2, \sigma_2^2\}}.
\end{align}
Finally,
\begin{align} 
    \sigma_1^4 + \sigma_2^4
    =
    4  \left(1 + \langle \ubar , \vbar \rangle \right)^2
    +
    4 \left(1 - \langle \ubar , \vbar \rangle \right)^2
    =
    8 \left(1 + \langle \ubar , \vbar \rangle^2\right)
    \leq
    16,
\end{align}
yielding that
\begin{align} 
    \| u \|^2 \| v \|^2 \left( \sigma_1^4 + \sigma_2^4 \right)
    \leq
    16 \| u \|^2 \| v \|^2,
\end{align}
and
\begin{align} 
    \| u \| \| v \| \max\{\sigma_1^2, \sigma_2^2\}
    =
    2 \| u \| \| v \| \max\{ 1 + \langle \ubar , \vbar \rangle, 1 - \langle \ubar , \vbar \rangle \}
    =
    2 \| u \| \| v \| (1 +  | \langle \ubar , \vbar \rangle|)
    \!\leq \!
    4 \| u \| \| v \|.
\end{align}

Next, consider $N = \langle X, u \rangle \epsilon$. We know that $N$ is sub-exponential with parameters $\SubE(\tau_N^2, b_N)$. Denote $Z_x \coloneqq \frac{\langle X, u\rangle}{\| u \|}$ and $Z_{\epsilon} \coloneqq \frac{\epsilon}{\sigma}$. Clearly, $Z_x$ and $Z_{\epsilon}$ are independent standard Gaussian. Moreover,
\begin{align} 
    N
    =
    \| u \| \sigma Z_x Z_{\epsilon}
    =
    \frac{\| u \| \sigma}{4} \left( Z_x + Z_{\epsilon}\right)^2
    -
    \frac{\| u \| \sigma}{4} \left( Z_x - Z_{\epsilon}\right)^2.
\end{align}
Furthermore, $Z_x + Z_{\epsilon}$ and $Z_x - Z_{\epsilon}$ are zero-mean Gaussian RVs and $\E[(Z_x + Z_{\epsilon})(Z_x - Z_{\epsilon})] = 0$, implying that $Z_x + Z_{\epsilon}$ and $Z_x - Z_{\epsilon}$ are independent. Therefore,
\begin{align} 
    \E\left[ \exp\left(\lambda  N \right) \right] 
    &=
    \E\left[ \exp\left(\lambda \| u \| \sigma Z_x Z_{\epsilon} \right) \right]\\
    &=
    \E\bigg[ \exp\bigg(\frac{\lambda \| u \| \sigma}{4} \left( Z_x + Z_{\epsilon}\right)^2
    -
    \frac{\lambda \| u \| \sigma}{4} \left( Z_x - Z_{\epsilon}\right)^2 \bigg) \bigg]\\
    &=
    \E\bigg[ \exp\bigg(\frac{\lambda \| u \| \sigma}{4} \left( Z_x + Z_{\epsilon}\right)^2 \bigg) \bigg]
    \cdot
    \E\bigg[ \exp\bigg(
    - \frac{\lambda \| u \| \sigma}{4} \left( Z_x - Z_{\epsilon}\right)^2 \bigg) \bigg]\\
    &\leq
    \exp\bigg( \frac{\lambda^2}{2} \frac{\| u \|^2 \sigma^2}{2} \bigg),
\end{align}
for any $|\lambda| \leq \frac{1}{\| u \| \sigma}$. This concludes that $N \sim \SubE(\| u \|^2 \sigma^2/2, \| u \| \sigma)$. In above, we use the fact that if $Z \sim \ccalN(0,1)$, then $Z^2 \sim \SubE(4,4)$.

\section{Proof of Theorem \ref{thm: generalization}} \label{proof: thm generalization}

We begin by setting up a few shorthand notations as follows
\begin{gather} 
    \SigmaHat
    =
    \frac{1}{mn} \sum_{j=1}^{m} \sum_{i=1}^{n} x_i^j {x_i^j}^{\top},
    \quad\quad
    Z^j
    =
    \sum_{i=1}^{n} x_i^j y_i^j, \\
    \vHat
    =
    \frac{1}{mn} \sum_{j=1}^{m} \sum_{i=1}^{n} x_i^j y_i^j \tanh\bigg( \frac{1}{\sigma^2} \sum_{i=1}^{n} \langle x_i^j, \theta\rangle y_i^j \bigg)
    =
    \frac{1}{mn} \sum_{j=1}^{m} Z^j \tanh\bigg( \frac{1}{\sigma^2} \langle Z^j, \theta\rangle  \bigg), \\
    v
    =
    \E \bigg[X_1 Y_1 \tanh\bigg( \frac{1}{\sigma^2} \sum_{i=1}^{n} \langle X_i, \theta\rangle Y_i \bigg) \bigg].\label{eq: notation 1}
\end{gather}
Therefore, we can write that
\begin{align} \label{eq: Mm - M}
    \| M_m(\theta) - M(\theta) \|
    =
    \| \SigmaHat^{-1} \vHat - v \|
    \leq
    \underbrace{\| \SigmaHat^{-1} \|_{\op}  \| v - \vHat \|}_{T_1}
    +
    \underbrace{\| \SigmaHat^{-1} - I \|_{\op}  \| v \|}_{T_2}.
\end{align}
In the following, we bound each of the two terms above.

\subsubsection{Bounding $T_1$:}
We use the following concentration bounds in bounding both $T_1$ and $T_2$.
\begin{lemma} \label{lemma: gaussian covariance}
Let standard Gaussian random variables $X_i \sim \ccalN(0,I_d)$ be independent for $i=1,\cdots,N$. For any $\delta \in (0,1)$, if $N \geq 192^2 (d + \log(2/\delta))$, then we have that
\begin{align}
    \big\Vert \SigmaHat_N - I_d \big\Vert_{\op}
    \leq
    96 \sqrt{\frac{d + \log(2/\delta)}{N}},
    \quad
    \big\Vert \SigmaHat_N^{-1} - I_d \big\Vert_{\op}
    \leq
    192 \sqrt{\frac{d + \log(2/\delta)}{N}},
    \quad
    \big\Vert \SigmaHat_N^{-1} \big\Vert_{\op}
    \leq
    2,
\end{align}
each with probability at least $1 - \delta$. Here, we denote the sample covariance matrix of the $N$ samples by
\begin{align}
    \SigmaHat_N
    \coloneqq
    \frac{1}{N} \sum_{i=1}^{N} x_i x_i^\top.
\end{align}
\end{lemma}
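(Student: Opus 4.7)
The plan is to establish this as a standard sample-covariance concentration result for the identity-covariance Gaussian design. All three inequalities can be obtained from a single high-probability event, namely the first one, since the second and third are deterministic matrix-algebra consequences of the first together with the assumed lower bound on $N$. So the real content is proving the first inequality with the explicit constant $96$.

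For the first inequality, I would use the symmetry of $\SigmaHat_N - I_d$ to reduce to the sphere: $\|\SigmaHat_N - I_d\|_{\op} = \sup_{u \in S^{d-1}} |u^\top(\SigmaHat_N - I_d) u|$. Pick a $1/4$-net $\mathcal{N}$ of $S^{d-1}$ of cardinality at most $9^d$ (the usual volumetric bound gives $(1+2/\epsilon)^d$ for an $\epsilon$-net; taking $\epsilon=1/4$ gives $9^d$, and a standard approximation argument yields $\|\SigmaHat_N - I_d\|_{\op} \leq 2 \max_{u \in \mathcal{N}} |u^\top(\SigmaHat_N - I_d)u|$). For each fixed unit vector $u$, the random variables $\langle X_i, u\rangle^2 - 1$ are i.i.d.\ centered $\chi^2_1 - 1$, which are sub-exponential with absolute constants; Bernstein's inequality (or equivalently the sub-exponential tail bound already used in Lemma~\ref{lemma: signal O(n) bound}) gives
\begin{align}
    \Prob\bigg( \Big| \frac{1}{N} \sum_{i=1}^{N} \big(\langle X_i, u\rangle^2 - 1\big)\Big| \geq t \bigg)
    \leq
    2\exp\big(-cN \min\{t^2, t\}\big)
\end{align}
for an absolute constant $c$. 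A union bound over the net $\mathcal{N}$ and an appropriate choice of $t$ of order $\sqrt{(d+\log(2/\delta))/N}$ yields, in the regime $N \geq 192^2(d+\log(2/\delta))$, that $\max_{u \in \mathcal{N}} |u^\top(\SigmaHat_N - I)u| \leq 48\sqrt{(d+\log(2/\delta))/N}$ with probability at least $1-\delta$. The approximation step then gives the factor of $2$, producing the stated constant $96$.

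Conditional on this event, the remaining two inequalities are immediate. Since the right-hand side of the first bound is at most $96/192 = 1/2$ under the assumption $N \geq 192^2(d + \log(2/\delta))$, Weyl's inequality gives $\lambda_{\min}(\SigmaHat_N) \geq 1/2$, hence $\|\SigmaHat_N^{-1}\|_{\op} \leq 2$. For the second bound, use the identity $\SigmaHat_N^{-1} - I_d = \SigmaHat_N^{-1}(I_d - \SigmaHat_N)$, which yields
\begin{align}
    \|\SigmaHat_N^{-1} - I_d\|_{\op}
    \leq
    \|\SigmaHat_N^{-1}\|_{\op} \cdot \|\SigmaHat_N - I_d\|_{\op}
    \leq
    2 \cdot 96 \sqrt{\frac{d + \log(2/\delta)}{N}}
    =
    192 \sqrt{\frac{d + \log(2/\delta)}{N}}.
\end{align}

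The main obstacle is purely bookkeeping of constants: extracting the absolute constant $c$ in the sub-exponential Bernstein bound for $\chi^2_1 - 1$, combining it with the net cardinality $9^d$, and checking that choosing $t = 48\sqrt{(d+\log(2/\delta))/N}$ simultaneously (i) makes $t \leq 1$ (so that the $t^2$ branch of the Bernstein min is active), using precisely the hypothesis $N \geq 192^2(d+\log(2/\delta))$, and (ii) produces exponent $cNt^2 \geq d\log 9 + \log(2/\delta) + \log 2$ so that the union bound closes with total failure probability at most $\delta$. Everything else is routine.
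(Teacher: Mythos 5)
Your proposal is correct and follows essentially the same route as the paper: an $\epsilon$-net reduction with a $9^d$ covering bound plus sub-exponential (Bernstein) concentration for the quadratic forms to get the first inequality, followed by the identity $\SigmaHat_N^{-1}-I_d=\SigmaHat_N^{-1}(I_d-\SigmaHat_N)$ together with $\Vert\SigmaHat_N-I_d\Vert_{\op}\leq 1/2$ to deduce the other two. The only cosmetic difference is that you obtain $\Vert\SigmaHat_N^{-1}\Vert_{\op}\leq 2$ directly from $\lambda_{\min}(\SigmaHat_N)\geq 1/2$, whereas the paper reaches the same conclusion by rearranging the self-bounding inequality $\Vert\SigmaHat_N^{-1}-I_d\Vert_{\op}\leq(1+\Vert\SigmaHat_N^{-1}-I_d\Vert_{\op})\Vert\SigmaHat_N-I_d\Vert_{\op}$.
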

As a result of Lemma \ref{lemma: gaussian covariance}, if $mn \geq 192^2 (d + \log(2/\delta))$, then $\| \SigmaHat^{-1} \|_{\op} \leq 2$ and therefore $T_1 \leq 2  \| v - \vHat \|$ with probability $1-\delta$. Next, we upper bound $\| v - \vHat \|$ by first decomposing it to the following three terms
\begin{gather}
    \| \vHat - v \|
    \leq
    \| \vHat -  \vHat_0 \|
    +
    \| v -  v_0 \|
    +
    \| \vHat_0  -  v_0 \|, \label{eq: vhat - v decomp}
\end{gather}
where we use the following notations
\begin{gather}
    \vHat_0
    =
    \frac{1}{mn} \sum_{j=1}^{m} Z^j,
    \quad
    v_0
    =
    \frac{1}{n} \E \left[ Z \right],
    \quad
    Z
    =
    \sum_{i=1}^{n} X_i Y_i. \label{eq: notation 2}
\end{gather}
Here, since $Z^j$s are i.i.d. across different nodes (i.e. different $j$s), we denote by $Z$ the generic random variable with the same distribution as $Z^j$ for any $j=1,\cdots,m$. It is also worth noting that $v_0$ and $\vHat_0$ defined above approach $v$ and $\vHat$ defined in \eqref{eq: notation 1} respectively, as the $\tanh(\cdot)$ terms therein approach $1$. In the following three lemmas, we upper bound each of the terms in \eqref{eq: vhat - v decomp}.
\begin{lemma} \label{lemma: v - v0}
Assuming that $n \geq d$, there exist $C_4(\alpha,\SNR)$ and $N_1(\alpha, \SNR)$, constants depending on $0 \leq \alpha < 1$ and $\SNR$, such that for any $\theta \in \mymathbb{B}(\alpha \| \theta^* \|; \theta^*)$ and $n \geq N_1(\alpha,\SNR)$, we have that
\begin{align}
    \| v -  v_0 \|
    &\leq
    (1 + \| \theta^* \| + \sigma) \exp \left(- n \cdot C_4(\alpha, \SNR ) \right).
\end{align}
\end{lemma}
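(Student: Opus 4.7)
The plan is to exploit the fact that on the good event $\ccalE_1$ from Lemma \ref{lemma: events E1-3}, the argument $\langle Z,\theta\rangle/\sigma^2$ of the hyperbolic tangent is of order $n$, so $\tanh$ is exponentially close to $1$, and the difference $v - v_0$ collapses accordingly. By the symmetry argument in the remark following Proposition \ref{prop: population EM}, I may take $\xi = +1$ without loss of generality, so $Y_i = \langle X_i, \theta^*\rangle + \epsilon_i$, $\E[Z] = n\theta^*$, and $v_0 = \theta^*$. Writing $v = \tfrac{1}{n}\E[Z\tanh(\langle Z,\theta\rangle/\sigma^2)]$ using the exchangeability of the indices $i$, I get the clean identity
\begin{align}
v - v_0 = \frac{1}{n}\,\E\!\left[Z\!\left(\tanh\!\left(\frac{\langle Z,\theta\rangle}{\sigma^2}\right) - 1\right)\right].
\end{align}

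I then split the expectation according to $\ccalE_1$ and $\ccalE_1^c$. On $\ccalE_1$, $\langle Z,\theta\rangle \geq \tfrac{n}{4}(1-\alpha)\|\theta^*\|^2$, and the elementary inequality $0 \leq 1-\tanh(x) \leq 2e^{-2x}$ for $x\geq 0$ yields the \emph{pointwise} bound $|1-\tanh(\langle Z,\theta\rangle/\sigma^2)| \leq 2\exp(-n(1-\alpha)\SNR^2/2)$ on $\ccalE_1$, giving
\begin{align}
\|\E[Z(\tanh-1)\mathbf{1}_{\ccalE_1}]\| \leq 2\exp\!\left(-\tfrac{n(1-\alpha)\SNR^2}{2}\right)\E\|Z\|.
\end{align}
On $\ccalE_1^c$ I use only the crude bound $|\tanh-1|\leq 2$ together with Cauchy--Schwarz:
\begin{align}
\|\E[Z(\tanh-1)\mathbf{1}_{\ccalE_1^c}]\| \leq 2\sqrt{\E\|Z\|^2}\,\sqrt{\Prob(\ccalE_1^c)},
\end{align}
and Lemma \ref{lemma: events E1-3} bounds $\Prob(\ccalE_1^c)$ by $Ce^{-c_1 n}$ for some constant $c_1 = c_1(\alpha,\SNR) > 0$.

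To finish I need second-moment control on $Z$. Since the $X_iY_i$ are i.i.d.\ with common mean $\theta^*$, a direct calculation (using $\E\|X_i\|^2\langle X_i,\theta^*\rangle^2 = (d+2)\|\theta^*\|^2$ via an eigenbasis adapted to $\theta^*$) yields $\E\|Z\|^2 = n^2\|\theta^*\|^2 + n(d+1)\|\theta^*\|^2 + nd\sigma^2$, and under the assumption $n\geq d$ this is $O(n^2(\|\theta^*\|^2 + \sigma^2))$, so $\sqrt{\E\|Z\|^2}$ and $\E\|Z\|$ are both $O(n(\|\theta^*\|+\sigma))$. Combining the three pieces and dividing by $n$ gives a bound of the form $(\text{poly}(n))\cdot(\|\theta^*\|+\sigma)\cdot\exp(-c(\alpha,\SNR)\,n)$; choosing $n\geq N_1(\alpha,\SNR)$ sufficiently large absorbs the polynomial prefactor into a slightly smaller exponential constant and yields the advertised bound $(1+\|\theta^*\|+\sigma)\exp(-nC_4(\alpha,\SNR))$.

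The principal obstacle is really only bookkeeping: the heart of the argument is the \emph{deterministic} exponential decay of $1-\tanh$ on $\ccalE_1$, which is why the resulting rate is $e^{-\Theta(n)}$ rather than anything polynomial. The second-moment bound on $\|Z\|$ is a routine Gaussian calculation but must be done carefully enough to confirm that the $\|\theta^*\|+\sigma$ dependence (and not something worse in $d$ or $n$) multiplies the exponential. No contraction or first-order stability reasoning enters here --- this is a pure concentration/quadrature estimate that captures the effect of averaging $n$ measurements sharing a latent variable.
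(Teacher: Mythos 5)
Your proof is correct, and it diverges from the paper's in one meaningful place: the treatment of the bad event $\ccalE_1^c$. For the main term (on $\ccalE_1$) you and the paper do exactly the same thing --- the pointwise bound $1-\tanh(x)\leq 2e^{-2x}$ combined with the lower bound on $\langle Z,\theta\rangle$ gives the deterministic $e^{-\Theta(n)}$ factor, multiplied by $\E\|Z\|=O(n(\|\theta^*\|+\sigma))$. For the complement, however, the paper bounds $\E[\|Z\|\mathbf{1}_{\ccalE_1^c}]$ via the tail integral $\int_0^\infty \Prob(\|Z\|\mathbf{1}_{\ccalE_1^c}\geq\gamma)\,\mathrm{d}\gamma$, splitting at a threshold $\gamma_0\sim n^2(\|\theta^*\|+\sigma)$ and invoking a $1/2$-net over the sphere with sub-exponential tail bounds, which costs a $5^d$ factor that must then be absorbed by an $e^{-n^2/4}$ decay. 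Your Cauchy--Schwarz step $\E[\|Z\|\mathbf{1}_{\ccalE_1^c}]\leq\sqrt{\E\|Z\|^2}\sqrt{\Prob(\ccalE_1^c)}$ reaches the same conclusion with far less machinery: you lose a factor of two in the exponent of $\Prob(\ccalE_1^c)$, which is immaterial since $C_4$ is unspecified, and you avoid the dimension-dependent net entirely. As a side note, your second-moment identity $\E\|Z\|^2=(n^2+n+nd)\|\theta^*\|^2+nd\sigma^2$ is the correct one; the paper's displayed value $(n^2-n+d+2)\|\theta^*\|^2+d\sigma^2$ appears to have dropped the factor of $n$ on the single-index (diagonal) contributions, though this does not change the $O(n(\|\theta^*\|+\sigma))$ conclusion under $n\geq d$. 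No gaps.
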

\begin{proof}
We defer the proof to Section \ref{proof: lemma v - v0}.
\end{proof}

\begin{lemma} \label{lemma: vHat0 - v0}
Fix $\delta \in (0,1)$ and assume that $mn \geq 32^2 (2d + \log(1/\delta))$. Then, with probability at least $1-\delta$, we have 
\begin{align}
    \| \vHat_0 -  v_0 \|
    &\leq
    8\sqrt{\| \theta^* \|^2 + \sigma^2} \sqrt{\frac{2d + \log(1/\delta)}{mn}}.
\end{align}
\end{lemma}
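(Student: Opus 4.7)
My plan is to reduce the claim to a Bernstein-type concentration inequality for a mean of $mn$ i.i.d.\ sub-exponential random vectors, combined with an $\varepsilon$-net argument on the unit sphere.

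\emph{Reduction by sign symmetry.} For each node $j$, flipping $\xi^j \to -\xi^j$ (equivalently, $Y_i^j \to -Y_i^j$ for every $i$) leaves $\vHat = \frac{1}{mn}\sum_j Z^j \tanh\!\bigl(\frac{1}{\sigma^2}\langle Z^j,\theta\rangle\bigr)$ and the population $v$ invariant, since $Z^j$ and $\tanh(\cdot)$ both flip sign. Because the present lemma enters the proof of Theorem~\ref{thm: generalization} only inside the triangle-inequality decomposition of the invariant quantity $\|\vHat-v\|$, it is harmless to work under the convention $\xi^j = +1$ for every $j \in [m]$. Under this convention the $mn$ pairs $(X_i^j, Y_i^j)$ are i.i.d.\ draws of a standard linear regression $Y_i^j = \langle X_i^j, \theta^*\rangle + \epsilon_i^j$, the within-node dependence induced by $\xi^j$ disappears, and the centering $v_0 = \E[X_1 Y_1] = \theta^*$ matches $\E[\vHat_0]$.

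\emph{Scalar tail and lifting to the sphere.} Fix a unit vector $u \in \reals^d$ and set $W_i^j(u) \coloneqq \langle u, X_i^j Y_i^j\rangle - \langle u, \theta^*\rangle$. After the reduction these are i.i.d., mean zero, and equal to a centered product of two jointly Gaussian scalars $\langle u, X_i^j\rangle \sim \ccalN(0,1)$ and $Y_i^j \sim \ccalN(0,\|\theta^*\|^2+\sigma^2)$. The four-square identity used to prove Lemma~\ref{lemma: sub-exponential} then yields $W_i^j(u) \sim \SubE\bigl(c_1(\|\theta^*\|^2+\sigma^2),\; c_2\sqrt{\|\theta^*\|^2+\sigma^2}\bigr)$ for absolute constants $c_1,c_2$, so Bernstein's inequality applied to the mean of $mn$ such variables gives
\begin{align}
    \Prob\!\bigl(|\langle u,\vHat_0-v_0\rangle| \ge t\bigr) \le 2 \exp\!\left(- c\, mn\min\!\left\{\frac{t^2}{\|\theta^*\|^2+\sigma^2},\; \frac{t}{\sqrt{\|\theta^*\|^2+\sigma^2}}\right\}\right).
\end{align}
Taking a $\tfrac{1}{2}$-net $\mathcal{N}$ of the unit sphere with $|\mathcal{N}| \le 5^d$, so that $\|\vHat_0 - v_0\| \le 2\sup_{u \in \mathcal{N}} |\langle u,\vHat_0-v_0\rangle|$, and choosing $t = 4\sqrt{\|\theta^*\|^2+\sigma^2}\sqrt{(2d+\log(1/\delta))/(mn)}$, the hypothesis $mn \ge 32^2(2d+\log(1/\delta))$ places us squarely in the quadratic regime of Bernstein. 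A union bound over $\mathcal{N}$ then yields total failure probability at most $2\cdot 5^d \cdot e^{-(2d+\log(1/\delta))} \le \delta$ and recovers the claimed bound, with the factor of $8$ arising from the $\tfrac{1}{2}$-net inflation factor $2$ times the scalar constant $4$ chosen in $t$.

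\emph{Main obstacle.} The delicate step is tracking constants in Step~2 so that the bound scales as $\sqrt{\|\theta^*\|^2+\sigma^2}$ rather than the weaker $\|\theta^*\|+\sigma$: this requires exploiting that $Y_i^j \sim \ccalN(0,\|\theta^*\|^2+\sigma^2)$ is a \emph{single} Gaussian and invoking the product-of-Gaussians sub-exponential calculation directly, rather than splitting $X_i^j Y_i^j$ into its signal part $X_i^j\langle X_i^j,\theta^*\rangle$ and noise part $X_i^j\epsilon_i^j$ and summing their sub-exponential norms via the triangle inequality. The sign-symmetry reduction, while the conceptual crux that renders the samples i.i.d.\ across all of $(i,j)$, is technically lightweight.
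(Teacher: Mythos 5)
Your proof is correct and follows essentially the same route as the paper's: a sub-exponential tail bound for $\langle u, \vHat_0 - v_0\rangle$ in a fixed direction, lifted to the sphere via a $1/2$-net of cardinality $5^d$, with the hypothesis $mn \geq 32^2(2d+\log(1/\delta))$ placing the chosen $t$ in the sub-Gaussian regime. The sign-symmetry reduction you spell out is already the paper's standing convention (stated after Proposition \ref{prop: population EM} and used throughout the appendix), and your direct product-of-jointly-Gaussians computation of the sub-exponential parameters is interchangeable with the paper's signal-plus-noise split, which likewise yields a variance proxy $8\|\theta^*\|^2+\sigma^2 = O(\|\theta^*\|^2+\sigma^2)$, so the concern in your final paragraph is moot.
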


\begin{proof}
We defer the proof to Section \ref{proof: lemma vHat0 - v0}.
\end{proof}

\begin{lemma} \label{lemma: vHat - vHat0}
For $r = \frac{1}{14} \| \theta^* \|$ and $\SNR \geq 4$, with probability at least $1 - (m+2) \cdot 5^d \cdot \exp(-n/64)$,
\begin{align}
    \sup_{\theta \in \mymathbb{B}(r;\theta^*)} \| \vHat -  \vHat_0 \|
    &\leq
    2 ( 3 \| \theta^* \| + \sigma) \exp(- 4 n).
\end{align}
\end{lemma}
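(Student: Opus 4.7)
\textbf{Proof plan for Lemma \ref{lemma: vHat - vHat0}.}

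The plan is to exploit the fact that $\tanh(x) \to 1$ exponentially fast as $x \to +\infty$. Writing
\begin{align}
\vHat - \vHat_0 = \frac{1}{mn}\sum_{j=1}^{m} Z^j \bigg[\tanh\Big(\tfrac{1}{\sigma^2}\langle Z^j, \theta\rangle\Big) - 1\bigg],
\end{align}
and using the elementary inequality $|\tanh(x) - 1| \leq 2 e^{-2x}$ valid for all $x \geq 0$, I obtain the bound
\begin{align}
\sup_{\theta \in \mymathbb{B}(r;\theta^*)} \|\vHat - \vHat_0\| \leq \frac{2}{mn}\sum_{j=1}^{m} \|Z^j\| \exp\!\left(-\tfrac{2}{\sigma^2} \inf_{\theta \in \mymathbb{B}(r;\theta^*)} \langle Z^j, \theta\rangle\right),
\end{align}
provided the infimum is non-negative. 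So the task reduces to (a) a uniform upper bound on $\|Z^j\|$ across $j$, and (b) a uniform-in-$\theta$ lower bound on $\langle Z^j, \theta\rangle$ of linear order in $n$.

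For step (b), I would invoke event $\ccalE_2$ of Lemma \ref{lemma: events E1-3} separately for each node $j$, yielding $\langle Z^j, \theta^*\rangle \geq \frac{n}{4}\|\theta^*\|^2$ with probability at least $1 - 2\exp(-c_1 n)$ per node (with $c_1 = \min\{1/32, \SNR^2/16, \SNR/8\}$). For step (a), I write $\|Z^j\| = \sup_{\|u\|=1} \langle Z^j, u\rangle$ and pass to a $1/2$-net of the unit sphere of size at most $5^d$; for any fixed unit $u$, the random variable $\langle Z^j, u\rangle - n\langle u, \theta^*\rangle$ is a sum of $n$ i.i.d. sub-exponential variables (applying Lemma \ref{lemma: sub-exponential} to both the signal term $\langle X_i, u\rangle\langle X_i, \theta^*\rangle$ and noise term $\langle X_i, u\rangle \epsilon_i$), giving $|\langle Z^j, u\rangle| \leq n(\|\theta^*\| + t)$ with exponentially small failure probability for any constant $t$. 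A $1/2$-net argument then gives $\|Z^j\| \leq c_0 n(\|\theta^*\| + \sigma)$ for all $j$ with probability at least $1 - 2m \cdot 5^d \exp(-c_2 n)$, with $c_0$ a small absolute constant.

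Combining on the intersection of these events, for every $\theta \in \mymathbb{B}(r;\theta^*)$ with $r=\|\theta^*\|/14$,
\begin{align}
\langle Z^j, \theta\rangle \geq \langle Z^j, \theta^*\rangle - r\|Z^j\| \geq \frac{n}{4}\|\theta^*\|^2 - \frac{\|\theta^*\|}{14} c_0 n(\|\theta^*\|+\sigma) \geq c_3 n \|\theta^*\|^2,
\end{align}
where the last step uses $\SNR \geq 4$, hence $\|\theta^*\|+\sigma \leq \tfrac{5}{4}\|\theta^*\|$, so $c_3$ is a strictly positive constant. Substituting into the initial bound gives
\begin{align}
\sup_{\theta \in \mymathbb{B}(r;\theta^*)} \|\vHat - \vHat_0\| \leq \frac{2 c_0 (\|\theta^*\|+\sigma)}{1} \exp\!\left(-2 c_3 n \, \SNR^2\right) \leq 2(3\|\theta^*\|+\sigma) \exp(-4n),
\end{align}
after absorbing $c_0$ into the prefactor and using $\SNR^2 \geq 16$ to make $2 c_3 \SNR^2 \geq 4$. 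Finally a union bound over the $m$ nodes for the $\langle Z^j, \theta^*\rangle$ events and over the $1/2$-net (size $5^d$) within each of the $m$ norm bounds yields the claimed failure probability $(m+2) \cdot 5^d \exp(-n/64)$, after rounding the constants $c_1, c_2$ down to $1/64$.

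\textbf{Main obstacle.} The delicate part is arithmetic rather than conceptual: one has to choose the numerical constants in the two sub-events carefully so that, after the subtraction $\langle Z^j, \theta^*\rangle - r\|Z^j\|$, the surviving lower bound on $\langle Z^j, \theta\rangle$ is large enough that the $\SNR \geq 4$ assumption yields exponent at least $4n$ in the final bound while the confidence exponent stays at least $1/64$. One might additionally need to slightly tighten Lemma \ref{lemma: events E1-3} (taking a smaller deviation $t$ in the sub-exponential bound) to make the constants work with room to spare, which is straightforward but requires care.
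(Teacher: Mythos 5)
Your proposal is correct and follows essentially the same route as the paper's proof: the same decomposition via $1-\tanh \le 2e^{-2x}$, the same split into a uniform upper bound on $\|Z^j\|$ via a $1/2$-net of size $5^d$ and a lower bound on $\langle Z^j,\theta^*\rangle$, and the same combination $\inf_\theta \langle Z^j,\theta\rangle \ge \langle Z^j,\theta^*\rangle - r\sup_{\|u\|=1}\langle Z^j,u\rangle$ with $r=\|\theta^*\|/14$ and $\SNR\ge 4$ to force the exponent $-4n$. The only cosmetic difference is that the paper derives the per-node lower bound $\langle Z^j,\theta^*\rangle \ge \tfrac12 n\|\theta^*\|^2 - \tfrac12 n\|\theta^*\|\sigma$ directly from the sub-exponential tail in the direction $\theta^*/\|\theta^*\|$ (with deviation $t=n(\|\theta^*\|+\sigma)$, giving the explicit constants $3\|\theta^*\|+\sigma$ and $1/7$) rather than citing event $\ccalE_2$, which is exactly the constant-tuning you flagged as the remaining work.
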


\begin{proof}
We defer the proof to Section \ref{proof: lemma vHat - vHat0}.
\end{proof}

Putting the results of the above three lemmas back in the decomposition of $\| v - \vHat \|$ in \eqref{eq: vhat - v decomp} yields that
\begin{align} \label{eq: sup vHat - v}
    \sup_{\theta \in \mymathbb{B}(r;\theta^*)} \| \vHat - v \|
    &\leq
    \sup_{\theta \in \mymathbb{B}(r;\theta^*)} \| \vHat -  \vHat_0 \|
    +
    \sup_{\theta \in \mymathbb{B}(r;\theta^*)} \| v -  v_0 \|
    +
    \| \vHat_0  -  v_0 \| \\
    &\leq
    8 \sqrt{\| \theta^* \|^2 + \sigma^2} \sqrt{\frac{2d + \log(1/\delta)}{mn}} 
    +
   2 ( 3 \| \theta^* \| + \sigma) \exp(- 4 n) \\
   &\quad +
   (1 + \| \theta^* \| + \sigma) \exp \left(- n \cdot C_4(\alpha, \SNR ) \right)\\
    &=
     \sqrt{\| \theta^* \|^2 + \sigma^2} \sqrt{\frac{d + \log(1/\delta)}{mn}} \, \ccalO(1)
\end{align}
with probability at least $1 - \delta - (m+2) \cdot 5^d \cdot \exp(-n/64) \geq 1 -2 \delta$. Here, $\ccalO(1)$ hides constants depending on $\| \theta^* \|$, $\sigma$ and $\alpha$. Also, we used the assumption that $n - 64 \log m \geq 104(2d + \log(1/\delta))$.

Moreover, we showed in Lemma \ref{lemma: gaussian covariance} that if the total number of samples in at least $mn \geq 192^2 (d + \log(2/\delta))$, then with probability $1 - \delta$, we have $\Vert \SigmaHat^{-1} \Vert_{\op} \leq 2$. This together with \eqref{eq: sup vHat - v} yields that with probability $1 - 3 \delta$, it holds that
\begin{align}
    \sup_{\theta \in \mymathbb{B}(r;\theta^*)} T_1
    =
    \sup_{\theta \in \mymathbb{B}(r;\theta^*)} \| \SigmaHat^{-1} \|_{\op}  \| v - \vHat \|
    \leq
    \sqrt{\| \theta^* \|^2 + \sigma^2} \sqrt{\frac{d + \log(1/\delta)}{mn}}  \, \ccalO(1).
\end{align}

\subsubsection{Bounding $T_2$:}

As we showed in Lemma \ref{lemma: gaussian covariance}, for a fixed $\delta \in (0,1)$ and $mn \geq 192^2 (d + \log(2/\delta))$, we have
\begin{align}
    \Vert \SigmaHat^{-1} - I \Vert_{\op} \leq 192 \sqrt{\frac{d + \log(2/\delta)}{mn}}
\end{align}
with probability $1 - \delta$. To bound $\Vert v \Vert$, we can write that $\| v \| = \| M(\theta) \| \leq \| M(\theta) - \theta^* \| + \| \theta^* \|$. The term $\| M(\theta) - \theta^* \|$ denotes the distance of the regression parameter $\theta$ to the optimal one $\theta^*$ after an iteration of updates by the population operator $M(\cdot)$. In Theorem \ref{thm: FOS}, we proved that this operator is contractive. More precisely, for any $\theta$ such that $\varepsilon \leq \| \theta - \theta^* \| \leq \alpha \| \theta^* \|$, we have
\begin{align}
    \Vert M(\theta) - \theta^* \Vert
    \leq
    \kappa(\varepsilon) \cdot \Vert \theta - \theta^* \Vert,
    \text{ where }
    \kappa(\varepsilon)
    =
    \big(\| \theta^* \| + \sigma \big)
    \Big( \SNR + \frac{1}{n \varepsilon} \Big) \exp \left( -n \cdot C(\alpha, \SNR) \right).
\end{align}
From the assumption of the theorem, we know that $\kappa(\varepsilon) \leq 1$ for large enough $n$.  Therefore  
\begin{align}
    \| v \| = \| M(\theta) \| \leq \| M(\theta) - \theta^* \| + \| \theta^* \| \leq \| \theta - \theta^* \| + \| \theta^* \| \leq 2 \| \theta^* \|.
\end{align}
All in all, we have with probability at least $1 - \delta$ that 
\begin{align}
    T_2
    \leq
    \| \theta^* \| \sqrt{\frac{d + \log(1/\delta)}{mn}} \, \ccalO(1).
\end{align}

Now having bounded both terms $T_1$ and $T_2$, we can write from \eqref{eq: Mm - M} that with probability at least $1 - 4 \delta$,
\begin{align}
    \sup_{\theta \in \mymathbb{B}(r;\theta^*)}  \| M_m(\theta) - M(\theta) \|
    \leq
    \sqrt{\| \theta^* \|^2 + \sigma^2} \sqrt{\frac{d + \log(1/\delta)}{mn}}  \cdot \ccalO(1).
\end{align}
We can further change the probability $1- 4 \delta$ to $1-\delta$ by replacing the $\log(1/\delta)$ to $\log(4/\delta)$ which implies slightly tighter bounds on the sample sizes $m$ and $n$. It is also worth noting that all the assumptions made in the auxiliary lemmas above can be implied by the ones made in Theorem \ref{thm: generalization}. Here, we conclude the proof of Theorem \ref{thm: generalization} and move to prove the auxiliary lemmas used above.

\subsection{Proof of Lemma \ref{lemma: v - v0}} \label{proof: lemma v - v0}
Using the definition of $v$ and $v_0$ in \eqref{eq: notation 1} and \eqref{eq: notation 2}, we have that
\begin{align}
    \| v -  v_0 \|
    =
    \bigg\Vert \frac{1}{n} \E \left[ Z \right] - \frac{1}{n} \E \bigg[Z \tanh\bigg( \frac{1}{\sigma^2} \langle Z, \theta\rangle \bigg) \bigg] \bigg\Vert 
    \leq
    \frac{1}{n} \E \bigg[ \norm{Z} \bigg( 1 - \tanh\bigg( \frac{1}{\sigma^2} \langle Z, \theta\rangle \bigg)  \bigg)\bigg].
\end{align}
Next, consider a fixed $\theta$ with $\Vert \theta - \theta^* \Vert \leq \alpha \Vert \theta^* \Vert$ and define a good event $\ccalE_1$ as follows
\begin{align}
    \ccalE_1
    =
    \Big\{ \langle Z, \theta\rangle \geq \frac{n}{4} (1-\alpha) \| \theta^* \|^2 \Big\}.
\end{align}
Therefore, we can write that
\begin{align}
    \| v -  v_0 \|
    &\leq
    \frac{1}{n} \E \bigg[ \norm{Z} \bigg( 1 - \tanh\bigg( \frac{1}{\sigma^2} \langle Z, \theta\rangle \bigg)  \bigg) \cdot \mathbbm{1}\{\ccalE_1\}\bigg] \\
    &\quad +
    \frac{1}{n} \E \bigg[ \norm{Z} \bigg( 1 - \tanh\bigg( \frac{1}{\sigma^2} \langle Z, \theta\rangle \bigg)  \bigg) \cdot \mathbbm{1}\{\ccalE_1^c\}\bigg]
\end{align}
Let us denote each of the two terms above as $T_3$ and $T_4$, that is,
\begin{align} \label{eq: T3 T4}
    T_3
    &=
    \E \bigg[ \norm{Z} \bigg( 1 - \tanh\bigg( \frac{1}{\sigma^2} \langle Z, \theta\rangle \bigg)  \bigg) \cdot \mathbbm{1}\{\ccalE_1\}\bigg],\\
    T_4
    &=
    \E \bigg[ \norm{Z} \bigg( 1 - \tanh\bigg( \frac{1}{\sigma^2} \langle Z, \theta\rangle \bigg)  \bigg) \cdot \mathbbm{1}\{\ccalE_1^c\}\bigg].
\end{align}
We bound $T_3$ by first noting that
\begin{align}
    \norm{Z} \bigg( 1 - \tanh\bigg( \frac{1}{\sigma^2} \langle Z, \theta\rangle \bigg)  \bigg) \cdot \mathbbm{1}\{\ccalE_1\}
    \leq
    2 \norm{Z} \cdot \exp\left( - \frac{n}{2} (1-\alpha) \SNR^2 \right),
\end{align}
where we used the fact that under $\ccalE_g$, we have $\langle Z, \theta\rangle \geq \frac{n}{4} (1-\alpha) \| \theta^* \|^2$ which from the monotonicity of $\tanh(\cdot)$ implies that 
\begin{align}
    \tanh\bigg( \frac{1}{\sigma^2} \langle Z, \theta\rangle \bigg)
    \geq
    \tanh\bigg( \frac{n}{4} (1-\alpha) \frac{\| \theta^* \|^2}{\sigma^2} \bigg)
    \geq
    1
    -
    2 \exp\left( - \frac{n}{2} (1-\alpha) \SNR^2 \right).
\end{align}
In the last inequality above, we used the fact that $\tanh(x) \geq 1 - 2 \exp(-2x)$ for all $x$. Consequently, we have that
\begin{align}
    T_3
    \leq
    2 \E[\Vert Z \Vert] \cdot \exp\left( - \frac{n}{2} (1-\alpha) \SNR^2 \right).
\end{align}
In the following, we upper bound $\E[\Vert Z \Vert]$. We can write that
\begin{align}
    \E[\Vert Z \Vert^2]
    &=
    \E\bigg[ \bigg(\sum_{i=1}^{n} \langle \theta^*, X_i \rangle X_i + \epsilon_i X_i \bigg)^\top \bigg(\sum_{j=1}^{n} \langle \theta^*, X_j \rangle X_j + \epsilon_j X_j \bigg)\bigg] \\
    &=
    \sum_{i=1}^{n} \E\big[ Y_i^2 X_i^\top X_i \big]
    +
    \sum_{1 \leq i \neq j \leq n} \E\big[ Y_i X_i^\top X_j Y_j \big]\\
    &=
    (n^2 -n + d +2) \norm{\theta^*}^2 + d \sigma^2,
\end{align}
which implies that $\E[\Vert Z \Vert] \leq \sqrt{n^2 -n + d +2} \Vert \theta^* \Vert + \sqrt{d} \sigma$ and consequently,
\begin{align}
    T_3
    \leq
    2 \left( n \Vert \theta^* \Vert + \sqrt{d} \sigma \right) \cdot \exp\left( - \frac{n}{2} (1-\alpha) \SNR^2 \right).
\end{align}

Next, we upper bound the term $T_4$ in \eqref{eq: T3 T4} as follows,
\begin{align} \label{eq: integral}
    T_4
    &=
    \E \bigg[ \norm{Z} \bigg( 1 - \tanh\bigg( \frac{1}{\sigma^2} \langle Z, \theta\rangle \bigg)  \bigg) \cdot \mathbbm{1}\{\ccalE_1^c\}\bigg] \\
    & 
    \leq
    2 \E \left[ \norm{Z}  \cdot \mathbbm{1}\{\ccalE_1^c\}\right]\\
    & 
    =
    2 \int_{0}^{\infty} \Prob \left( \norm{Z}  \cdot \mathbbm{1}\{\ccalE_1^c\} \geq \gamma\right) \mathrm{d} \gamma.
\end{align}
For any $\gamma \geq 0$, we use $\epsilon$-net argument and write that
\begin{align} \label{eq: e-net}
    \Prob \left( \| Z \| \cdot \mathbbm{1}\{\ccalE_1^c\} \geq \gamma\right)
    &\leq
    \Prob \left( \| Z \| \geq \gamma\right) \\
    &=
    \Prob \bigg( \max_{\norm{u}=1} \langle Z,u \rangle \geq \gamma\bigg) \\
    &\leq
    \Prob \bigg( \max_{u' \in \ccalN_{1/2}} \langle Z,u' \rangle \geq \gamma/2 \bigg) \\
    &\leq
    5^d \cdot \Prob \left( \langle Z,u' \rangle \geq \gamma/2 \right),
\end{align}
where $\ccalN_{1/2}$ denotes a $1/2$-covering of the unit sphere $\mathbb{S}^d=\{u \in \reals^d \, | \, \Vert u \Vert = 1\}$, known to have cardinality of at most $|\ccalN_{1/2}| \leq 5^d$. Moreover, in deriving the above inequalities, we used the fact that for any unit vector $u$, there exists $u' \in \ccalN_{1/2}$ such that $\Vert u - u' \Vert \leq 1/2$ and therefore,
\begin{align}
    \langle Z,u \rangle
    =
    \langle Z, u - u' \rangle + \langle Z,u' \rangle
    \leq
    \max_{\Vert w \Vert =\frac{1}{2}} \langle Z, w \rangle 
    +
    \max_{u' \in \ccalN_{1/2}} \langle Z,u' \rangle,
\end{align}
which yields that $\max_{\Vert u \Vert=1} \langle Z,u \rangle \leq 2 \max_{u' \in \ccalN_{1/2}} \langle Z,u' \rangle$. Now, consider a fixed unit vector $u' \in \mathbb{S}^d$. We know that $\langle Z,u' \rangle$ is $\SubE(8n \| \theta^* \|^2 + n \sigma^2, 4 \| \theta^* \| + \sigma)$ and $\E[\langle Z,u' \rangle] = n \langle \theta^*,u' \rangle$. Therefore,
\begin{align}
    \Prob \left( \langle Z,u' \rangle \geq \gamma/2 \right)
    &=
    \Prob \left( \langle Z,u' \rangle - n \langle \theta^*,u' \rangle \geq \gamma/2 - n \langle \theta^*,u' \rangle \right) \\
    &\leq
    \Prob \left( \langle Z,u' \rangle - n \langle \theta^*,u' \rangle \geq \gamma/2 - n  \norm{\theta^*} \right) \\
    &\leq
    \Prob \left( | \langle Z,u' \rangle - n \langle \theta^*,u' \rangle | \geq \gamma/2 - n  \norm{\theta^*} \right) \\
    &\leq
    \exp\bigg( - \frac{1}{2}  \min\bigg\{ \frac{(\gamma/2 - n  \norm{\theta^*})^2}{8 n \| \theta^* \|^2 + n \sigma^2}, \frac{\gamma/2 - n  \norm{\theta^*}}{4 \| \theta^* \| + \sigma} \bigg\}\bigg).
\end{align}
for any $\gamma \geq 2n  \norm{\theta^*}$. Now pick
\begin{align}
    \gamma_0 = 2n  \norm{\theta^*} + 2 \frac{8 n \| \theta^* \|^2 + n \sigma^2}{4 \| \theta^* \| + \sigma} + 2 n^2 (\| \theta^* \| + \sigma).
\end{align}
It yields that
\begin{align} \label{eq: integral 1}
    \int_{\gamma_0}^{\infty} \Prob \left( \| Z \|  \cdot \mathbbm{1}\{\ccalE_1^c\} \geq \gamma\right) \mathrm{d} \gamma
    &\leq
    5^d \int_{\gamma_0}^{\infty} \exp\left( - \frac{\gamma/2 - n  \norm{\theta^*}}{4 \| \theta^* \| + \sigma} \right) \mathrm{d} \gamma\\
    &=
    5^d \cdot 2 (4 \| \theta^* \| + \sigma) \cdot \exp\left( - \frac{\gamma_0/2 - n  \norm{\theta^*}}{4 \| \theta^* \| + \sigma} \right)\\
    &\leq
    8 \cdot 5^d (\| \theta^* \| + \sigma) \exp(- n^2/4).
\end{align}
Moreover,
\begin{align} \label{eq: integral 2}
    \int_{0}^{\gamma_0} \Prob \left( \| Z \|  \cdot \mathbbm{1}\{\ccalE_1^c\} \geq \gamma\right) \mathrm{d} \gamma
    &\leq
    \int_{e^{-c_3 n}}^{\gamma_0} \Prob \left( \| Z \|  \cdot \mathbbm{1}\{\ccalE_1^c\} \geq \gamma\right) \mathrm{d} \gamma
    +e^{-c_3 n} \\
    &\leq
    \gamma_0 \Prob \big( \ccalE_1^c \big) +e^{-c_3 n} \\
    &\leq
    (\gamma_0 + 1) \exp(-c_3 n) \\
    &\leq
    \Big(1 + (2n^2 + 10 n)(\| \theta^* \| + \sigma) \Big) \exp(-c_3 n),
\end{align}
where we used Lemma \ref{lemma: events E1-3} to conclude that $ \Prob ( \ccalE_1^c ) \leq \exp(-c_3 n)$ for a constant $c_3$ depending on $\alpha$ and $\SNR$.
Putting \eqref{eq: integral 1} and \eqref{eq: integral 2} in \eqref{eq: integral} yields that
\begin{align}
    T_4
    \leq
    2 \Big(1 + (2n^2 + 10 n)(\| \theta^* \| + \sigma) \Big) \exp(-c_3 n)
    +
    16 \cdot 5^d (\| \theta^* \| + \sigma) \exp(- n^2/4).
\end{align}
Finally, we put everything together and conclude the lemma as follows,
\begin{align}
    \| v -  v_0 \|
    &\leq
    \frac{1}{n} T_3
    +
    \frac{1}{n} T_4 \\
    &\leq
    2 \bigg( \Vert \theta^* \Vert + \frac{\sqrt{d}}{n} \sigma \bigg) \cdot \exp\left( - \frac{n}{2} (1-\alpha) \SNR^2 \right) \\
    &\quad+
    \frac{2}{n} \bigg( \Vert \theta^* \Vert + \frac{\sqrt{d}}{n} \sigma \bigg) \cdot \exp\left( - \frac{n}{2} (1-\alpha) \SNR^2 \right)
    +
    \frac{16}{n} \cdot 5^d (\| \theta^* \| + \sigma) \exp(- n^2/4)
    \\
    &\leq
    (1 + \| \theta^* \| + \sigma) \exp \left(- C_4(\alpha, \SNR ) n \right),
\end{align}
for any $n \geq N_1(\alpha,\SNR)$. Here, we used the assumption that $d \leq n$ which is also implied by the assumptions in Theorem \ref{thm: generalization}, particularly from $n \geq 64 \log m + 104(2d + \log(4/\delta))$.
\subsection{Proof of Lemma \ref{lemma: vHat0 - v0}} \label{proof: lemma vHat0 - v0}

Recall from the notations that
\begin{align}
    \| \vHat_0 -  v_0 \|
    =
    \bigg \Vert \frac{1}{mn} \sum_{j=1}^{m} Z^j - \frac{1}{n} \E \left[ Z \right] \bigg\Vert,
\end{align}
For a fixed $j \in \{ 1,\cdots,m \}$ and unit-norm vector $u$, the inner product $\langle Z^u, u \rangle$ is sub-exponential. More precisely,
\begin{align}
    \frac{1}{n} \langle  Z^j - \E[ Z^j],u \rangle
    \sim
    \SubE \Big(\frac{1}{n} (8\| \theta^* \|^2 + \sigma^2), \frac{1}{n}(4 \| \theta^* \| + \sigma) \Big).
\end{align}
Therefore,
\begin{align}
    \langle \vHat_0 -  v_0, u \rangle
    =
    \frac{1}{m} \sum_{j=1}^{m} 
    \frac{1}{n} \langle Z^j - \E[Z^j],u \rangle
    \sim
    \SubE \Big(\frac{1}{mn} (8\| \theta^* \|^2 + \sigma^2), \frac{1}{mn}(4 \| \theta^* \| + \sigma) \Big).
\end{align}
From concentration of sub-exponential random variables in Theorem \ref{thm: sub-ex concentration}, we can write for every $t \geq 0$ that
\begin{align}
    \Prob \left(  \| \vHat_0 -  v_0 \| \geq t \right)
    &=
    \Prob \bigg(  \max_{\norm{u}=1} \langle \vHat_0 -  v_0, u \rangle \geq t \bigg) \\
    &\leq
    \Prob \bigg(  \max_{u' \in \ccalN_{1/2}} \langle \vHat_0 -  v_0, u' \rangle \geq t/2 \bigg)\\
    &\leq
    5^d \cdot \exp\bigg( - \frac{1}{2}  \min\bigg\{ \frac{mn (t/2)^2}{8 \| \theta^* \|^2 +  \sigma^2}, \frac{mnt/2}{4 \| \theta^* \| + \sigma} \bigg\}\bigg)
\end{align}
where we used a $1/2$-covering argument similar to \eqref{eq: e-net}. Now, assuming $mn \geq 32^2 (2d + \log(1/\delta))$, we pick
\begin{align}
    t = 8 \sqrt{\frac{2d + \log(1/\delta)}{mn}} \sqrt{\| \theta^* \|^2 + \sigma^2},
\end{align}
which implies that $\Prob (  \| \vHat_0 -  v_0 \| \geq t ) \leq \delta$ as desired.

\subsection{Proof of Lemma \ref{lemma: vHat - vHat0}} \label{proof: lemma vHat - vHat0}
We begin the proof by using the definitions of $\vHat$ and $\vHat_0$ and write
\begin{align}
    \sup_{\theta \in \mymathbb{B}(r;\theta^*)} \| \vHat -  \vHat_0 \| 
    &=
    \sup_{\theta \in \mymathbb{B}(r;\theta^*)} \bigg\Vert \frac{1}{mn} \sum_{j=1}^{m} Z^j - \frac{1}{mn} \sum_{j=1}^{m} Z^j \tanh\Big( \frac{1}{\sigma^2} \langle Z^j, \theta\rangle \Big) \bigg\Vert \\
    &\leq
    \sup_{\theta \in \mymathbb{B}(r;\theta^*)} \frac{1}{mn} \sum_{j=1}^{m} \Vert Z^j \Vert \left(1 - \tanh\Big( \frac{1}{\sigma^2} \langle Z^j, \theta\rangle \Big) \right) \\
    &\leq
    \frac{1}{mn} \sum_{j=1}^{m} \Vert Z^j \Vert \cdot \sup_{\theta \in \mymathbb{B}(r;\theta^*)}\left(1 - \tanh\Big( \frac{1}{\sigma^2} \langle Z^j, \theta\rangle \Big) \right).
\end{align}
Next, for each $j=1,\cdots,m$ we have that
\begin{align} \label{eq: inf Z theta}
    \inf_{\theta \in \mymathbb{B}(r;\theta^*)} \langle Z^j, \theta\rangle 
    =
    \langle Z^j, \theta^* \rangle + \inf_{\theta \in \mymathbb{B}(r;\theta^*)} \langle Z^j, \theta - \theta^* \rangle  
    =
    \langle Z^j, \theta^* \rangle + r \cdot \inf_{\| u \| =1} \langle Z^j, u \rangle.
\end{align}
Moreover, from a $1/2$-covering argument similar to \eqref{eq: e-net} we know that
\begin{align}
    \sup_{\| u \| =1} \langle Z^j, u \rangle 
    \leq
    2 \sup_{u' \in \ccalN_{1/2}} \langle Z^j, u' \rangle,
\end{align}
which yields that
\begin{align}
    \Prob \bigg( \sup_{\| u \| =1} \langle Z^j, u \rangle \geq t \bigg)
    \leq
    \Prob \bigg( \sup_{u' \in \ccalN_{1/2}} \langle Z^j, u' \rangle \geq t/2 \bigg)
    \leq
    5^d \cdot \Prob \left( \langle Z^j, u' \rangle \geq t/2 \right).
\end{align}
For a fixed unit-norm $u'$, $\langle Z^j,u' \rangle$ is $\SubE(8n \| \theta^* \|^2 + n \sigma^2, 4 \| \theta^* \| + \sigma)$ with $\E[\langle Z^j,u' \rangle] = n \langle \theta^*,u' \rangle$. Therefore, for any $t\geq 0$, we have that
\begin{align}
    | \langle Z^j,u' \rangle - n \langle \theta^*,u' \rangle |
    \leq
    t/2,
\end{align}
with probability at least 
\begin{align}
    1
    -
    2 \exp\bigg( -  \min\bigg\{ \frac{t^2/8}{8n \| \theta^* \|^2 + n \sigma^2}, \frac{t/4}{4 \| \theta^* \| + \sigma} \bigg\}\bigg).
\end{align}
Taking $t = n ( \| \theta^* \| + \sigma)$ in above yields that with probability at least $1 - \exp(-n/64)$ we have
\begin{align}
    \langle Z^j,u' \rangle 
    \leq
    n \langle \theta^*,u' \rangle + \frac{1}{2} n ( \| \theta^* \| + \sigma)
    \leq
    \frac{3}{2} n \| \theta^* \|
    +
    \frac{1}{2} n \sigma.
\end{align}
Together with the $1/2$-covering argument, it holds that
\begin{align} \label{eq: sup Z u}
    \sup_{\| u \| =1} \langle Z^j, u \rangle 
    \leq
    3 n \| \theta^* \| + n \sigma,
\end{align}
with probability at least $1 - 5^d \cdot \exp(-n/64)$. Moreover, following the above logic for a specific case of $u' = {\theta^*}/{\| \theta^* \|}$, we have with probability $1 - \exp(-n/64)$ that
\begin{align}
    \Big| \langle Z^j, {\theta^*}/{\| \theta^* \|} \rangle - n \| \theta^* \| \Big|
    \leq
    \frac{1}{2} n ( \| \theta^* \| + \sigma),
\end{align}
which implies that
\begin{align} \label{eq: Z theta*}
    \langle Z^j, \theta^* \rangle
    \geq
    \frac{1}{2} n \| \theta^* \|^2
    -
    \frac{1}{2} n \| \theta^* \| \sigma,
\end{align}
with the same probability. Putting \eqref{eq: sup Z u} and \eqref{eq: Z theta*} back in \eqref{eq: inf Z theta} yields that
\begin{align}
    \inf_{\theta \in \mymathbb{B}(r;\theta^*)} \langle Z^j, \theta\rangle 
    =
    \langle Z^j, \theta^* \rangle + r \! \cdot \! \inf_{\| u \| =1} \langle Z^j, u \rangle 
    \geq
    \frac{1}{2} n \| \theta^* \|^2
    -
    \frac{1}{2} n \| \theta^* \| \sigma
    -
    3 r n \| \theta^* \| 
    -
    r n \sigma
    \geq
    \frac{1}{7} n \| \theta^* \|^2,
\end{align}
for $r = \frac{1}{14} \| \theta^* \|$ and $\SNR \geq 4$. Therefore, with probability at least $1 - (5^d + 1)\exp(-n/64)$, we have
\begin{align} \label{eq: sup 1 - tanh}
    \sup_{\theta \in \mymathbb{B}(r;\theta^*)}\bigg(1 - \tanh\Big( \frac{1}{\sigma^2} \langle Z^j, \theta\rangle \Big) \bigg)
    \leq
    2 \exp\Big( - \frac{2}{7} n \cdot \SNR^2 \Big)
    \leq
    2 \exp (- 4 n)
    ,
\end{align}
where we used $\tanh(x) \geq 1 - 2 \exp(-2x)$ for all $x$. Furthermore, we showed above that with probability at least $1 - 5^d \cdot \exp(-n/64)$ and for each $1 \leq j \leq m$, we have
\begin{align} \label{eq: norm Z}
    \| Z^j \|
    =
    \sup_{\| u \| =1} \langle Z^j, u \rangle 
    \leq
    3 n \| \theta^* \| + n \sigma.
\end{align}
Putting \eqref{eq: sup 1 - tanh} and \eqref{eq: norm Z} together, we have
\begin{align}
    \sup_{\theta \in \mymathbb{B}(r;\theta^*)} \| \vHat -  \vHat_0 \| 
    \leq
    \frac{1}{mn} \sum_{j=1}^{m} \Vert Z^j \Vert \cdot \sup_{\theta \in \mymathbb{B}(r;\theta^*)}\left(1 - \tanh\Big( \frac{1}{\sigma^2} \langle Z^j, \theta\rangle \Big) \right)
    \leq
    2 ( 3 \| \theta^* \| + \sigma) \exp(- 4 n),
\end{align}
with probability at least $1 - (m+2) \cdot 5^d \cdot \exp(-n/64)$.

\subsection{Proof of Lemma \ref{lemma: gaussian covariance}}
The proof follows from basic standard Gaussian concentration. We provide the proof here for completeness. Using concentration of sub-exponential RVs and $\epsilon$-net arguments we have that
\begin{align}
    \Prob \left( \big\Vert \SigmaHat - I_d \big\Vert_{\op} \geq t\right)
    \leq
    2 \cdot 9^d \exp\bigg( -\frac{n}{2} \min\bigg\{\bigg(\frac{t}{32}\bigg)^2, \frac{t}{32} \bigg\}\bigg).
\end{align}
Picking $t = 96 \sqrt{(d + \log(2/\delta))/N}$ yields the desired concentration bound. For the second inequality, we note that
\begin{align}
    \big\Vert \SigmaHat^{-1} - I_d \big\Vert_{\op}
    &\leq
    \big\Vert \SigmaHat^{-1} \big\Vert_{\op} \big\Vert \SigmaHat - I_d \big\Vert_{\op}\\
    &\leq
    \left( 1 + \big\Vert \SigmaHat^{-1} - I_d \big\Vert_{\op} \right)
    \big\Vert \SigmaHat - I_d \big\Vert_{\op}\\
    &\leq
    96 \sqrt{\frac{d + \log(2/\delta)}{N}}
    +
    \frac{1}{2} \big\Vert \SigmaHat^{-1} - I_d \big\Vert_{\op},
\end{align}
with probability at least $1 - \delta$. In above, we used the concentration proved in the first part, as well as the assumption $N \geq 192^2 (d + \log(2/\delta))$ to conclude that $\Vert \SigmaHat - I_d \Vert_{\op} \leq 1/2$.

\section{Useful Lemmas}

\subsection{Proof of Proposition \ref{prop: population EM}} \label{proof: prop population EM}

Let us denote $x_{[n]} \coloneqq (x_1\cdots,x_n)$ and $y_{[n]} \coloneqq (y_1\cdots,y_n)$. Then, according to the C-MLR model in \eqref{eq: pop-C-MLR model} with true regression parameters $\theta$, we have $y_i | \xi, x_i \sim \ccalN(\xi \langle x_i, \theta \rangle, \sigma^2)$. Therefore,
\begin{align}
    f_{\theta} (x_{[n]},y_{[n]})
    &=
    \int  f_{\theta} (x_{[n]},y_{[n]},\xi) \mathrm{d} \xi \\
    &=
    \Prob(\xi = -1) f_{\theta} (x_{[n]},y_{[n]} | \xi=-1) + \Prob(\xi = +1) f_{\theta} (x_{[n]},y_{[n]} | \xi=+1)\\
    &=
    \frac{1}{2} \bigg( \frac{1}{2 \pi \sigma^2} \bigg)^{n/2} f(x_{[n]}) 
    \bigg\{ 
    \exp\bigg( - \frac{1}{2\sigma^2} \sum_{i=1}^{n} \big( y_i - \langle x_i, \theta \rangle \big)^2\bigg)
    +
    \exp\bigg( - \frac{1}{2\sigma^2} \sum_{i=1}^{n} \big( y_i + \langle x_i, \theta \rangle \big)^2\bigg)
    \bigg\} \\
    &=
    \frac{1}{2} \bigg( \frac{1}{2 \pi \sigma^2} \bigg)^{n/2} f(x_{[n]}) 
    \left( \exp(s_{-1}({\theta})) + \exp(s_{+1}({\theta})) \right),
\end{align}
where we denote for any $\theta$ and $\xi \in \{-1,+1\}$
\begin{align}
    s_{\xi}({\theta})
    =
    - \frac{1}{2\sigma^2} \sum_{i=1}^{n} \big( y_i - \xi \langle x_i, \theta \rangle \big)^2.
\end{align}
Moreover,
\begin{align}
    f_{\theta} (\xi | x_{[n]},y_{[n]})
    &=
    \frac{f_{\theta} (x_{[n]},y_{[n]},\xi)}{f_{\theta} (x_{[n]},y_{[n]})}
    =
    2 p(\xi) \frac{\exp(s_{\xi}({\theta})))}{\exp(s_{-1}({\theta})) + \exp(s_{+1}({\theta}))},
\end{align}
and
\begin{align} \label{eq: ln 3 terms}
    \log f_{\theta'} (x_{[n]},y_{[n]},\xi)
    =
    s_{\xi}({\theta'}) + \log p(\xi) + \log f(x_{[n]})
    -
    \frac{n}{2} \log(2 \pi).
\end{align}
Let us define
\begin{align}
    \hat{Q}(\theta' | \theta)
    =
    \int_{\xi} f_{\theta} (\xi | x_{[n]},y_{[n]}) \log f_{\theta'} (x_{[n]},y_{[n]},\xi)  \mathrm{d} \xi.
\end{align}
According to \eqref{eq: ln 3 terms}, only the first term in the RHS of \eqref{eq: ln 3 terms} depends on $\theta'$ and therefore, we only keep the term $s_{\xi}({\theta'})$ in computing $\hat{Q}(\theta' | \theta)$ as follows
\begin{align} \label{eq: int term 1}
    \int_{\xi} f_{\theta} (\xi | x_{[n]},y_{[n]}) s_{\xi}({\theta'}) \mathrm{d} \xi
    &=
    \int_{\xi} 2 p(\xi) \frac{\exp(s_{\xi}({\theta})))}{\exp(s_{-1}({\theta})) + \exp(s_{+1}({\theta}))} s_{\xi}({\theta'}) \mathrm{d} \xi \\
    &=
    - \frac{1}{2\sigma^2}  \sum_{i=1}^{n} y_i^2
    -
    \frac{1}{2\sigma^2}  \sum_{i=1}^{n} \langle x_i, \theta' \rangle^2\\
    &\quad
    +
    \frac{1}{\sigma^2} \frac{\exp(s_{+1}({\theta})) - \exp(s_{-1}({\theta}))}{\exp(s_{+1}({\theta})) + \exp(s_{-1}({\theta}))}  \sum_{i=1}^{n} y_i \langle x_i, \theta' \rangle.
\end{align}
Now, the $Q$-function defined in \eqref{eq: pop-EM} can be written as $Q(\theta' | \theta) = \E[\hat{Q}(\theta' | \theta)]$ where the expectation is w.r.t. randomness in $(X_{[n]},Y_{[n]})$ generated by the ground truth distribution governed by $\theta^*$. From \eqref{eq: int term 1} we have that
\begin{align} \label{eq: E 3 terms}
    \E \bigg[\int_{\xi} f_{\theta} (\xi | x_{[n]},y_{[n]}) s_{\xi}({\theta'}) \mathrm{d} \xi \bigg]
    &=
    - \frac{1}{2\sigma^2}  \sum_{i=1}^{n} \E [ y_i^2 ]
    -
    \frac{n}{2\sigma^2} \Vert \theta' \Vert^2 \\
    &\quad
    +
    \frac{1}{\sigma^2} \E \bigg[ \tanh \bigg( \frac{1}{\sigma^2} \sum_{i=1}^{n} y_i \langle x_i, \theta \rangle \bigg)
    \sum_{i=1}^{n} y_i \langle x_i, \theta' \rangle \bigg],
\end{align}
where we used the fact that
\begin{align}
    \frac{\exp(s_{+1}({\theta})) - \exp(s_{-1}({\theta}))}{\exp(s_{+1}({\theta})) + \exp(s_{-1}({\theta}))} 
    =
    \tanh \bigg( \frac{1}{\sigma^2} \sum_{i=1}^{n} y_i \langle x_i, \theta \rangle \bigg).
\end{align}
Note that the first term in RHS of \eqref{eq: E 3 terms} dose not depend on $\theta'$, therefore,
\begin{align}
    \gr Q(\theta' | \theta)
    =
    - \frac{n}{\sigma^2} \theta' 
    +
    \frac{1}{\sigma^2} \E\bigg[ \sum_{i=1}^{n} X_i Y_i \tanh\bigg( \frac{1}{\sigma^2} \sum_{i=1}^{n} \langle X_i, \theta\rangle Y_i \bigg) \bigg].
\end{align}
Putting $ \gr Q(\theta' | \theta) = 0$ yields that
\begin{align}
    M(\theta)
    &\coloneqq
    \argmax_{\theta'} Q(\theta' | \theta)\\
    &
    =
    \frac{1}{n} \E\bigg[ \sum_{i=1}^{n} X_i Y_i \tanh\bigg( \frac{1}{\sigma^2} \sum_{i=1}^{n} \langle X_i, \theta\rangle Y_i \bigg) \bigg] \\
    &=
    \E\bigg[ X_1 Y_1 \tanh\bigg( \frac{1}{\sigma^2} \sum_{i=1}^{n} \langle X_i, \theta\rangle Y_i \bigg) \bigg].
\end{align}

\subsection{Proof of Proposition \ref{prop: empirical EM}} \label{proof: prop populatiempiricalon EM}

Using \eqref{eq: int term 1} and definition of $Q_m$-function in \eqref{eq: emp Q} we have that
\begin{align} 
    \frac{1}{m} \sum_{j=1}^{m} \int_{\xi} f_{\theta} (\xi | x_{[n]}^j,y_{[n]}^j) s_{\xi}({\theta'}) \mathrm{d} \xi
    &=
    - \frac{1}{2m\sigma^2}  \sum_{j=1}^{m} \sum_{i=1}^{n} {y_i^j}^2
    -
    \frac{1}{2m\sigma^2}  \sum_{j=1}^{m} \sum_{i=1}^{n} \langle x_i^j, \theta' \rangle^2 \\
    &\quad+
    \frac{1}{m\sigma^2} \sum_{j=1}^{m} \tanh \bigg( \frac{1}{\sigma^2} \sum_{i=1}^{n} y_i^j \langle x_i^j, \theta \rangle \bigg)  \sum_{i=1}^{n} y_i^j \langle x_i^j, \theta' \rangle.
\end{align}
Therefore,
\begin{align}
    \gr Q_m(\theta' | \theta)
    =
    - \frac{1}{m\sigma^2}\sum_{j=1}^{m} \sum_{i=1}^{n} x_i^j {x_i^j}^{\top} \theta' 
    +
    \frac{1}{m\sigma^2} \sum_{j=1}^{m} \tanh \bigg( \frac{1}{\sigma^2} \sum_{i=1}^{n} y_i^j \langle x_i^j, \theta \rangle \bigg)  \sum_{i=1}^{n} y_i^j  x_i^j,
\end{align}
and finally putting $\gr Q_m(\theta' | \theta) = 0$ yields the desired result.

\begin{definition}[Sub-exponential RV] \label{def: sub-exponential}
A random variable $X$ is said to be sub-exponential with parameter $(\tau^2,b)$ if
\begin{align}
    \E \left[\exp \left(\lambda (X - \mu_X) \right) \right] 
    \leq
    \exp\bigg(\frac{\lambda^2 \tau^2}{2}\bigg), 
    \quad
    \forall |\lambda| \leq \frac{1}{b}.
\end{align}
We denote such RV by $\SubE(\tau^2,b)$.
\end{definition}

\begin{theorem} \label{thm: sub-ex concentration}
Let $X_i \sim \SubE(\tau^2, b)$ be iid sub-exponentials. Then,
\begin{align}
    \Prob\bigg( \bigg| \frac{1}{n} \sum_{i=1}^{n} X_i - \mu_X \bigg| \geq t \bigg)
    \leq
    2 \exp\bigg( -  \min\bigg\{ \frac{nt^2}{2\tau^2}, \frac{nt}{2b} \bigg\}\bigg)
\end{align}
\end{theorem}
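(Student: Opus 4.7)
The plan is to follow the standard Chernoff-Cramér recipe: exponentiate, use independence to turn the moment generating function of the sum into a product, invoke the sub-exponential MGF bound from Definition~\ref{def: sub-exponential}, and then optimize the free parameter $\lambda$ subject to the validity constraint $|\lambda|\le 1/b$.

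First I would handle the upper tail. By Markov applied to $\exp(\lambda(\sum_i X_i - n\mu_X))$ for any $\lambda\in[0,1/b]$, I get
\begin{align}
\Prob\Big(\tfrac{1}{n}\sum_{i=1}^n X_i - \mu_X \ge t\Big)
\le e^{-\lambda n t}\, \E\Big[\exp\Big(\lambda\sum_{i=1}^n(X_i-\mu_X)\Big)\Big]
= e^{-\lambda n t}\prod_{i=1}^n \E[e^{\lambda(X_i-\mu_X)}]
\le \exp\!\Big(\tfrac{n\lambda^2\tau^2}{2} - \lambda n t\Big),
\end{align}
where the product step uses independence of the $X_i$ and the last inequality uses the sub-exponential MGF bound from Definition~\ref{def: sub-exponential}, valid because $|\lambda|\le 1/b$.

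Next I would optimize the quadratic exponent $\tfrac{n\lambda^2\tau^2}{2} - \lambda n t$ over $\lambda\in[0,1/b]$. The unconstrained minimizer is $\lambda^\star = t/\tau^2$, so I would split into two regimes. In the regime $t \le \tau^2/b$ the minimizer $\lambda^\star$ lies in the feasible interval, and plugging it in yields the bound $\exp(-nt^2/(2\tau^2))$. In the regime $t > \tau^2/b$ the unconstrained minimizer is infeasible, so I take $\lambda = 1/b$ and obtain an exponent of $\tfrac{n\tau^2}{2b^2} - \tfrac{nt}{b}$; since $\tau^2/b < t$ gives $\tfrac{n\tau^2}{2b^2} \le \tfrac{nt}{2b}$, this is at most $\exp(-nt/(2b))$. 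Combining the two cases yields the one-sided bound $\exp(-\min\{nt^2/(2\tau^2), nt/(2b)\})$.

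The lower tail is symmetric: applying the same argument with $-X_i$ (which is again $\SubE(\tau^2,b)$ by taking $\lambda\mapsto -\lambda$ in Definition~\ref{def: sub-exponential}) gives the same bound for $\Prob(\tfrac{1}{n}\sum_i X_i - \mu_X \le -t)$. A union bound over the two tails then yields the factor of $2$ in the statement. There is no real obstacle here beyond bookkeeping in the two-regime optimization; the only minor subtlety is verifying that the constrained optimum in the large-$t$ regime indeed loses at worst a factor of $2$ in the exponent, which is exactly what the $\min$ in the final bound records.
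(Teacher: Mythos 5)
Your proof is correct: the Chernoff--Cram\'er argument with the two-regime optimization of $\lambda$ over $[0,1/b]$, followed by the symmetric lower-tail bound and a union bound, is exactly the canonical derivation of this Bernstein-type inequality, and all the case bookkeeping (in particular that taking $\lambda=1/b$ when $t>\tau^2/b$ still yields an exponent of at least $nt/(2b)$) checks out. The paper itself states this theorem without proof, treating it as a standard fact, so there is nothing to compare against; your argument is the standard one and fills that gap correctly.
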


\begin{lemma} \label{lemma: tanh slope}
For any $x_1, x_2 \geq 0$, we have
\begin{align}
    \frac{\tanh(x_2) - \tanh(x_1)}{x_2 - x_1}
    \leq
    \max\{1-\tanh^2(x_1), 1 - \tanh^2(x_2)\}.
\end{align}
\end{lemma}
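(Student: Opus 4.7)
The inequality is a clean Mean Value Theorem argument combined with the monotonicity of $\operatorname{sech}^2$ on the positive axis, so I would only need a short plan.

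First, I would handle the trivial case $x_1 = x_2$ separately by interpreting the left-hand side as the derivative $\tanh'(x_1) = 1 - \tanh^2(x_1)$, which is clearly dominated by the right-hand side. For $x_1 \neq x_2$, assume without loss of generality that $0 \leq x_1 < x_2$ (the inequality is symmetric under swapping $x_1 \leftrightarrow x_2$, and the right-hand side is a max over both variables).

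Next I would apply the Mean Value Theorem to $\tanh$ on the interval $[x_1, x_2]$: there exists $c \in (x_1, x_2)$ such that
\begin{align}
    \frac{\tanh(x_2) - \tanh(x_1)}{x_2 - x_1}
    =
    \tanh'(c)
    =
    1 - \tanh^2(c).
\end{align}
Since $c > x_1 \geq 0$, the key step is to bound $1 - \tanh^2(c)$ by using that the function $\phi(x) \coloneqq 1 - \tanh^2(x)$ is strictly decreasing on $[0,\infty)$. This monotonicity follows from the identity $\phi'(x) = -2 \tanh(x) \operatorname{sech}^2(x)$, which is nonpositive for $x \geq 0$ because $\tanh(x) \geq 0$ there. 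Hence $1 - \tanh^2(c) \leq 1 - \tanh^2(x_1)$.

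Finally, I would conclude by observing that under the ordering $x_1 \leq x_2$, the same monotonicity of $\phi$ gives $1 - \tanh^2(x_1) \geq 1 - \tanh^2(x_2)$, so $1 - \tanh^2(x_1) = \max\{1 - \tanh^2(x_1),\ 1 - \tanh^2(x_2)\}$, yielding the claim. There is no genuine obstacle here: the only thing worth flagging is to make sure the argument uses $x_1, x_2 \geq 0$ (so that $c \geq 0$ and $\phi$ is monotone on the relevant interval); without this nonnegativity the estimate would fail, since $\phi$ attains its maximum at $0$ and the slope could exceed $\phi$ evaluated at both endpoints if they straddled the origin.
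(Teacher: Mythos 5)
Your proof is correct and rests on the same key fact as the paper's — that $\tanh'(x) = 1-\tanh^2(x)$ is decreasing on $[0,\infty)$ — with the paper invoking it via the first-order concavity (tangent-line) inequality at $x_1$ and you invoking it via the Mean Value Theorem plus monotonicity of the derivative; these are essentially the same argument. Your explicit handling of the degenerate case $x_1 = x_2$ and the remark on why nonnegativity is needed are minor but welcome additions.
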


\begin{proof}
Assume that $x_2 \geq x_1 \geq 0$. Since the function $f(x) \coloneqq \tanh(x)$ is concave in $[0,+\infty)$, we can write
\begin{align}
    f(x_2) \leq f(x_1) + (x_2 - x_1) f'(x_1),
\end{align}
which yields that
\begin{align}
    \frac{\tanh(x_2) - \tanh(x_1)}{x_2 - x_1}
    \leq
    1-\tanh^2(x_1).
\end{align}
Similar argument holds for $x_1 \geq x_2 \geq 0$, i.e.,
\begin{align}
    \frac{\tanh(x_1) - \tanh(x_2)}{x_1 - x_2}
    \leq
    1-\tanh^2(x_2).
\end{align}
\end{proof}

\end{document}